\DeclareMathOperator*{\argmin}{arg\,min}
\newcommand*\diff{\mathop{}\!\mathrm{d}}
\newcommand{\1}{\mathbbm 1}
\newcommand{\R}{{{\mathbb R}}}
\newcommand{\distgrad}{\nabla \rho_{\theta}}
\newcommand{\distpartialgrad}{\partial_s\rho_{\theta}}
\newcommand{\nystromop}{Q_M}
\newcommand{\projop}{\mathcal{G}_M}
\newcommand{\httpsurl}[1]{\href{https://#1}{\nolinkurl{#1}}}
\newtheorem{theorem}{Theorem}
\newtheorem{lemma}[theorem]{Lemma}
\newtheorem{proposition}[theorem]{Proposition}
\newtheorem{definition}{Definition}
\newtheorem{remark}{Remark}
\newtheorem{assumption}{Assumption}
\crefname{appsec}{Appendix}{Appendices}
\newlist{assumplist}{enumerate}{1}
\setlist[assumplist]{label=(\textbf{\Alph*})}
\Crefname{assumplisti}{Assumption}{Assumptions}
\newlist{assumplist2}{enumerate}{1}
\setlist[assumplist2]{label=(\textbf{\Roman*})}
\Crefname{assumplist2i}{Assumption}{Assumptions}
\DeclarePairedDelimiter\floor{\lfloor}{\rfloor}
\title{Kernelized Wasserstein Natural Gradient}
\author{Michael Arbel, Arthur Gretton\\
Gatsby Computational Neuroscience Unit\\
University College London\\
\texttt{\{michael.n.arbel,arthur.gretton\}@gmail.com}
\vspace*{-5mm}
\AND
Wuchen Li\\
University of California, Los Angeles\\
\texttt{wcli@math.ucla.edu}
 \vspace*{-5mm}
\AND
Guido Mont\'{u}far\\
University of California, Los Angeles, and 
Max Planck Institute for Mathematics in the Sciences\\
\texttt{montufar@mis.mpg.de}
\vspace*{-5mm}
}
\begin{document}
\maketitle

\begin{abstract}
Many machine learning problems can be expressed as the optimization of some cost functional over a parametric family of probability distributions. It is often beneficial to solve such optimization problems using natural gradient methods. These methods are invariant to the parametrization of the family, and thus can yield more effective optimization. Unfortunately, computing the natural gradient is challenging as it requires inverting a high dimensional matrix at each iteration. We propose a general framework to approximate the natural gradient for the Wasserstein metric, by leveraging a dual formulation of the metric restricted to a Reproducing Kernel Hilbert Space. Our approach leads to an estimator for gradient direction that can trade-off accuracy and computational cost, with theoretical guarantees. We verify its accuracy on simple examples, and show the advantage of using such an estimator in classification tasks on \texttt{Cifar10} and \texttt{Cifar100} empirically. 
\end{abstract}
\section{Introduction}
The success of machine learning algorithms relies on the quality of an underlying optimization method. Many of the current state-of-the-art methods rely on variants of Stochastic Gradient Descent (SGD) such as AdaGrad \citep{Duchi:2011}, RMSProp \citep{Hinton:2012}, and Adam \citep{Kingma:2014}. 
While generally effective, the performance of such methods remains sensitive to the curvature of the optimization objective. When the Hessian matrix of the objective at the optimum has a large condition number, the problem is said to have a pathological curvature \citep{Martens:2010,Sutskever:2013}. In this case, the first-order optimization methods tend to have poor performance. 
Using adaptive step sizes can help when the principal directions of curvature are aligned with the coordinates of the vector parameters. Otherwise, an additional rotation of the basis is needed to achieve this alignment.  
One strategy is to find an alternative parametrization of the same model that has a better-behaved curvature and is thus easier to optimize with standard first-order optimization methods. 
Designing good network architectures \citep{Simonyan:2014,He:2015} along with normalization techniques \citep{LeCun:2012,Ioffe:2015,Salimans:2016} is often critical for the success of such optimization methods. 

The natural gradient method \citep{Amari:1998} takes a related but different perspective. Rather than re-parametrizing the model, the natural gradient method tries to make the optimizer itself invariant to re-parameterizations by directly operating on the manifold of probability distributions. 
This requires endowing the parameter space with a suitable notion of proximity %
 formalized by a metric. An important metric in this context is the Fisher information metric \citep{Fisher:1922,Rao:1992}, which induces the Fisher-Rao natural gradient \citep{Amari:1985}. 
 Another important metric in probability space is the Wasserstein metric \citep{Villani:2009,Otto:2001}, which induces 
 the Wasserstein natural gradient \citep{Li:2018a,Li:2018b,Li:2018c}; see similar formulations in Gaussian families \citep{Malago:2018,Modin:2017}. 
In spite of their numerous theoretical advantages, applying natural gradient methods is challenging in practice. Indeed, each parameter update requires inverting the metric tensor. This becomes infeasible for current deep learning models, which typically have millions of parameters. 
This has motivated research into finding efficient algorithms to estimate the natural gradient \citep{Martens:2015,Grosse:2016a,George:2018a,Heskes:2000,Bernacchia:2018}. Such algorithms often address the case of the Fisher metric and either exploit a particular structure of the parametric family or rely on a low rank decomposition of the information matrix. 
Recently, \cite{Li:2019a} proposed to estimate the metric based on a dual formulation and used this estimate in a proximal method. While this avoids explicitly computing the natural gradient, the proximal method also introduces an additional optimization problem to be solved at each update of the model's parameters. The quality of the solver will thus depend on the accuracy of this additional optimization. 

In this paper, we use the dual formulation of the metric to directly obtain a closed form expression of the natural gradient as a solution to a convex functional optimization problem. 
We focus on the Wasserstein metric as it has the advantage of being well defined even when the model doesn't admit a density. The expression remains valid for general metrics including the Fisher-Rao metric. 
We leverage recent work on Kernel methods \citep{Sriperumbudur:2013,Arbel:2017,Sutherland:2017,Mroueh:2019} to compute an estimate of the natural gradient by restricting the functional space appearing in the dual formulation to a Reproducing Kernel Hilbert Space. 
We demonstrate empirically the accuracy of our estimator on toy examples, and show how it can be effectively used to approximate the trajectory of the natural gradient descent algorithm. We also analyze the effect of the dimensionality of the model on the accuracy of the proposed estimator. 
Finally, we illustrate the benefits of our proposed estimator for solving classification problems when the model has an ill-conditioned parametrization. 

The paper is organized as follows. In Section~\ref{subsec:natural_gradient_flows}, after a brief description of natural gradients, we discuss Legendre duality of metrics, and provide details on the Wasserstein natural gradient. 
In Section~\ref{sec:kerenelized_estimators}, we present our kernel estimator of the natural gradient. 
In Section~\ref{sec:Experiments} we present experiments to evaluate the accuracy of the proposed estimator and demonstrate its effectiveness in supervised learning tasks. 
\section{Natural Gradient Descent}
\label{subsec:natural_gradient_flows}
We first briefly recall the natural gradient descent method in \cref{subsec:general_formulation}, and its relation to metrics on probability distribution spaces in \cref{subsec:natural_gradient_flows_F_W}.
We next present Legendre dual formulations for metrics in \cref{subsec:Legendre_duality} where we highlight the Fisher-Rao and Wasserstein metrics as important examples. 
\subsection{General Formulation}
\label{subsec:general_formulation}
It is often possible to formulate learning problems as the minimization of some cost functional  $\rho\mapsto\mathcal{F}(\rho)$ over probability distributions $\rho$ from a parametric model %
$\mathcal{P}_{\Theta}$. 
The set $\mathcal{P}_{\Theta}$ contains probability distributions defined on an open %
sample space $\Omega\subset\mathbb{R}^{d}$ and parametrized by some vector $\theta\in\Theta$, where $\Theta$ is an open subset of $\mathbb{R}^{q}$. The learning problem can thus be formalized as finding an optimal value $\theta^{*}$ that locally minimizes a loss function $\mathcal{L}(\theta) := \mathcal{F}(\rho_{\theta})$ defined over the parameter space $\Theta$. 
One convenient way to solve this problem approximately is by gradient descent, which uses the \textit{Euclidean gradient} of $\mathcal{L}$ w.r.t.\ the parameter vector $\theta$ to produce a sequence of updates $\theta_{t}$ according to the following rule: 
\begin{equation}
\theta_{t+1}=\theta_{t}-\gamma_t\nabla \mathcal{\mathcal{L}}(\theta_{t}).\label{eq:gradient_descent}
\end{equation}
Here the step-size $\gamma_t$ is a positive real number. 
The \textit{Euclidean gradient} can be viewed as the direction in parameter space that leads to the highest decrease of some \textit{linear model} $\mathcal{M}_t$ of the cost function $\mathcal{L}$ per unit of change of the parameter. More precisely, the \textit{Euclidean gradient} is obtained as the solution of the optimization problem:
\begin{align}\label{eq:proximal_formulation_euclidean_gradient}
	\nabla \mathcal{\mathcal{L}}(\theta_{t}) = - \argmin_{u\in \R^{q} }~\mathcal{M}_t(u) + \frac{1}{2} \Vert u \Vert^2.
\end{align}
The linear model $\mathcal{M}_t$ is an approximation of the cost function $\mathcal{L}$ in the neighborhood of $\theta_t$ and is simply obtained by a first order expansion: $ \mathcal{M}_t(u) = \mathcal{L}(\theta_t) + \nabla \mathcal{L}(\theta_t)^{\top}u$. The quadratic term  $\Vert u \Vert^2 $ penalizes the change in the parameter and ensures that the solution remains in the neighborhood where the linear model is still a good approximation of the cost function.
This particular choice of quadratic term is what defines the \textit{Euclidean gradient} descent algorithm, which can often be efficiently implemented for neural network models using \textit{back-propagation}.
The performance of this algorithm is highly dependent on the parametrization of the model $\mathcal{P}_{\Theta}$, however \citep{Martens:2010,Sutskever:2013}.
To obtain an algorithm that is robust to parametrization, one can take advantage of the structure of the cost function $\mathcal{L}(\theta)$ which is obtained as the composition of the functional $\mathcal{F}$ and the model $\theta \mapsto \rho_{\theta}$ and define a \textit{ generalized natural gradient} \citep{Amari:2010}. 
We first provide a conceptual description of the general approach to obtain such gradient. The starting point is to choose a divergence $D$ between probability distributions and use it as a new penalization term:
\begin{align}\label{eq:conceptual_definition}
   -\argmin_{u\in \R^{q}} ~ \mathcal{M}_t(u) + \frac{1}{2} D(\rho_{\theta_t},\rho_{\theta_t+u}).
   \end{align}
Here, changes in the model are penalized directly in probability space rather than parameter space as in \cref{eq:proximal_formulation_euclidean_gradient}.
In the limit of small $u$, the penalization term can be replaced by a quadratic term $u^{\top} G_{D}(\theta) u$ where $G_D(\theta)$ contains second order information about the model as measured by $D$.
This leads to the following expression for the \textit{generalized natural gradient} $\nabla^{D} \mathcal{L}(\theta_t)$ where the dependence in $D$ is made explicit:
\begin{align}\label{eq:proximal_formulation_natural_gradient}
	\nabla^{D} \mathcal{L}(\theta_t) := -\argmin_{u\in \R^{q}} ~ \mathcal{M}_t(u) + \frac{1}{2} u^{\top} G_{D}(\theta_t)u.
  \end{align}
From \cref{eq:proximal_formulation_natural_gradient}, it is possible to express the \textit{generalized natural gradient} by means of the \textit{Euclidean gradient}:  $\nabla^{D}\mathcal{L}(\theta_t) = G_D(\theta_t)^{-1}\nabla \mathcal{L}(\theta_t)$. The parameter updates are then obtained by the new update rule:
\begin{equation}
\theta_{t+1}=\theta_{t}-\gamma_t G_D(\theta_t)^{-1}\nabla \mathcal{L}(\theta_t).\label{eq:natural_gradient_descent}
\end{equation}

Equation \cref{eq:natural_gradient_descent} leads to a descent algorithm which is invariant to parametrization in the continuous-time limit: 
\begin{proposition}\label{prop:invariance}
	Let $\Psi$ be an invertible and smoothly differentiable re-parametrization $\psi =\Psi(\theta)$ and denote by $\mathcal{\bar{L}}(\psi) :=  \mathcal{L}(\Psi^{-1}(\psi))  $. Consider the continuous-time natural gradient flows:
	\begin{align}\label{eq:continuous_time}
		\dot{\theta_s} =- \nabla_{\theta}^{D}\mathcal{L}(\theta_s),\qquad
		\dot{\psi_s} =- \nabla_{\psi}^{D}\mathcal{\bar{L}}(\psi_s), \qquad \psi_0 = \Psi(\theta_0)
	\end{align}
Then $\psi_s$ and $\theta_s$ are related by the equation $\psi_s = \Psi(\theta_s)$ at all times $s\geq 0$.
\end{proposition}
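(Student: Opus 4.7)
The plan is to reduce the invariance claim to showing that both flows satisfy the same ODE in the $\psi$-coordinate system, so that uniqueness of solutions gives the result. Concretely, I would set $\tilde\psi_s \eqdef \Psi(\theta_s)$, apply the chain rule to obtain $\frac{d}{ds}\tilde\psi_s = J_\Psi(\theta_s)\, \dot\theta_s = -J_\Psi(\theta_s)\, \nabla^D_\theta \mathcal{L}(\theta_s)$, where $J_\Psi$ denotes the Jacobian of $\Psi$, and then try to identify this right-hand side with $-\nabla^D_\psi \bar{\mathcal{L}}(\tilde\psi_s)$. This step reduces the whole proposition to establishing the covariance identity $\nabla^D_\psi \bar{\mathcal{L}}(\Psi(\theta)) = J_\Psi(\theta)\, \nabla^D_\theta \mathcal{L}(\theta)$; combined with the common initial condition $\tilde\psi_0 = \Psi(\theta_0) = \psi_0$ and uniqueness of solutions to the natural gradient flow, this forces $\tilde\psi_s = \psi_s$ for all $s \geq 0$.

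To prove the covariance identity, I would track how each of the two factors in $\nabla^D = G_D^{-1}\nabla$ transforms under $\Psi$. The Euclidean gradient transforms by the chain rule applied to $\bar{\mathcal{L}} = \mathcal{L}\circ \Psi^{-1}$, giving $\nabla \bar{\mathcal{L}}(\psi) = J_\Psi(\theta)^{-\top} \nabla \mathcal{L}(\theta)$ with $\theta = \Psi^{-1}(\psi)$. For the metric tensor I would use the characterization of $G_D(\theta)$ as the coefficient in the quadratic approximation $D(\rho_\theta,\rho_{\theta+u}) = u^\top G_D(\theta)\, u + o(\|u\|^2)$ that is implicit in the passage from \cref{eq:conceptual_definition} to \cref{eq:proximal_formulation_natural_gradient}. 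Substituting $u = \Psi^{-1}(\psi+v) - \theta = J_\Psi(\theta)^{-1} v + o(\|v\|)$ into this expansion yields the standard covariant $(0,2)$-tensor transformation law $\bar G_D(\psi) = J_\Psi(\theta)^{-\top} G_D(\theta)\, J_\Psi(\theta)^{-1}$. Inverting and combining with the gradient rule collapses to $\nabla^D_\psi \bar{\mathcal{L}}(\psi) = J_\Psi(\theta)\, G_D(\theta)^{-1} \nabla \mathcal{L}(\theta) = J_\Psi(\theta)\, \nabla^D_\theta \mathcal{L}(\theta)$ in a single line.

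The step I expect to be the main obstacle is making this Taylor-expansion argument for $\bar G_D$ fully rigorous: it tacitly requires $D$ to be $C^2$ near the diagonal and the model $\theta \mapsto \rho_\theta$ to be smooth enough that the $o(\|u\|^2)$ remainder transfers cleanly through the change of variables, conditions the excerpt leaves implicit. If the direct Taylor route becomes awkward, an alternative I would fall back on is to work with the proximal definition \cref{eq:conceptual_definition} itself: changing variables $v = \Psi(\theta_t + u) - \psi_t$ inside the $\argmin$ leaves the divergence term invariant by construction and only introduces a Jacobian factor in the linear-model term, so passing to the vanishing-step-size limit yields the covariance of $\nabla^D$ without ever expanding $G_D$ explicitly. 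Smoothness of $\Psi$ and of the vector field $\theta \mapsto \nabla^D_\theta \mathcal{L}(\theta)$ then lets me invoke Picard--Lindel\"of to conclude the uniqueness step cleanly.
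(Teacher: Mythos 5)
Your overall skeleton is the same as the paper's: set $\widetilde{\psi}_s := \Psi(\theta_s)$, differentiate in time, reduce everything to the covariance identity $\nabla^{D}_{\psi}\bar{\mathcal{L}}(\Psi(\theta)) = \nabla_{\theta}\Psi(\theta)\,\nabla^{D}_{\theta}\mathcal{L}(\theta)$, and conclude from the common initial condition and uniqueness of ODE solutions. The divergence, and the genuine gap, is in how you justify the transformation law for the metric. You derive $\bar{G}_D(\psi) = \nabla_{\theta}\Psi(\theta)^{-\top} G_D(\theta)\,\nabla_{\theta}\Psi(\theta)^{-1}$ from the quadratic characterization $D(\rho_{\theta},\rho_{\theta+u}) = u^{\top}G_D(\theta)u + o(\Vert u \Vert^2)$. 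For $D=W$ this characterization is not something you can take as given in the present setting: the paper defines $G_W$ directly as the pull-back of the Wasserstein-2 metric tensor (\cref{def:wasserestin_matrix}), precisely because the expansion route is treated as purely conceptual; \cref{appendix:sobolev_distance} shows that its natural intermediate step, the negative Sobolev linearization of $W_2$, can be infinite for implicit models without densities. A direct second-order expansion of $W_2^2(\rho_{\theta},\rho_{\theta+u})$ whose quadratic term is exactly the $G_W$ of \cref{def:wasserestin_matrix} is nowhere established here and would itself require proof under explicit regularity assumptions, so your key step rests on an unproved premise. Your fallback via \cref{eq:conceptual_definition} has the same defect: a change of variables inside the proximal problem shows covariance of whatever object that conceptual recipe defines, not of the natural gradient built from the paper's $G_F$ and $G_W$, which is what the proposition asserts.

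The paper closes exactly this step by a different route: \cref{lemm:change_metric} obtains the transformation laws for $\bar{G}_F$ and $\bar{G}_W$ from the Legendre dual formulations of \cref{prop:legendre_duality,prop:tractable_grad}, combined with \cref{lemm:reparametrization_gradient}, which shows that the distributional gradient transforms as $\nabla_{\psi}\bar{\rho}_{\psi} = \nabla_{\theta}\rho_{\theta}\,\nabla_{\psi}\Psi^{-1}(\psi)$; substituting this into the variational expression for $\tfrac{1}{2}v^{\top}\bar{G}(\psi)v$ yields the law with no Taylor expansion of the divergence, and it is valid for implicit models. If you replace your expansion argument by this dual-formulation argument (or restrict to cases where you can actually prove the quadratic expansion), the remainder of your proof, the chain rule for the Euclidean gradient, inversion of the metric, and the uniqueness argument, goes through as written and coincides with the paper's.
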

This result implies that an ill-conditioned parametrization of the model has little effect on the optimization when \cref{eq:natural_gradient_descent} is used.
It is a consequence of the transformation properties of the natural gradient by change of parametrization:
 $\nabla_{\psi}^{D}\mathcal{\bar{L}}(\psi) = \nabla_{\theta} \Psi(\theta) \nabla_{\theta}^D\mathcal{L}(\theta)$ which holds in general for any covariant gradient. We provide a proof of \cref{prop:invariance} in \cref{proof:prop:invariance} in the particular the case when $D$ is either Kullback-Leibler divergence $F$, or the squared Wasserstein-2 distance $W$ using notions introduced later in \cref{subsec:Legendre_duality}  
 and refer to \cite{Ollivier2011InformationGeometricOA} for a detailed discussion.
 
The approach based on \cref{eq:conceptual_definition} for defining the generalized natural gradient is purely conceptual and can be formalized using the notion of metric tensor from differential geometry which allows for more generality. In \cref{subsec:natural_gradient_flows_F_W}, we provide such formal definition in the case when $D$ is either the Kullback-Leibler divergence $F$, or the squared Wasserstein-2 distance $W$.
\subsection{Information matrix via differential geometry} \label{subsec:natural_gradient_flows_F_W}
When $D$ is the Kullback-Leibler divergence or relative entropy $F$, then \cref{eq:proximal_formulation_natural_gradient} defines the \textit{Fisher-Rao natural gradient} $\nabla^{F} \mathcal{L}(\theta)$  \citep{Amari:1985} and  $G_F(\theta)$ is called the \textit{Fisher information matrix}.
$G_F(\theta)$ is well defined when the probability distributions in $\mathcal{P}_{\Theta}$ all have positive densities, and when %
some additional differentiability and integrability assumptions on $\rho_{\theta}$ are satisfied.
In fact, it has an interpretation in Riemannian geometry as the pull-back of a metric tensor $g^F$ defined over the set of probability distributions with positive densities and known as the \textit{Fisher-Rao metric} (see \cref{def:fisher_rao_metric} in \cref{subsec:fisher_statistical_manifold}; see also \citealt[][]{Holbrook:2017}):
\begin{definition}[Fisher information matrix]\label{def:fisher_rao_matrix}
Assume $\theta \mapsto \rho_{\theta}(x)$ is differentiable for all $x$ on $\Omega$ and that $ \int \frac{\Vert \nabla\rho_{\theta}(x)\Vert^{2}}{\rho_{\theta}(x)} \diff x < \infty $. Then the \textit{Fisher information matrix} is defined as the pull-back of the Fisher-Rao metric $g^F$:%
	\begin{align}
	G_F(\theta)_{ij}=g^F_{\rho_{\theta}}(\partial_{i}\rho_{\theta} , \partial_{j}\rho_{\theta} ):=\int f_i(x) f_j(x)\rho_\theta(x)dx,
\end{align}
where the functions $f_i$ on $\Omega$ are given by: $f_i = \frac{\partial_i \rho_{\theta}}{\rho_{\theta}}$. 
\end{definition}
\cref{def:fisher_rao_matrix} directly introduces $G_F$ using the \textit{Fisher-Rao metric} tensor which captures the infinitesimal behavior of the KL.
This approach can be extended to any metric tensor $g$ defined on a suitable space of probability distributions containing $\mathcal{P}_{\Theta}$. In particular, when $D$ is the Wasserstein-2, the \textit{Wasserstein information matrix} is obtained directly by means of the Wasserstein-2 metric tensor  $g^W$ \citep{Otto:2000,Lafferty:2008} as proposed in \cite{Li:2018a, Chen:2018a}:\begin{definition}[Wasserstein information matrix]\label{def:wasserestin_matrix}
The \textit{Wasserstein information matrix} (WIM) is defined as the pull-back of the Wasserstein 2 metric $g^W$:
\begin{align}
G_W(\theta)_{ij}= g^W_{\rho_{\theta}}(\partial_i \rho_{\theta}, \partial_j \rho_{\theta} ):=\int \phi_i(x)^{\top} \phi_j(x) \diff \rho_\theta(x), 
\end{align}
where $\phi_i$ are vector valued functions on $\Omega$ that are solutions to the partial differential equations with Neumann boundary condition:
\begin{align}
\partial_i\rho_\theta=-div(\rho_\theta \phi_i),\qquad \forall 1\leq i\leq q.
\end{align}
Moreover, $\phi_i$ are required to be in the closure of the set of gradients of smooth and compactly supported functions in $L_2(\rho_{\theta})^d$.
In particular, when $\rho_{\theta}$ has a density, $\phi_i=\nabla_xf_i$, for some real valued function $f_i$ on  $\Omega$.
\end{definition}
The partial derivatives $\partial_i \rho_{\theta}$ should be understood in distribution sense, as discussed in more detail in \cref{subsec:Legendre_duality}. 
This allows to define the \textit{Wasserstein natural gradient}  even when the model $\rho_{\theta}$ does not admit a density. Moreover, it allows for more generality than the conceptual approach based on \cref{eq:conceptual_definition} which would require performing a first order expansion of the Wasserstein distance in terms of its \textit{linearized version} known as the \textit{Negative Sobolev distance}. We provide more discussion of those two approaches and their differences in \cref{appendix:sobolev_distance}.
From now on, we will focus on the above two cases of the natural gradient $\nabla^D\mathcal{L}(\theta)$, namely $\nabla^F\mathcal{L}(\theta)$ and $\nabla^W\mathcal{L}(\theta)$.
When the dimension of the parameter space is high, directly using equation \cref{eq:natural_gradient_descent} becomes impractical as it requires storing and inverting the matrix $G(\theta)$. In \cref{subsec:Legendre_duality} we will see how equation \cref{eq:proximal_formulation_natural_gradient} can be exploited along with Legendre duality to get an expression for the natural gradient that can be efficiently approximated using kernel methods.

\subsection{Legendre Duality for Metrics}
\label{subsec:Legendre_duality}
In this section we provide an expression for the \textit{natural gradient} defined in \cref{eq:proximal_formulation_natural_gradient} as the solution of a saddle-point optimization problem. 
It exploits Legendre duality for metrics to express the quadratic term $u^{\top}G(\theta)u$ as a solution to a functional optimization problem over $C_c^{\infty}(\Omega)$, the set of smooth and compactly supported functions on $\Omega$.
The starting point is to extend the notion of gradient $\nabla \rho_{\theta}$ which appears in \cref{def:fisher_rao_matrix,def:wasserestin_matrix} to the distributional sense of  \cref{def:derivative_distribution} below.
\begin{definition}
\label{def:derivative_distribution}
Given a parametric family $\mathcal{P}_{\Theta}$ of probability distributions, we say that $\rho_{\theta}$ admits a distributional gradient at point $\theta$ if there exists a linear continuous map  $\distgrad: C^{\infty}_c(\Omega) \rightarrow \R^q $ such that:
\begin{align}\label{eq:gradient_distribution}
\int f(x)\diff\rho_{\theta +\epsilon u}(x) - \int f(x)\diff\rho_{\theta}(x) = 
 \epsilon\distgrad(f)^{\top} u + \epsilon \delta(\epsilon,f,u)  \qquad \forall f \in C^{\infty}_c(\Omega), \quad \forall u\in \R^{q}
 \end{align}
where $\delta(\epsilon,f,u)$ depends on $f$ and $u$ and converges to $0 $ as $\epsilon$ approaches $0$. $\distgrad$ is called the distributional gradient of $\rho_{\theta}$ at point $\theta$.
\end{definition}
When the distributions in $\mathcal{P}_{\Theta}$ have a density, written $x\mapsto \rho_{\theta}(x)$ by abuse of notation, that is differentiable w.r.t.\ $\theta$ and with a jointly continuous gradient in $\theta$ and $x$ then $\distgrad(f)$ is simply given by $\int f(x)\nabla_{\theta}\rho_{\theta}(x) \diff x $ as shown in \cref{prop:diff_density} of \cref{subsec:preliminary_res}. 
In this case, the \textit{Fisher-Rao natural gradient} admits a formulation as a saddle point solution involving $\nabla\rho_{\theta} $ and provided in \cref{prop:legendre_duality} with a proof in \cref{subsec:preliminary_res}.
\begin{proposition}\label{prop:legendre_duality}
Under the same assumptions as in \cref{def:fisher_rao_matrix}, the Fisher information matrix admits the dual formulation:
\begin{align}
\label{eq:legendre_duality_fisher_info}
\frac{1}{2}u^{\top}G_F(\theta)u :=  
	\sup_{\substack{f\in C^{\infty}_c(\Omega)\\\int f(x)\diff \rho_{\theta}(x)=0 }} \nabla\rho_{\theta}(f)^{\top}u  -\frac{1}{2} \int f(x)^2 \diff \rho_{\theta}(x)\diff x.
\end{align}
Moreover, defining $\mathcal{U}_{\theta}(f):=\nabla\mathcal{L}(\theta) + \nabla \rho_{\theta}(f)  $, the Fisher-Rao natural gradient satisfies: 
 \begin{align}\label{eq:legendre_duality_fisher}
		\nabla^{F} \mathcal{L}(\theta) = - \argmin_{u\in \R^{q} } \sup_{\substack{f\in C^{\infty}_c(\Omega)\\\int f(x)\diff \rho_{\theta}(x)=0 }} \mathcal{U}_{\theta}(f)^{\top}u  -\frac{1}{2} \int f(x)^2 \diff \rho_{\theta}(x)\diff x,
\end{align}
\end{proposition}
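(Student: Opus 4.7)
The plan is to first establish the dual formula for $\tfrac12 u^\top G_F(\theta)u$ by solving a concave quadratic maximization in $f$, and then to plug this dual expression into the definition of $\nabla^F\mathcal{L}(\theta)$ given by \cref{eq:proximal_formulation_natural_gradient} to obtain the saddle-point form.

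For the first step, under the density assumption the distributional gradient collapses to $\nabla\rho_\theta(f) = \int f(x)\nabla_\theta \rho_\theta(x)\diff x$ by \cref{prop:diff_density}, so I can rewrite
\begin{align*}
\nabla\rho_\theta(f)^{\top}u \;=\; \int f(x)\,f_u(x)\,\rho_\theta(x)\diff x,
\qquad f_u(x) := \sum_i u_i \tfrac{\partial_i \rho_\theta(x)}{\rho_\theta(x)},
\end{align*}
so that the objective on the right of \cref{eq:legendre_duality_fisher_info} becomes $\langle f, f_u\rangle_{L^2(\rho_\theta)} - \tfrac12\|f\|^2_{L^2(\rho_\theta)}$. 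This is strictly concave in $f$, and completion of squares gives the unconstrained maximum $\tfrac12 \|f_u\|_{L^2(\rho_\theta)}^2 = \tfrac12 u^\top G_F(\theta)u$, attained at $f=f_u$. The zero-mean constraint $\int f\diff\rho_\theta = 0$ is compatible with this maximizer since $\int f_u \rho_\theta \diff x = \sum_i u_i \int \partial_i \rho_\theta\diff x = \partial_i \int \rho_\theta\diff x = 0$ by the integrability hypothesis used to differentiate under the integral.

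The main obstacle is that the supremum in \cref{eq:legendre_duality_fisher_info} is restricted to $C^\infty_c(\Omega)$, whereas the maximizer $f_u$ generally is not compactly supported. I handle this by a density argument: since $\Omega$ is open, $C^\infty_c(\Omega)$ is dense in $L^2(\rho_\theta)$; given any approximating sequence $f_n \in C^\infty_c(\Omega)$ of $f_u$, I correct the mean by subtracting $c_n\, g$, where $g \in C^\infty_c(\Omega)$ is fixed with $\int g\rho_\theta\diff x = 1$ and $c_n := \int f_n \rho_\theta\diff x \to 0$. The corrected sequence lies in $C^\infty_c(\Omega) \cap \{\int f\diff\rho_\theta = 0\}$ and still converges to $f_u$ in $L^2(\rho_\theta)$, so by continuity of the concave objective in the $L^2(\rho_\theta)$ topology the restricted supremum equals the unrestricted one, giving \cref{eq:legendre_duality_fisher_info}.

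For the second step, I substitute this identity into the primal definition of the natural gradient from \cref{eq:proximal_formulation_natural_gradient}. Dropping the constant $\mathcal{L}(\theta)$ that does not affect the $\argmin$ and combining the linear term $\nabla\mathcal{L}(\theta)^\top u$ with $\nabla\rho_\theta(f)^\top u$ inside the inner supremum yields exactly $\mathcal{U}_\theta(f)^\top u - \tfrac12 \int f^2 \diff\rho_\theta$, which is the desired \cref{eq:legendre_duality_fisher}. The swap of $\argmin_u$ and $\sup_f$ is not required since the equality holds termwise in $u$. The only non-routine ingredient is the density argument above; the rest is bookkeeping.
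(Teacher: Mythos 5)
Your proof is correct and follows essentially the same route as the paper's: solve the concave quadratic problem in $L^2(\rho_\theta)$, identify the maximizer $f_u=(\nabla\rho_\theta)^{\top}u/\rho_\theta$ whose optimal value is $\tfrac12 u^{\top}G_F(\theta)u$, and pass to $C^{\infty}_c(\Omega)$ by density. If anything you are more explicit than the paper on two points it glosses over, namely the zero-mean constraint (your mean-corrected approximating sequence and the check that $\int f_u\,\diff\rho_\theta=0$) and the derivation of \cref{eq:legendre_duality_fisher} from \cref{eq:legendre_duality_fisher_info} without any minimax exchange.
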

Another important case is when $\mathcal{P}_{\Theta}$ is defined as an implicit model. In this case, any sample $x$ from a distribution $\rho_{\theta}$ in $\mathcal{P}_{\Theta}$ is obtained as $x = h_{\theta}(z)$, where $z$ is a sample from a fixed latent distribution $\nu$ defined over a latent space $\mathcal{Z}$ and $(\theta,z)\mapsto h_{\theta}(z)$ is a deterministic function with values in $\Omega$.  This can be written in a more compact way as the push-forward of $\nu$ by the function $h_{\theta}$:
\begin{align}
\label{eq:implicit_model}
	\mathcal{P}_{\Theta} := \{ \rho_{\theta} := (h_{\theta})_{\#} \nu \;|\; \theta\in\Omega  \}.
\end{align}
A different expression for $\distgrad$ is obtained in the case of implicit models when $\theta\mapsto h_{\theta}(z)$ is differentiable for $\nu$-almost all $z$ and $\nabla h_{\theta}$ is square integrable under $\nu$:
 	\begin{align}
 	\label{eq:dist_grad}
 		 	\distgrad(f) = \int \nabla h_{\theta}(z)^{\top}\nabla_x f(h_{\theta}(z)) \diff \nu(z).
 	\end{align}
Equation \cref{eq:dist_grad} is also known as the re-parametrization trick \citep{Kingma:2015} and allows to derive a dual formulation of the \textit{Wasserstein natural gradient} in the case of implicit models.
\cref{prop:tractable_grad} below provides such formulation under mild assumptions stated in \cref{subsec:assumptions} along with a proof in \cref{subsec:preliminary_res}.
\begin{proposition}\label{prop:tractable_grad}
Assume $\mathcal{P}_{\Theta}$ is defined by \cref{eq:implicit_model} such that $\distgrad$ is given by \cref{eq:dist_grad}. Under \cref{assump:noise_condition,assump:linear_growth}, the Wasserstein information matrix satisfies:
\begin{align}
\label{eq:legendre_duality_wasserstein_info}
	\frac{1}{2} u^{\top} G_W(\theta)u = \sup_{f\in C^{\infty}_c(\Omega)} \nabla \rho_{\theta}(f)^{\top}u -\frac{1}{2} \int \Vert \nabla_x f(x) \Vert^2 \diff \rho_{\theta}(x)
\end{align}
and the Wasserstein natural gradient satisfies:
\begin{align}\label{eq:legendre_duality_wasserstein}
		\nabla^{W} \mathcal{L}(\theta) = - \argmin_{u\in \R^{q} } \sup_{f\in C^{\infty}_c(\Omega)} \mathcal{U}_{\theta}(f)^{\top}u -\frac{1}{2} \int \Vert \nabla_x f(x) \Vert^2 \diff \rho_{\theta}(x). 
\end{align} 
 \end{proposition}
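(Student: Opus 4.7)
My plan is to derive the dual expression for $\frac{1}{2}u^\top G_W(\theta) u$ by viewing the quadratic form as a squared norm in a Hilbert space, dualize it, and then substitute the resulting supremum into the proximal definition of the natural gradient in \cref{eq:proximal_formulation_natural_gradient}. Setting $\phi_u \eqdef \sum_{i=1}^q u_i \phi_i$ with $\phi_i$ as in \cref{def:wasserestin_matrix}, the definition of $G_W$ directly gives $\frac{1}{2}u^\top G_W(\theta) u = \frac{1}{2}\Vert \phi_u \Vert^2_{L^2(\rho_\theta)^d}$, and $\phi_u$ lies in the closed subspace $H \subset L^2(\rho_\theta)^d$ obtained as the closure of $\{\nabla_x f : f \in C^\infty_c(\Omega)\}$. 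Standard duality for the squared norm in a Hilbert space then yields
\begin{align*}
\tfrac{1}{2}\Vert \phi_u \Vert^2_{L^2(\rho_\theta)^d} = \sup_{g \in H} \int \phi_u(x)^\top g(x) \diff \rho_\theta(x) - \tfrac{1}{2}\int \Vert g(x)\Vert^2 \diff \rho_\theta(x),
\end{align*}
and by density of $\{\nabla_x f : f \in C^\infty_c(\Omega)\}$ in $H$ together with continuity of the functional in $g$, the supremum can be restricted to $g = \nabla_x f$ with $f \in C^\infty_c(\Omega)$.

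The crucial step is then to identify $\int \phi_u(x)^\top \nabla_x f(x) \diff \rho_\theta(x) = \distgrad(f)^\top u$ for every $f \in C^\infty_c(\Omega)$, which by linearity in $u$ reduces to proving $\int \phi_i(x)^\top \nabla_x f(x) \diff \rho_\theta(x) = \distgrad(f)_i$ for each $i$. Starting from the weak form of the PDE $\partial_i \rho_\theta = -\mathrm{div}(\rho_\theta \phi_i)$ paired against $f$, the left-hand side equals the distributional action of $\partial_i \rho_\theta$ on $f$, which is by definition $\partial_{\theta_i} \int f \diff \rho_\theta$. Under the implicit-model setting, \cref{eq:dist_grad} expresses this same quantity as $\int (\partial_{\theta_i} h_\theta(z))^\top \nabla_x f(h_\theta(z)) \diff \nu(z)$, and matching the two establishes the identification. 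The assumptions \cref{assump:noise_condition,assump:linear_growth} enter here to legitimize differentiation under the integral in $\theta \mapsto \int f \diff \rho_\theta$ via a dominated convergence argument, and to guarantee that $\phi_i$ and $\nabla h_\theta$ are square-integrable in the appropriate norms so that the limit defining $\phi_i$ in $L^2(\rho_\theta)^d$ can be exchanged with the pairing against $\nabla_x f$.

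Combining these two steps proves \cref{eq:legendre_duality_wasserstein_info}. For the second claim, I substitute this dual expression into \cref{eq:proximal_formulation_natural_gradient}: the linear term $\nabla \mathcal{L}(\theta)^\top u$ coming from the linear model $\mathcal{M}_t(u)$ commutes with the supremum over $f$ (since it does not depend on $f$) and merges with the linear-in-$u$ term $\distgrad(f)^\top u$, producing $(\nabla \mathcal{L}(\theta) + \distgrad(f))^\top u = \mathcal{U}_\theta(f)^\top u$; the constant $\mathcal{L}(\theta)$ does not affect the argmin. This directly yields \cref{eq:legendre_duality_wasserstein}.

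I expect the main obstacle to be the identification step in the second paragraph, specifically matching the distributional pairing obtained through the Neumann PDE with the directional derivative given by the re-parametrization trick, in a setting where $\rho_\theta$ need not admit a density and each $\phi_i$ exists only as a limit in $L^2(\rho_\theta)^d$. The Neumann boundary condition together with the compact support of $f$ are what make the integration-by-parts step free of boundary terms, while \cref{assump:noise_condition,assump:linear_growth} supply the integrability needed to both differentiate under the integral and pass to the limit defining $\phi_i$ inside the pairing.
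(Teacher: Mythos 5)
Your duality mechanics are sound, and the identification $\int \phi_u^{\top}\nabla_x f\,\diff\rho_{\theta} = \distgrad(f)^{\top}u$ via the weak form of the PDE, together with the restriction of the supremum from the closed subspace $H$ to gradients of $C_c^{\infty}$ functions by density, is exactly the right bridge; the substitution into \cref{eq:proximal_formulation_natural_gradient} to get \cref{eq:legendre_duality_wasserstein} is also fine. But you run the argument in the opposite direction from the paper, and this hides the one substantive point. The paper's proof (\cref{prop:optimal_solution}) does not assume the $\phi_i$ of \cref{def:wasserestin_matrix} exist: it proves their existence by maximizing $\phi \mapsto \int \phi(h_{\theta}(z))^{\top}\partial_{\theta_s}h_{\theta}(z)\diff\nu(z) - \frac{1}{2}\Vert\phi\Vert^2_{L_2(\rho_{\theta})}$ over the closed subspace $\mathcal{S}$, where \cref{assump:noise_condition,assump:linear_growth} are used precisely to show (via Cauchy--Schwarz, the linear growth of $\nabla h_{\theta}$, and the moment bound on $\nu$) that this linear functional is bounded, so a unique maximizer $\phi_s^{*}\in\mathcal{S}$ exists; the first-order optimality condition is then the weak PDE, and the optimal value is $\frac{1}{2}u^{\top}G_W(\theta)u$, proving well-definedness of $G_W$ and the dual formula in one stroke. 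Your version starts from "$\phi_u$ exists and lies in $H$," which is needed for your Hilbert-space identity $\frac{1}{2}\Vert\phi_u\Vert^2 = \sup_{g\in H}\langle\phi_u,g\rangle - \frac{1}{2}\Vert g\Vert^2$ (otherwise the supremum only recovers the norm of the projection of $\phi_u$ onto $H$), so you are presupposing exactly what \cref{assump:noise_condition,assump:linear_growth} are there to deliver.

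Relatedly, your stated use of these assumptions is partly misplaced: no dominated-convergence argument for differentiating $\theta\mapsto\int f\diff\rho_{\theta}$ is needed, since the proposition hypothesizes that $\distgrad$ is given by \cref{eq:dist_grad}. To close the gap, either reproduce the paper's existence step (Riesz representation / strong concavity of the dual objective on $\mathcal{S}$, with boundedness of the linear form coming from \cref{assump:noise_condition,assump:linear_growth}), or state explicitly that you take well-definedness of $G_W$ as part of the hypothesis --- in which case your proof establishes a conditional version of the claim rather than the proposition as the paper intends it.
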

The similarity between the variational formulations provided in \cref{prop:legendre_duality,prop:tractable_grad} is worth noting.  A first difference however, is that \cref{prop:tractable_grad} doesn't require the test functions $f$ to have $0$ mean under $\rho_{\theta}$. This is due to the form of the objective in  \cref{eq:legendre_duality_wasserstein_info} which only depends on the gradient of $f$. More importantly, while \cref{eq:legendre_duality_wasserstein_info} is well defined,  the expression in \cref{eq:legendre_duality_fisher_info} can be infinite when $\nabla \rho_{\theta}$ is given by \cref{eq:dist_grad}. Indeed, if the $\rho_{\theta}$ doesn't admit a density, it is always possible to find an admissible function $f\in C_c^{\infty}(\Omega)$ with bounded second moment under $\rho_{\theta}$ but for which $\nabla \rho_{\theta}(f)$ is arbitrarily large. This is avoided in \cref{eq:legendre_duality_wasserstein_info} since the quadratic term directly penalizes the gradient of functions instead. For similar reasons, the dual formulation of the Sobolev distance considered in \cite{Mroueh:2019} can also be infinite in the case of implicit models as discussed in \cref{appendix:sobolev_distance} although formally similar to \cref{eq:legendre_duality_wasserstein_info}. Nevertheless, a similar estimator as in \cite{Mroueh:2019} can be considered using kernel methods which is the object of \cref{sec:kerenelized_estimators}.

\section{Kernelized Wasserstein Natural Gradient}
\label{sec:kerenelized_estimators}
In this section we propose an estimator for the Wasserstein natural gradient using kernel methods and exploiting the formulation in \cref{eq:legendre_duality_wasserstein}. We restrict to the case of the Wasserstein natural gradient (WNG), denoted by $\nabla^W \mathcal{L}(\theta)$, as it is well defined for implicit models, but a similar approach can be used for the Fisher-Rao natural gradient in the case of models with densities. We first start by presenting the \textit{kernelized Wasserstein natural gradient} (KWNG) in \cref{subsec:kernelized_natural_gradient}, then we introduce an efficient estimator for KWNG in \cref{subsec:nystrom_projections}. In \cref{subsec:theory} we provide statistical guarantees and discuss practical considerations in \cref{subsec:practical_considerations}.
\subsection{General Formulation and Minimax Theorem}\label{subsec:kernelized_natural_gradient}
Consider a Reproducing Kernel Hilbert Space (RKHS) $\mathcal{H}$ which is a Hilbert space endowed with an inner product $\langle.,. \rangle_{\mathcal{H}}$ along with its norm $\Vert .\Vert_{\mathcal{H}}$. $\mathcal{H}$ has the additional property that there exists a symmetric positive semi-definite kernel $k:\Omega \times \Omega \mapsto \R$  such that $k(x,.)\in \mathcal{H}$ for all $x\in \Omega$ and satisfying the  \textit{Reproducing property} for all functions $f$ in $\mathcal{H}$:
\begin{align}\label{eq:reproducing_prop}
	f(x)=\langle f, k(x,.) \rangle_{\mathcal{H}}, \qquad \forall x \in \Omega.
\end{align}
The above property is central in all kernel methods as it allows to obtain closed form expressions for some class of functional optimization problems. In order to take advantage of such property for estimating the natural gradient, we consider a new saddle problem obtained by restricting \cref{eq:legendre_duality_wasserstein} to functions in the RKHS $\mathcal{H}$ and adding some regularization terms:
\begin{align} \label{eq:rkhs_saddle_opt}
	\widetilde{\nabla}^W \mathcal{L}(\theta):= - \min_{u\in\R^q}  \sup_{f\in \mathcal{H}} ~ \mathcal{U}_{\theta}(f)^{\top}u  -\frac{1}{2} \int \Vert \nabla_x f(x) \Vert^2 \diff \rho_{\theta}(x) +\frac{1}{2}(\epsilon u^{\top} D(\theta) u - \lambda \Vert  f \Vert_{\mathcal{H}}^2). \end{align}
The  \textit{kernelized Wasserstein natural gradient} is obtained by solving \cref{eq:rkhs_saddle_opt} and is denoted by  $\widetilde{\nabla}^W \mathcal{L}(\theta)$.
Here, $\epsilon$ is a positive real numbers, $\lambda$ is non-negative while $D(\theta)$ is a diagonal matrix in $\R^q$ with positive diagonal elements whose choice will be discussed in \cref{subsec:practical_considerations}. The first regularization term makes the problem strongly convex in $u$, while the second term makes the problem strongly concave in $f$  when $\lambda>0$. When $\lambda=0$, the problem is still concave in $f$. This allows to use a version of the minimax theorem \cite[Proposition 2.3, Chapter VI]{Ekeland:1999} to exchange the order of the supremum and minimum which also holds true when $\lambda=0$. A new expression for the kernelized natural gradient is therefore obtained:
\begin{proposition}\label{prop:natural_grad}
Assume that $\epsilon>0$ and $\lambda > 0$, then the kernelized natural gradient is given by:
 \begin{align}\label{eq:kernelized_natural_gradient}
 \widetilde{\nabla}^W \mathcal{L}(\theta) =  \frac{1}{\epsilon} D(\theta)^{-1}\mathcal{U}_{\theta}(f^*),
\end{align} 
where  $f^{*}$ is the unique solution to the quadratic optimization problem:
\begin{align}\label{eq:rkhs_problem}
		 \inf_{f\in \mathcal{H}} \mathcal{J}(f) :=\int \Vert \nabla_x f(x)\Vert^2 \diff \rho_{\theta}(x) +    \frac{1}{\epsilon} \mathcal{U}_{\theta}(f)^{\top} D(\theta)^{-1} \mathcal{U}_{\theta}(f) +\lambda \Vert  f \Vert^2_{\mathcal{H}}. 
	\end{align}
	When $\lambda=0$, $f^{*}$ might not be well defined, still, we have:	 $\widetilde{\nabla}^W \mathcal{L}(\theta) =  \lim_{j\rightarrow \infty}\frac{1}{\epsilon} D(\theta)^{-1}\mathcal{U}_{\theta}(f_j)$ 
	for any limiting sequence of \cref{eq:rkhs_problem}.
\end{proposition}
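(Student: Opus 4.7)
My approach is to apply Ekeland's minimax theorem as suggested in the paragraph preceding the statement, solve the inner quadratic minimization over $u$ in closed form, and then read off $\widetilde{\nabla}^W\mathcal{L}(\theta)$ from the saddle point. Let $L(u,f) := \mathcal{U}_{\theta}(f)^{\top}u - \tfrac{1}{2}\int \Vert \nabla_x f\Vert^2 \diff\rho_{\theta} + \tfrac{\epsilon}{2} u^{\top} D(\theta) u - \tfrac{\lambda}{2}\Vert f\Vert_{\mathcal{H}}^2$. Because $\epsilon D(\theta)\succ 0$, $L$ is strongly convex (and coercive) in $u$; because the map $f\mapsto -\tfrac{1}{2}\int\Vert\nabla_x f\Vert^2 \diff\rho_{\theta} - \tfrac{\lambda}{2}\Vert f\Vert_{\mathcal{H}}^2$ is concave on $\mathcal{H}$ (strongly concave when $\lambda>0$), the assumptions of Proposition 2.3, Chapter VI in \cite{Ekeland:1999} are satisfied and we may write $\min_u \sup_f L(u,f) = \sup_f \min_u L(u,f)$.

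For fixed $f$, the map $u\mapsto L(u,f)$ is a strictly convex quadratic with gradient $\mathcal{U}_{\theta}(f) + \epsilon D(\theta) u$, vanishing at $u^*(f) := -\tfrac{1}{\epsilon} D(\theta)^{-1}\mathcal{U}_{\theta}(f)$. Substituting this back and simplifying yields $\min_u L(u,f) = -\tfrac{1}{2\epsilon}\mathcal{U}_{\theta}(f)^{\top}D(\theta)^{-1}\mathcal{U}_{\theta}(f) - \tfrac{1}{2}\int\Vert\nabla_x f\Vert^2 \diff\rho_{\theta} - \tfrac{\lambda}{2}\Vert f\Vert_{\mathcal{H}}^2 = -\tfrac{1}{2}\mathcal{J}(f)$, so the exchanged problem reduces to $-\tfrac{1}{2}\inf_{f\in\mathcal{H}} \mathcal{J}(f)$.

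When $\lambda>0$, the functional $\mathcal{J}$ is a strongly convex continuous quadratic form on the Hilbert space $\mathcal{H}$ (the $\lambda\Vert f\Vert_{\mathcal{H}}^2$ term gives strong convexity and coercivity) and is bounded below by $0$, so it admits a unique minimizer $f^*$. Pairing $f^*$ with $u^*(f^*) = -\tfrac{1}{\epsilon} D(\theta)^{-1}\mathcal{U}_{\theta}(f^*)$ produces the unique saddle point of $L$ over $\R^q \times \mathcal{H}$; interpreting the $-\min_u$ in \cref{eq:rkhs_saddle_opt} as $-\argmin_u$ (as elsewhere in the paper), this gives $\widetilde{\nabla}^W\mathcal{L}(\theta) = -u^*(f^*) = \tfrac{1}{\epsilon}D(\theta)^{-1}\mathcal{U}_{\theta}(f^*)$.

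The delicate part is the $\lambda=0$ case, where $\mathcal{J}$ is still convex but no longer coercive on $\mathcal{H}$, so a minimizer need not exist. Here I would argue as follows. The outer problem $\min_u \sup_f L(u,f)$ is strongly convex in $u$ and still attains its minimum at a unique $u^*$, which by definition is $-\widetilde{\nabla}^W\mathcal{L}(\theta)$. For any minimizing sequence $(f_j)$ of \cref{eq:rkhs_problem}, one has $\min_u L(u,f_j) = -\tfrac{1}{2}\mathcal{J}(f_j) \to -\tfrac{1}{2}\inf \mathcal{J} = \sup_f L(u^*,f)$, and combined with $L(u^*,f_j)\le \sup_f L(u^*,f)$ this forces $L(u^*,f_j)\to \sup_f L(u^*,f)$ as well. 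Since $L$ is exactly quadratic in $u$ with Hessian $\epsilon D(\theta)$ and minimum $u^*(f_j) = -\tfrac{1}{\epsilon}D(\theta)^{-1}\mathcal{U}_{\theta}(f_j)$, the identity $L(u^*,f_j) - L(u^*(f_j),f_j) = \tfrac{\epsilon}{2}(u^* - u^*(f_j))^{\top}D(\theta)(u^* - u^*(f_j))$ holds, and both sides tend to $0$, forcing $u^*(f_j)\to u^*$ in $\R^q$. This yields $\tfrac{1}{\epsilon}D(\theta)^{-1}\mathcal{U}_{\theta}(f_j)\to \widetilde{\nabla}^W\mathcal{L}(\theta)$, as claimed. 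The main obstacle is precisely this control of $\mathcal{U}_{\theta}(f_j)$ along an arbitrary minimizing sequence without a limiting $f^*$ to anchor it; the quadratic-in-$u$ identity above converts convergence of $\mathcal{J}(f_j)$ into convergence of the relevant linear functionals, cleanly closing the argument.
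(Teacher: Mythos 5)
Your proposal is correct and follows essentially the same route the paper takes: invoke the Ekeland--Temam minimax theorem to swap the order of $\min_u$ and $\sup_f$ in \cref{eq:rkhs_saddle_opt}, solve the strongly convex quadratic in $u$ in closed form to reduce the exchanged problem to $-\tfrac{1}{2}\inf_{f\in\mathcal{H}}\mathcal{J}(f)$, and read off $\widetilde{\nabla}^W\mathcal{L}(\theta)$ from the saddle point. Your treatment of the $\lambda=0$ case via the exact quadratic identity $L(u^*,f_j)-L(u^*(f_j),f_j)=\tfrac{\epsilon}{2}(u^*-u^*(f_j))^{\top}D(\theta)(u^*-u^*(f_j))$ is in fact more explicit than what the paper records, which simply asserts the limiting statement.
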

 \cref{prop:natural_grad} allows to compute the kernelized natural gradient directly, provided that the functional optimization \cref{eq:rkhs_problem} can be solved. This circumvents the direct computation and inversion of the metric as suggested by \cref{eq:rkhs_saddle_opt}. 
 In \cref{subsec:nystrom_projections}, we propose a method to efficiently compute an approximate solution to \cref{eq:rkhs_problem} using Nystr\"{o}m projections. We also show in \cref{subsec:theory} that restricting the space of functions to $\mathcal{H}$ can still lead to a good approximation of the WNG provided that $\mathcal{H}$ enjoys some denseness properties.
\subsection{Nystr\"{o}m Methods for the Kerenalized Natural Gradient}\label{subsec:nystrom_projections}
We are interested now in finding an approximate solution to \cref{eq:rkhs_problem} which will allow to compute an estimator for the WNG using \cref{prop:natural_grad}.
Here we consider $N$ samples $(Z_n)_{1\leq n\leq N}$ from the latent distribution $\nu$ which are used to produce $N$ samples $(X_n)_{1\leq n \leq N}$ from $\rho_{\theta}$ using the map $h_{\theta}$, i.e.,  $X_n = h_{\theta}(Z_n)$. We also assume we have access to an estimate of the \textit{Euclidean gradient}  $\nabla\mathcal{L}(\theta) $ which is denoted by $\widehat{\nabla\mathcal{L}(\theta)} $. This allows to compute an empirical version of the cost function in \cref{eq:rkhs_problem}, %
\begin{align}
\label{eq:sample_solution}
	 \hat{\mathcal{J}}(f):=\frac{1}{N} \sum_{n=1}^N \Vert \nabla_x f(X_n) \Vert^2 + \frac{1}{\epsilon} \widehat{\mathcal{U}_{\theta}}(f)^{\top}D(\theta)^{-1}\widehat{\mathcal{U}_{\theta}}(f) + \lambda \Vert f \Vert_{\mathcal{H}}^2,
\end{align}
where $\widehat{\mathcal{U}_{\theta}}(f)$ is given by:
$\widehat{\mathcal{U}_{\theta}}(f) = \widehat{\nabla\mathcal{L}(\theta)} + \frac{1}{N} \sum_{n=1}^N\nabla h_{\theta}(Z_n))^{\top} \nabla_x f(X_n)$. \cref{eq:sample_solution} has a similar structure as the empirical version of the kernel Sobolev distance introduced in \cite{Mroueh:2019}, it is also similar to another functional arising in the context of \textit{score} estimation for \textit{infinite dimensional exponential families} \citep{Sriperumbudur:2013,Sutherland:2017,Arbel:2017}. It can be shown using the generalized Representer Theorem \citep{Scholkopf:2001c} that the optimal function minimizing \cref{eq:sample_solution} is a linear combination of functions of the form  $x\mapsto \partial_{i}k(X_n,x)$ with $ 1\leq n\leq N $ and $1\leq i\leq d$ and $\partial_{i}k(y,x)$ denotes the partial derivative of $k$ w.r.t.~$y_i$. This requires to solve a system of size $Nd \times Nd$ which can be prohibitive when both $N$ and $d$ are large. Nystr\"{o}m methods provide a way to improve such computational cost by further restricting the optimal solution to belong to a finite dimensional subspace $\mathcal{H}_M$ of $\mathcal{H}$ called the \textit{Nystr\"{o}m subspace}.
In the context of \textit{score} estimation, \cite{Sutherland:2017} proposed to use a subspace formed by linear combinations of the \textit{basis functions} $x\mapsto \partial_{i}k(Y_m,x)$:
\begin{align}\label{eq:old_nystrom}
	  span\left\{ x\mapsto \partial_{i}k(Y_m,x)\;|\; 1\leq m\leq M;\quad 1\leq i \leq d  \right\},
\end{align}
where $(Y_{m})_{1\leq m\leq M}$ are \textit{basis points} drawn uniformly from $(X_n)_{1\leq n \leq N}$ with $M\leq N$. This further reduces the computational cost when $M\ll N$ but still has a cubic dependence in the dimension $d$ since all partial derivatives of the kernel are considered to construct \cref{eq:old_nystrom}. Here, we propose to randomly sample one component of $(\partial_i k(Y_m,.))_{1\leq i\leq d}$ for each basis point $Y_m$. Hence, we consider $M$ indices  $(i_m)_{1\leq m \leq M}$ uniformly drawn form $\{1,\ldots,d\}$ and define the \textit{Nystr\"{o}m subspace} $\mathcal{H}_{M}$ to  be:
 \begin{align}
  \mathcal{H}_{M} :=  span\left\{ x\mapsto \partial_{i_m}k(Y_m,x)| 1\leq m\leq M \right\}.	
 \end{align}
An estimator for the \textit{kernelized Wasserstein natural gradient} (KWNG) is then given by:
 \begin{align}
 \label{eq:kernelized_natural_gradient}
 \widehat{\nabla^{W} \mathcal{L}(\theta)} =  \frac{1}{\epsilon} D(\theta)^{-1}\widehat{\mathcal{U}_{\theta}}(\hat{f}^*), \qquad \hat{f}^*:=\argmin_{f\in \mathcal{H}_M} \hat{\mathcal{J}}(f).
\end{align}
By definition of the Nystr\"{o}m subspace $\mathcal{H}_M$, the optimal solution $\hat{f}^{*}$ is necessarily of the form: $\hat{f}^{*}(x) = \sum_{m=1}^M \alpha_m \partial_{i_m}k(Y_m,x)$, where the coefficients $(\alpha_m)_{1\leq m\leq M}$ are obtained by solving a finite dimensional quadratic optimization problem. This allows to provide a closed form expression for \cref{eq:natural_gradient_estimator} in \cref{prop:param_lite_estimator}.
\begin{proposition}\label{prop:param_lite_estimator} 
The estimator in \cref{eq:kernelized_natural_gradient} is given by:
\begin{equation}\label{eq:natural_gradient_estimator}
\widehat{\nabla^{W}\mathcal{L}(\theta)}= \frac{1}{\epsilon} \left(D(\theta)^{-1} - D(\theta)^{-1}T^{\top}\left(TD(\theta)^{-1}T^{\top} +\lambda\epsilon K + \frac{\epsilon}{N} CC^{\top} \right)^{\dagger}TD(\theta)^{-1}\right)\widehat{\nabla \mathcal{L}(\theta)}, \label{eq:lite_estimator}
\end{equation}
where $C$ and $K$ are matrices in  $\R^{M\times Nd}$ and $\R^{M\times M}$ given by 
\begin{align}
\label{eq:matrices}
	C_{m,(n,i)}  = \partial_{i_m}\partial_{i+d}k(Y_{m},X_{n}), \qquad K_{m,m'} = \partial_{i_m}\partial_{i_{m'}+d} k(Y_m,Y_{m'}),
\end{align}
while $T$ is a matrix in $\R^{M\times q}$ obtained as the Jacobian of $\theta \mapsto \tau(\theta)\in\R^M $, i.e., $T := \nabla \tau(\theta)$, with
\[
(\tau(\theta))_m = \frac{1}{N}\sum_{n=1}^N \partial_{i_m}k(Y_m,h_{\theta}(Z_n)).
\]
\end{proposition}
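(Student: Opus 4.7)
The plan is to parametrize elements of $\mathcal{H}_M$ by coefficient vectors $\alpha\in\R^M$, reduce $\hat{\mathcal{J}}$ to a finite-dimensional quadratic in $\alpha$, solve it in closed form, and then substitute back into the expression $\widehat{\nabla^W\mathcal{L}(\theta)} = \frac{1}{\epsilon}D(\theta)^{-1}\widehat{\mathcal{U}_\theta}(\hat f^*)$ from \cref{eq:kernelized_natural_gradient}, finally rearranging to match the stated form.

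First I would write $\hat{f}^*(x)=\sum_{m=1}^M \alpha_m\,\partial_{i_m}k(Y_m,x)$ (with the partial acting on the first slot) and assemble the derivative reproducing property: for any $g\in\mathcal{H}$, $\partial_i g(x)=\langle g,\partial_{i+d}k(x,\cdot)\rangle_{\mathcal{H}}$, which combined with $\langle\partial_{i_m}k(Y_m,\cdot),\partial_{i+d}k(x,\cdot)\rangle_{\mathcal{H}}=\partial_{i_m}\partial_{i+d}k(Y_m,x)$ gives $\partial_i\hat f^*(X_n)=\sum_m\alpha_m C_{m,(n,i)}$. Stacking the $(n,i)$ indices yields $(\nabla_x\hat f^*(X_1),\dots,\nabla_x\hat f^*(X_N))=C^{\top}\alpha$ so that $\frac{1}{N}\sum_n\|\nabla_x\hat f^*(X_n)\|^2=\frac{1}{N}\alpha^{\top}CC^{\top}\alpha$. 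The same identity applied to $f,f\in\mathcal{H}_M$ gives $\|\hat f^*\|_{\mathcal{H}}^2=\alpha^{\top}K\alpha$.

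Next I would compute $\widehat{\mathcal{U}_\theta}(\hat f^*)$. Using the chain rule on $\tau_m(\theta)=\frac{1}{N}\sum_n\partial_{i_m}k(Y_m,h_\theta(Z_n))$, one gets
\begin{equation*}
T_{m,\ell}=\partial_\ell\tau_m(\theta)=\frac{1}{N}\sum_{n=1}^N\sum_{i=1}^d \partial_{i_m}\partial_{i+d}k(Y_m,X_n)\,\partial_\ell(h_\theta(Z_n))_i=\frac{1}{N}\sum_{n=1}^N\sum_{i=1}^d C_{m,(n,i)}\,\partial_\ell(h_\theta(Z_n))_i,
\end{equation*}
so substituting $\nabla_x\hat f^*(X_n)=(C^{\top}\alpha)_{(n,:)}$ into $\frac{1}{N}\sum_n\nabla h_\theta(Z_n)^{\top}\nabla_x\hat f^*(X_n)$ collapses to $T^{\top}\alpha$. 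Hence $\widehat{\mathcal{U}_\theta}(\hat f^*)=\widehat{\nabla\mathcal{L}(\theta)}+T^{\top}\alpha$, and the cost $\hat{\mathcal{J}}$ becomes the quadratic
\begin{equation*}
\alpha\mapsto \tfrac{1}{N}\alpha^{\top}CC^{\top}\alpha+\tfrac{1}{\epsilon}(\widehat{\nabla\mathcal{L}(\theta)}+T^{\top}\alpha)^{\top}D(\theta)^{-1}(\widehat{\nabla\mathcal{L}(\theta)}+T^{\top}\alpha)+\lambda\alpha^{\top}K\alpha.
\end{equation*}

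Setting the gradient to zero produces the normal equation $\bigl(\frac{1}{N}CC^{\top}+\frac{1}{\epsilon}TD(\theta)^{-1}T^{\top}+\lambda K\bigr)\alpha=-\frac{1}{\epsilon}TD(\theta)^{-1}\widehat{\nabla\mathcal{L}(\theta)}$, whose minimum-norm solution (using the pseudoinverse, which covers the degenerate case) is $\alpha^*=-\bigl(TD(\theta)^{-1}T^{\top}+\lambda\epsilon K+\frac{\epsilon}{N}CC^{\top}\bigr)^{\dagger}TD(\theta)^{-1}\widehat{\nabla\mathcal{L}(\theta)}$ after multiplying through by $\epsilon$. Plugging into $\widehat{\nabla^W\mathcal{L}(\theta)}=\frac{1}{\epsilon}D(\theta)^{-1}(\widehat{\nabla\mathcal{L}(\theta)}+T^{\top}\alpha^*)$ and factoring gives \cref{eq:lite_estimator}. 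The main technical nuisance I expect is bookkeeping: aligning the index conventions $(\partial_i,\partial_{i+d})$ for the kernel partials, verifying that the chain-rule identification $T^{\top}\alpha$ holds exactly as written, and handling pseudoinversion when the Nyström Gram matrix is rank deficient (which can happen for small $\lambda$ or when the basis directions $(Y_m,i_m)$ are dependent); the calculus and optimization steps themselves are routine.
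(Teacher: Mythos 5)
Your proposal is correct and follows essentially the same route as the paper's proof: parametrize $\mathcal{H}_M$ by $\alpha\in\R^M$, use the derivative reproducing property to reduce $\hat{\mathcal{J}}$ to a finite-dimensional quadratic with matrices $C$, $K$, and $T$ (the chain-rule identity $T^{\top}\alpha$ playing the role of the paper's $\mathcal{R}(f)=\alpha^{\top}CB$ with $CB=T$), solve the normal equations via the pseudoinverse, and substitute back into $\frac{1}{\epsilon}D(\theta)^{-1}\widehat{\mathcal{U}_{\theta}}(\hat f^*)$. Your bookkeeping of the $1/N$ and $\epsilon$ factors is in fact slightly more careful than the paper's, and the final expression matches \cref{eq:natural_gradient_estimator} exactly.
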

In \cref{eq:matrices}, we used the notation $\partial_{i+d}k(y,x)$ for the partial derivative of $k$ w.r.t.\ $x_i$. A proof of \cref{prop:param_lite_estimator} is provided in \cref{proof:prop:param_lite_estimator} and relies on the reproducing property \cref{eq:reproducing_prop} and its generalization for partial derivatives of functions.
The estimator in \cref{prop:param_lite_estimator} is in fact a low rank approximation of the natural gradient obtained from the dual representation of the metric \cref{eq:legendre_duality_wasserstein}. While low-rank approximations for the Fisher-Rao natural gradient were considered in the context of variational inference and for a Gaussian variational posterior \citep{Mishkin:2018}, \cref{eq:natural_gradient_estimator} can be applied as a plug-in estimator for any family $\mathcal{P}_{\Theta}$ obtained as an implicit model.
We next discuss a numerically stable expression of \cref{eq:natural_gradient_estimator}, its computational cost and the choice of the damping term in \cref{subsec:practical_considerations}. We then provide asymptotic rates of convergence for \cref{eq:natural_gradient_estimator} in \cref{subsec:theory}.
\subsection{Practical Considerations}\label{subsec:practical_considerations}
\paragraph{Numerically stable expression.} When $\lambda= 0$,  the estimator in \cref{eq:natural_gradient_estimator} has an additional structure which can be exploited to get more accurate solutions. By the chain rule, the matrix $T$ admits a second expression  of the form $T = C B $ where $B$ is the Jacobian matrix of $(h_{\theta}(Z_n))_{1\leq n\leq N}$. Although this expression is impractical to compute in general, it suggests that $C$ can be 'simplified'. This simplification can be achieved in practice by computing the SVD of $CC^{\top} = USU^T$ and pre-multiplying $T$ by $S^{\dagger}U^T$. The resulting expression is given in \cref{prop:param_stable_estimator} and falls into the category of \textit{Ridgless estimators} (\cite{Liang:2019}).
\begin{proposition}\label{prop:param_stable_estimator}
Consider an SVD decomposition of $CC^T$ of the form $CC^{\top}=USU^T$, then \cref{eq:natural_gradient_estimator} is equal to:
\begin{equation}\label{eq:stable_estimator}
\widehat{\nabla^{W}\mathcal{L}(\theta)}= \frac{1}{\epsilon} \left(D(\theta)^{-1} - D(\theta)^{-1}\widetilde{T}^{\top}\left( \widetilde{T}D(\theta)^{-1}\widetilde{T}^{\top} + \frac{\epsilon}{N} P \right)^{\dagger} \widetilde{T}D(\theta)^{-1}\right)\widehat{\nabla \mathcal{L}(\theta)}, \label{eq:lite_estimator}
\end{equation}
where $P := S^{\dagger}S$ and $\qquad \widetilde{T} := S^{\dagger}U^TT$.
\end{proposition}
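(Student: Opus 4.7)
My plan is to leverage two structural facts: first, the chain-rule factorization $T = CB$, where $B\in\mathbb{R}^{Nd\times q}$ is the (scaled) Jacobian of $(h_\theta(Z_n))_{1\le n\le N}$ with respect to $\theta$; and second, the orthogonality of $U$ in the eigendecomposition $CC^\top = USU^\top$. The identity $T = CB$ follows directly by differentiating $\tau(\theta)$ componentwise, since each entry of $\tau$ only depends on $\theta$ through $h_\theta(Z_n)$, and the factor $\partial_{i_m}\partial_{j+d}k(Y_m, h_\theta(Z_n))$ produced by applying the chain rule is exactly the entry $C_{m,(n,j)}$. The immediate consequence is that $\mathrm{range}(T)\subset \mathrm{range}(C)=\mathrm{range}(CC^\top)$.

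Next I would change basis via $U$. Setting $Y := U^\top T$, the column inclusion $\mathrm{range}(T)\subset\mathrm{range}(CC^\top)$ translates to $Y$ being supported on the indices where $S$ is non-zero, i.e.\ $PY = Y$ with $P=S^\dagger S$. Using the orthogonality of $U$, the core matrix being pseudo-inverted in \cref{eq:natural_gradient_estimator} (with $\lambda=0$) factorizes as
\begin{equation*}
TD(\theta)^{-1}T^\top + \tfrac{\epsilon}{N}CC^\top \;=\; U\!\left(YD(\theta)^{-1}Y^\top + \tfrac{\epsilon}{N}S\right)\!U^\top,
\end{equation*}
so its pseudo-inverse is $U(YD(\theta)^{-1}Y^\top + \tfrac{\epsilon}{N}S)^\dagger U^\top$ and
\begin{equation*}
T^\top\!\left(TD(\theta)^{-1}T^\top + \tfrac{\epsilon}{N}CC^\top\right)^\dagger\! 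T \;=\; Y^\top\!\left(YD(\theta)^{-1}Y^\top + \tfrac{\epsilon}{N}S\right)^\dagger\! Y.
\end{equation*}

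The final algebraic step is to pre- and post-multiply the inner matrix by $S^\dagger$ and exploit $PY = Y$ to move $S^\dagger$ through onto $Y$, producing $\widetilde{T} = S^\dagger U^\top T$ on either side and replacing $\tfrac{\epsilon}{N}S$ by $\tfrac{\epsilon}{N}P$. Concretely, after reducing to the block indexed by non-zero singular values (where $S$ is invertible with inverse $S^\dagger$), the standard similarity identity $(A^{-1}MA^{-1})^\dagger = A M^\dagger A$ for invertible $A$ (applied in this block) gives the desired equality $T^\top(\cdots)^\dagger T = \widetilde{T}^\top(\widetilde{T}D(\theta)^{-1}\widetilde{T}^\top + \tfrac{\epsilon}{N}P)^\dagger \widetilde{T}$. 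Plugging this into \cref{eq:natural_gradient_estimator} with $\lambda=0$ yields \cref{eq:stable_estimator}.

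The main obstacle is the careful bookkeeping of pseudo-inverses when $CC^\top$ is singular (which is the generic regime here, since $M\le Nd$ may fail and, more importantly, $C$ can be rank-deficient). The projection identity $PY=Y$, established from $T=CB$, is precisely what allows the passage from $S$ to $S^\dagger$ to be algebraically consistent; the reduction to the invertible block of $S$ is the cleanest way to make this rigorous and avoid directly manipulating pseudo-inverses of a sum of singular PSD matrices.
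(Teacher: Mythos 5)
Your first three steps are exactly the route the paper intends (the paper gives no formal proof of this proposition, only the remark that $T=CB$ suggests pre-multiplying $T$ by $S^{\dagger}U^{\top}$), and they are correct: the chain rule gives $T=CB$ up to the $1/N$ scaling, hence $\mathrm{range}(T)\subset\mathrm{range}(CC^{\top})$, so $Y:=U^{\top}T$ satisfies $PY=Y$, and orthogonality of $U$ lets you pull the conjugation through the pseudo-inverse, reducing the claim to $Y^{\top}\bigl(YD(\theta)^{-1}Y^{\top}+\tfrac{\epsilon}{N}S\bigr)^{\dagger}Y=\widetilde{T}^{\top}\bigl(\widetilde{T}D(\theta)^{-1}\widetilde{T}^{\top}+\tfrac{\epsilon}{N}P\bigr)^{\dagger}\widetilde{T}$.

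The gap is in your last step. Order the coordinates so that $S=\diag(S_1,0)$ with $S_1\succ0$ of size $r$; then $Y=\begin{pmatrix}Y_1\\0\end{pmatrix}$ and the left-hand side equals $Y_1^{\top}\bigl(Y_1D(\theta)^{-1}Y_1^{\top}+\tfrac{\epsilon}{N}S_1\bigr)^{-1}Y_1$. Conjugating the inner block by $S_1^{-1}$ gives
\begin{equation}
S_1^{-1}\Bigl(Y_1D(\theta)^{-1}Y_1^{\top}+\tfrac{\epsilon}{N}S_1\Bigr)S_1^{-1}=\widetilde{T}_1D(\theta)^{-1}\widetilde{T}_1^{\top}+\tfrac{\epsilon}{N}S_1^{-1},\qquad \widetilde{T}_1:=S_1^{-1}Y_1,
\end{equation}
so the damping term becomes $\tfrac{\epsilon}{N}S^{\dagger}$, not $\tfrac{\epsilon}{N}P$: the phrase ``replacing $\tfrac{\epsilon}{N}S$ by $\tfrac{\epsilon}{N}P$'' is exactly where the chain of equalities breaks, and it is not a bookkeeping issue. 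With $\widetilde{T}=S^{\dagger}U^{\top}T$ and damping $\tfrac{\epsilon}{N}P$ the right-hand side equals $Y_1^{\top}\bigl(Y_1D(\theta)^{-1}Y_1^{\top}+\tfrac{\epsilon}{N}S_1^{2}\bigr)^{-1}Y_1$, which differs from the left-hand side unless $S_1=I_r$. A scalar check makes this concrete: for $M=1$, $D(\theta)=1$, $CC^{\top}=c^{2}$ and $T=t^{\top}\neq0$, the two quadratic forms are $tt^{\top}/(\Vert t\Vert^{2}+\tfrac{\epsilon}{N}c^{2})$ versus $tt^{\top}/(\Vert t\Vert^{2}+\tfrac{\epsilon}{N}c^{4})$. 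Your block reduction does prove two consistent variants, and either one makes your argument go through verbatim: keep $\widetilde{T}=S^{\dagger}U^{\top}T$ but use damping $\tfrac{\epsilon}{N}S^{\dagger}$, or keep the damping $\tfrac{\epsilon}{N}P$ but rescale by the inverse \emph{singular values of $C$}, i.e.\ $\widetilde{T}=(S^{1/2})^{\dagger}U^{\top}T$ (equivalently $\Sigma^{\dagger}U^{\top}T$ with $C=U\Sigma V^{\top}$). As written, with $S^{\dagger}$ and $P$ simultaneously, the identity you set out to prove fails in general, so the proposal cannot be completed without adjusting one of these two normalizations --- a mismatch worth flagging in the statement itself rather than papering over in the proof.
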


\paragraph{Choice of damping term.} So far, we only required $D(\theta)$ to be a diagonal matrix with positive coefficients. While a natural choice would be the identity matrix, this doesn't necessarily represent the best choice. As discussed by \citet[Section 8.2]{Martens:2012}, using the identity breaks the self-rescaling properties enjoyed by the natural gradient. Instead, we consider a scale-sensitive choice by setting  $(D(\theta))_i = \Vert \widetilde{T}_{.,i}\Vert $ where $\widetilde{T}$ is defined in \cref{prop:param_stable_estimator}. %
When the sample-size is limited, as it is often the case when $N$ is the size of a mini-batch, larger values for $\epsilon$ might be required. That is to prevent the KWNG from over-estimating the step-size in low curvature directions. Indeed,  these directions are rescaled by the inverse of the smallest eigenvalues of the information matrix which are harder to estimate accurately. To adjust $\epsilon$ dynamically during training, we use a variant of the Levenberg-Marquardt heuristic as in \cite{Martens:2012} which seems to perform well in practice; see \cref{sec:Experiments}.

\paragraph{Computational cost.}  The number of basis points $M$ controls the computational cost of both \cref{eq:natural_gradient_estimator,eq:stable_estimator} which is dominated by the cost of computing $T$ and $C$, solving an $M\times M$ linear system and performing an SVD of $CC^{T}$  in the case of \cref{eq:stable_estimator}. This gives an overall cost of $O( dNM^2 + qM^2 + M^3)$. In practice, $M$ can be chosen to be small ($M \leq 20$) while $N$ corresponds to the number of samples in a mini-batch. Hence, in a typical deep learning model, most of the computational cost is due to computing $T$ as the typical number of parameters $q$ is of the order of millions.  In fact, $T$ can be computed using automatic differentiation and would require performing $M$ backward passes on the model to compute the gradient for each component of $\tau$. Overall, the proposed estimator can be efficiently implemented and used for typical deep learning problems as shown in \cref{sec:Experiments}. 
\paragraph{Choice of the kernel.}  We found that using either a gaussian kernel or a rational quadratic kernel to work well in practice. We also propose a simple heuristic to adapt the bandwidth of those kernels to the data by setting it to $\sigma = \sigma_0 \sigma_{N,M}$, where  $\sigma_{N,M}$ is equal to the average square distance between samples $(X_n)_{1\leq n \leq N}$ and the basis points $(Y_m)_{1\leq m\leq M}$ and $\sigma_0$ is fixed a priori. Another choice is the median heuristic \cite{Garreau:2018}.

\subsection{Theory}\label{subsec:theory}
In this section we are interested in the behavior of the estimator in the limit of large $N$ and $M$ and when $\lambda>0$; we leave the case when $\lambda=0$ for future work. We work under \cref{assump:support,assump:differentiability,assump:integrability,assump:estimator_euclidean,assump:bounded_derivatives,assump:noise_condition,assump:linear_growth} in \cref{subsec:assumptions} which state
that $\Omega$ is a non-empty subset, $k$ is continuously twice differentiable with bounded second derivatives, $\nabla h_{\theta}(z)$ has at most a linear growth in $z$ and $\nu$ satisfies some standard moments conditions. Finally, we assume that the estimator of the euclidean gradient $\widehat{\nabla \mathcal{L}(\theta) }$ satisfies Chebychev's concentration inequality which is often the case in Machine learning problem as discussed in \cref{remark_chebychev} of \cref{subsec:assumptions}.
We distinguish two cases: the \textit{well-specified} case and the \textit{miss-specified} case. In the \textit{well-specified} case, the vector valued functions $(\phi_i)_{1\leq i\leq q}$ involved in \cref{def:wasserestin_matrix} are assumed to be gradients of functions in $\mathcal{H}$ and their smoothness is controlled by some parameter $\alpha\geq0$ with worst case being $\alpha=0$. Under this assumption,  we obtain smoothness dependent convergence rates as shown in \cref{thm:consistency_estimator_well_specified} of \cref{subsec:consistenc_res} using techniques from \cite{Rudi:2015,Sutherland:2017}. Here, we will only focus on the \textit{miss-specified} which relies on a weaker assumption:
\begin{assumption}\label{assumption:miss_specified}
	There exists two constants $C>0$ and $c\geq0$ such that for all $\kappa>0$ and all $1\leq i\leq q$, there is a function $f_{i}^{\kappa}$ satisfying:
	\begin{align}\label{eq:miss_specified}
		\Vert \phi_i  - \nabla f_i^{\kappa}\Vert_{L_2(\rho_{\theta})} \leq C\kappa,\qquad \Vert f_i^{\kappa}\Vert_{\mathcal{H}} \leq C \kappa^{-c} .
	\end{align}
\end{assumption}
The left inequality in \cref{eq:miss_specified} represents the accuracy of the approximation of $\phi_i$ by gradients of functions in $\mathcal{H}$ while the right inequality represents the complexity of such approximation. Thus, the parameter $c$ characterizes the difficulty of the problem: a higher value of $c$ means that a more accurate approximation of $\phi_i$ comes at a higher cost in terms of its complexity. \cref{thm:consistency_estimator_miss_specified} provides convergences rates for the estimator in \cref{prop:param_lite_estimator} under \cref{assumption:miss_specified}:
\begin{theorem}\label{thm:consistency_estimator_miss_specified} 
Let $\delta$ be such that $0\leq \delta \leq 1$ and $b := \frac{1}{2+c}$. Under \cref{assumption:miss_specified} and  \cref{assump:support,assump:differentiability,assump:integrability,assump:estimator_euclidean,assump:bounded_derivatives,assump:noise_condition,assump:linear_growth} listed in \cref{subsec:assumptions}, for $N$ large enough, $M\sim (dN^{\frac{1}{2b+1}}\log(N))$,  $\lambda \sim N^{\frac{1}{2b+1}}$ and $\epsilon \lesssim N^{-\frac{b}{2b +1}}$, it holds with probability at least $1-\delta$ that:
\begin{align}
		\Vert \widehat{\nabla^{W}\mathcal{L}(\theta)} - \nabla^{W}\mathcal{L}(\theta) \Vert^2=\mathcal{O}\Big(N^{- \frac{2}{4+c}}\Big) . 
	\end{align}
\end{theorem}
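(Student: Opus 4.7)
The plan is to adapt the operator-theoretic analysis of kernel regularized estimators under a source condition (in the vein of \cite{Rudi:2015,Sutherland:2017}) to the vector-valued natural gradient problem of \cref{prop:param_lite_estimator}. I would decompose the error into four pieces by inserting three intermediate quantities between the population WNG $\nabla^{W}\mathcal{L}(\theta)$ and the fully empirical Nyström estimator $\widehat{\nabla^{W}\mathcal{L}(\theta)}$: (i) the population KWNG $\widetilde{\nabla}^{W}\mathcal{L}(\theta)$ from \cref{prop:natural_grad}, which introduces the bias from restricting the dual supremum to $\mathcal{H}$; (ii) its Nyström-projected analog $\widetilde{\nabla}^{W}_{M}\mathcal{L}(\theta)$ obtained by restricting $f$ to $\mathcal{H}_{M}$ but keeping expectations under $\rho_{\theta}$; (iii) the plug-in version $\widetilde{\nabla}^{W}_{M,N}\mathcal{L}(\theta)$ which uses the empirical $N$-sample expectation but the exact Euclidean gradient. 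The triangle inequality then reduces the problem to bounding four terms: an RKHS approximation bias, a Nyström compression error, a finite-sample concentration term, and an Euclidean-gradient estimation term controlled by Chebychev as per \cref{assump:estimator_euclidean}.

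For the RKHS approximation bias I would exploit \cref{assumption:miss_specified}: since $\phi_{i}\approx \nabla f_{i}^{\kappa}$ in $L_{2}(\rho_{\theta})$ with error $C\kappa$ and $\|f_{i}^{\kappa}\|_{\mathcal{H}}\leq C\kappa^{-c}$, plugging $\sum_{i}u_{i}f_{i}^{\kappa}$ as a competitor in \cref{eq:rkhs_problem} splits the bias into a quadratic-in-$\kappa$ component and a $\lambda\kappa^{-2c}$ regularization component. Converting this to an error on $u^{\top}G_{W}(\theta)u$ and then, through the KKT characterization $\mathcal{U}_{\theta}(f^{*})=\epsilon D(\theta)\widetilde{\nabla}^{W}\mathcal{L}(\theta)$, to an error on the estimator itself, one obtains a bias bound of the form $\kappa^{2}+\lambda\kappa^{-2c}$, optimized by $\kappa\sim\lambda^{1/(2+2c)}$.

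For the remaining two terms I would rewrite \cref{eq:natural_gradient_estimator} as a ridge-regularized solution involving the covariance operator $\Sigma_{\theta}=\mathbb{E}_{\rho_{\theta}}[\nabla k(X,\cdot)\otimes \nabla k(X,\cdot)]$ on $\mathcal{H}$, its empirical counterpart $\hat{\Sigma}_{N}$, and its Nyström approximation $\hat{\Sigma}_{N,M}$. Under the boundedness of second derivatives of $k$ (\cref{assump:bounded_derivatives}) and the linear-growth and moment conditions (\cref{assump:noise_condition,assump:linear_growth}), the relevant random operators are bounded and admit Bernstein-type concentration, yielding
\begin{equation*}
\bigl\|(\Sigma_{\theta}+\lambda I)^{-1/2}(\hat{\Sigma}_{N}-\Sigma_{\theta})(\Sigma_{\theta}+\lambda I)^{-1/2}\bigr\|=\widetilde{O}\bigl(1/\sqrt{N\lambda}\bigr)
\end{equation*}
with probability $1-\delta$, and analogously for the Nyström error. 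Combined with the $1/\epsilon$ scaling in $\widehat{\nabla^{W}\mathcal{L}(\theta)}$, this produces the variance contribution $(\epsilon\lambda)^{-1}\cdot(1/N + 1/M)$ to the squared error.

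The main obstacle, and the point requiring genuine new work beyond \cite{Rudi:2015,Sutherland:2017}, is the novel Nyström construction: instead of the full basis $\{\partial_{i}k(Y_{m},\cdot)\}_{i=1}^{d}$ used in prior work on score estimation, we keep only one uniformly sampled component index $i_{m}$ per basis point. This introduces an extra layer of randomness whose effect must be tracked with a second Bernstein bound; the compressed projector concentrates to the full projector at rate $d/M$, which is exactly why the theorem requires $M\sim dN^{1/(2b+1)}\log N$. Once the bias $\lambda^{1/(1+c)}$, the variance $1/(\epsilon^{2}\lambda N)$, the Nyström $d/(\epsilon^{2}\lambda M)$, and the Euclidean-gradient error $1/(\epsilon^{2}N)$ are balanced using $b=1/(2+c)$, $\lambda\sim N^{1/(2b+1)}$, $\epsilon\sim N^{-b/(2b+1)}$, and $M\sim dN^{1/(2b+1)}\log N$, all four contributions collapse to the claimed rate $\mathcal{O}(N^{-2/(4+c)})$.
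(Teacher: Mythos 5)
Your overall plan (RKHS bias via the competitor $f^{\kappa}$, Nystr\"om compression with the extra randomness of the single sampled coordinate $i_m$ and an $M\gtrsim d\lambda^{-1}\log(\cdot)$ requirement, Bernstein-type operator concentration at rate $(N\lambda)^{-1/2}$, and a separate Chebychev term for $\widehat{\nabla\mathcal L(\theta)}$) matches the ingredients the paper uses in \cref{prop:error_metric,prop:estimation_error,prop:approximation_error,lemm:bound_projector}. The genuine gap is in how you treat $\epsilon$. You read the prefactor $1/\epsilon$ in the definition of the estimator as a real amplification of the statistical error and arrive at contributions of the form $(\epsilon^{2}\lambda N)^{-1}$, $d(\epsilon^{2}\lambda M)^{-1}$ and $(\epsilon^{2}N)^{-1}$ to the squared error. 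These do not ``collapse to the claimed rate'' under the stated scalings: with $\epsilon\sim N^{-b/(2b+1)}$ and $\lambda\sim N^{-1/(2b+1)}$ the term $(\epsilon^{2}\lambda N)^{-1}$ is $\Theta(1)$ and $d(\epsilon^{2}\lambda M)^{-1}\sim N^{2b/(2b+1)}/\log N$ diverges, so the proof as outlined cannot conclude. Moreover, if the error truly degraded like $1/\epsilon^{2}$, the theorem's one-sided condition $\epsilon\lesssim N^{-b/(2b+1)}$ would be the wrong way around: one would need a \emph{lower} bound on $\epsilon$ (and the experiments with $\epsilon=10^{-10}$ would be meaningless).

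The missing idea is that the $1/\epsilon$ cancels. By \cref{lemm:simpler_expression_estimator}, $\widehat{\nabla^{W}\mathcal L(\theta)}=(\epsilon D(\theta)+G_{M,N})^{-1}\widehat{\nabla\mathcal L(\theta)}$, where $G_{M,N}$ is a $q\times q$ estimator of the Wasserstein information matrix; since $G_{W}(\theta)$ is positive definite with smallest eigenvalue $\eta>0$, once $\Vert G_{M,N}-G_{W}\Vert$ is small the matrix $(\epsilon D(\theta)+G_{M,N})^{-1}$ is bounded uniformly in $\epsilon$, and $\epsilon$ enters only through an additive damping bias of order $\epsilon$ --- which is exactly why only an upper bound on $\epsilon$ is required. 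The paper therefore reduces the whole problem to the matrix error $\Vert G_{M,N}-G_{W}\Vert=\mathcal O(N^{-b/(2b+1)})$ (\cref{prop:error_metric}: estimation part $\mathcal O((N\lambda)^{-1/2})$ plus approximation part $\mathcal O(\lambda^{b})$) and then applies the three-term decomposition in the proof of \cref{thm:concistency_estimator}; your estimator-level decomposition could be made to work, but only after this resolvent-stability step, which your write-up omits (it is also needed to convert your metric-level bias into an estimator error). A secondary inaccuracy: your competitor argument gives $\kappa^{2}+\lambda\kappa^{-2c}$ only for the population kernelized metric; after the Nystr\"om projection the cross terms no longer cancel, and \cref{prop:approximation_error} only obtains $\kappa^{2}+\kappa^{-c}\sqrt{\lambda}$, whose optimum $\lambda^{1/(2+c)}$ is precisely what fixes $b=\frac{1}{2+c}$ and hence the exponent $\frac{2}{4+c}$; your $\lambda^{1/(1+c)}$ bias ignores these terms, and your final arithmetic only appears to land on the stated rate because of the erroneous $\epsilon$-dependent variance terms.
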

A proof of \cref{thm:consistency_estimator_miss_specified} is provided in \cref{proof:thm:consistency_estimator}. In the best case where $c = 0$, we recover a convergence rate of $\frac{1}{\sqrt{N}}$ as in the \textit{well specified} case for the worst smoothness parameter value $\alpha=0$. Hence, \cref{thm:consistency_estimator_miss_specified} is a consistent extension of the \textit{well-specified} case. For harder problems where $c>0$ more basis points are needed, with $M$ required to be of order $dN\log(N) $ in the limit when $c\rightarrow  \infty $ in which case the Nystr\"{o}m approximation loses its computational advantage.
\section{Experiments} \label{sec:Experiments}
This section presents an empirical evaluation of (KWNG) based on \cref{eq:stable_estimator}. Code for the experiments is available at \httpsurl{https://github.com/MichaelArbel/KWNG}.
\subsection{Convergence on Synthetic Models}\label{subsec:gaussian_example}
To empirically assess the accuracy of KWNG, we consider three choices for the parametric model $\mathcal{P}_{\Theta}$: the multivariate normal model, the multivariate log-normal model and uniform distributions on hyper-spheres. All have the advantage that the WNG can be computed in closed form \citep{Chen:2018a,Malago:2018}. While the first models admit a density, the third one doesn't, hence the Fisher natural gradient is not defined in this case.
While this choice of models is essential to obtain closed form expressions for WNG, the proposed estimator is agnostic to such choice of family. We also assume we have access to the exact Euclidean Gradient (EG) which is used to compute both of WNG and KWNG. 

\cref{fig:relative_error} shows the evolution of the the relative error w.r.t.\  the sample-size $N$,  the number of basis points $M$ and the dimension $d$ in the case of the hyper-sphere model. As expected from the consistency results provided in \cref{subsec:theory}, the relative error decreases as the samples size $N$ increases. The behavior in the number of basis points $M$ shows a clear threshold beyond which the estimator becomes consistent and where increasing $M$ doesn't decrease the relative error anymore. This threshold increases with the dimension $d$ as discussed in \cref{subsec:theory}. In practice, using the rule   $M = \floor*{d\sqrt{N}}$  seems to be a good heuristic as shown in \cref{fig:relative_error} (a). All these observations persist in the case of the normal and log-normal model as shown in \cref{fig:relative_error_log_normal} of \cref{subsec:multivariate_gaussian}. In addition we report in \cref{fig:sensitivity} the sensitivity to the choice of the bandwidth $\sigma$ which shows a robustness of the estimator to a wide choice of $\sigma$.
\begin{figure}
\center
\includegraphics[width=1.\linewidth]{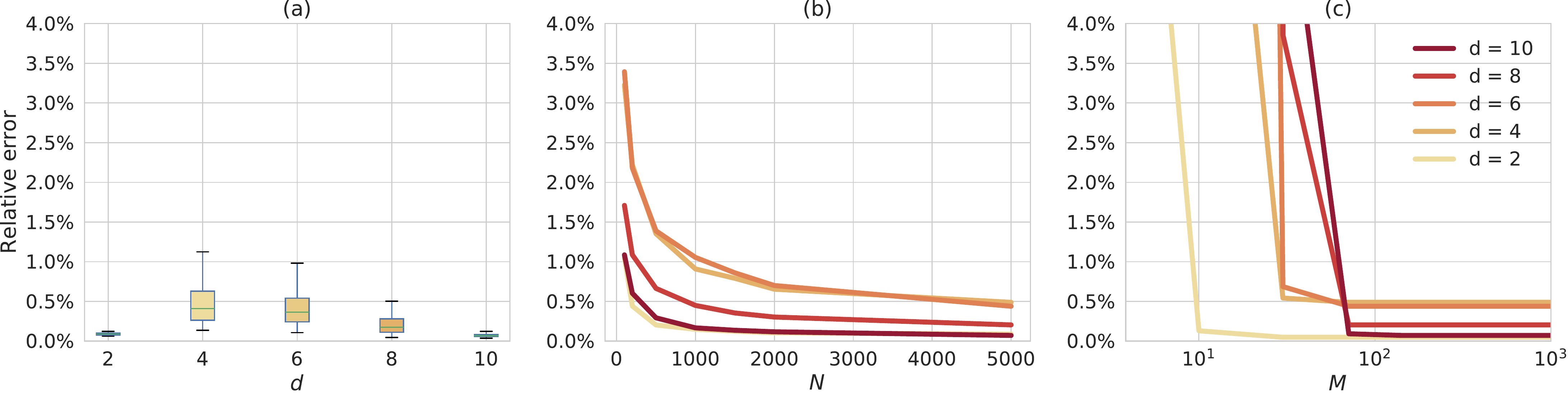}
\caption{Relative error of KWNG  averaged over $100$ runs for varying dimension form $d=1$ (yellow) to $d=10$ (dark red) for the hyper-sphere model.
(a): box-plot of the relative error as $d$ increases while $N=5000$ and $M = \floor*{d\sqrt{N}}$. (b) Relative error as the sample size $N$ increases and  $M = \floor*{d\sqrt{N}}$. (c): Relative error as $M$ increases and $N=5000$. A gaussian kernel is used with a fixed bandwidth $\sigma = 1.$ }
\label{fig:relative_error}
\end{figure}
We also compare the optimization trajectory obtained using KWNG with the trajectories of both the exact WNG and EG in a simple setting: $\mathcal{P}_{\Theta}$ is the multivariate normal family and the loss function $\mathcal{L}(\theta)$ is the squared Wasserstein 2 distance between $\rho_{\theta}$ and a fixed target distribution $\rho_{\theta^*}$. %
 \cref{fig:gaussian_flow} (a), shows the evolution of the loss function at every iteration. There is a clear advantage of using the WNG over EG as larger step-sizes are allowed leading to faster convergence. Moreover, KWNG maintains this properties while being agnostic to the choice of the model.
\cref{fig:gaussian_flow} (b) shows the projected dynamics of the three methods along the two PCA directions of the WNG trajectory with highest variance. The dynamics of WNG seems to be well approximated by the one obtained using KWNG. 
\begin{figure}
\center
\includegraphics[width=.7\linewidth]{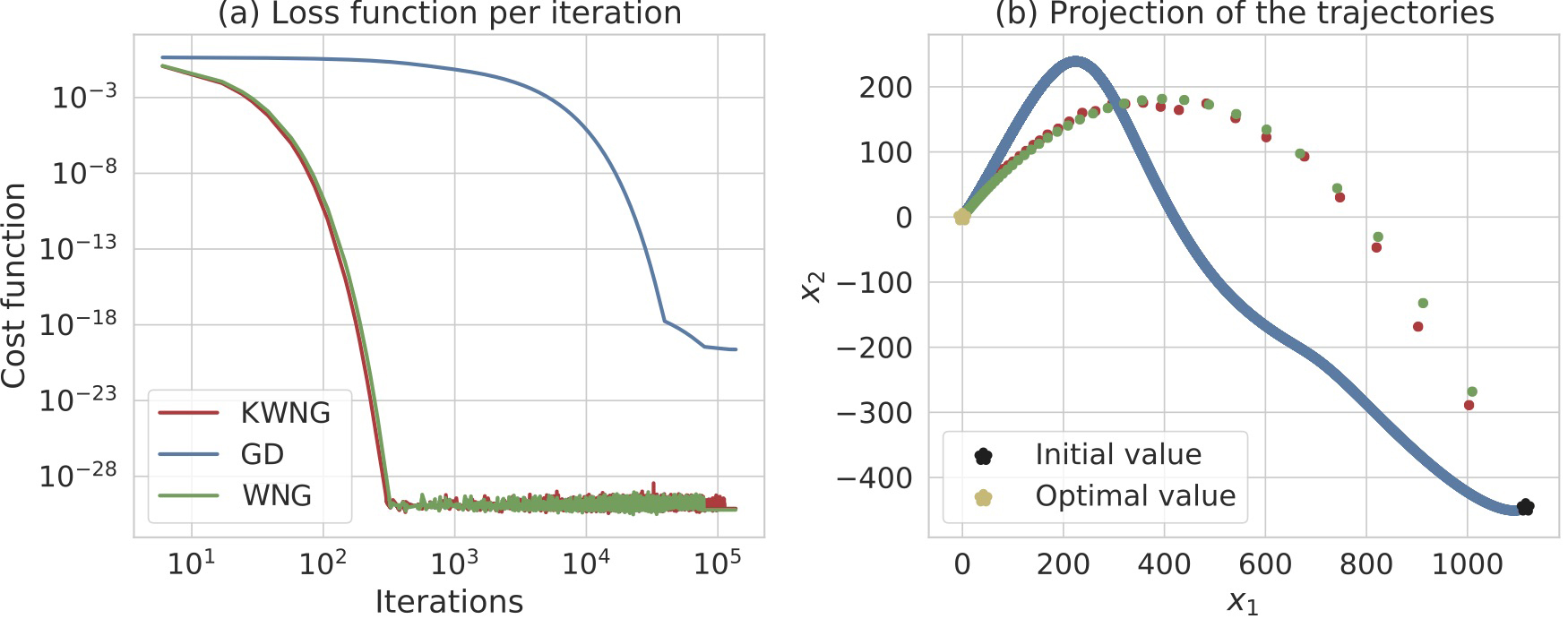}
\caption{Left (a): Training error per iteration for KWNG, WNG, and EG.
Right (b):  projection of the sequence of updates obtained using KWNG, WNG and EG along the first two PCA directions of the  WNG trajectory.
The dimension of the sample space is fixed to $d=10$. Exact valued for the gradient are used for EG and WNG. For KWNG, $N=128$ samples and $M=100$ basis points are used.  The regularization parameters are set to: $\lambda = 0$ and $\epsilon=10^{-10}$. An optimal step-size $\gamma_t$ is used:  $\gamma_t =0.1$ for both KWNG and WNG while $\gamma_t=0.0001$ for EG.
}
\label{fig:gaussian_flow}
\end{figure}
\subsection{Approximate Invariance to Parametrization}\label{subsec:cifar10}
We illustrate now the approximate invariance to parametrization of the KWNG and show its benefits for  training deep neural networks when the model is ill-conditioned.
We consider a classification task on two datasets \texttt{Cifar10} and \texttt{Cifar100} with a Residual Network \cite{He:2015}. To use the KWNG estimator, we view the input RGB image as a latent variable $z$ with probability distribution $\nu$ and the output logits of the network $x := h_{\theta}(z)$ as a sample from the model distribution $\rho_{\theta}\in \mathcal{P}_{\Theta}$ where $\theta$ denotes the weights of the network.
The loss function $\mathcal{L}$ is given by:
\[
\mathcal{L}(\theta) := \int y(z)^{\top} \log(SM(U h_{\theta}(z)))   \diff \nu(z),
\] 
where $SM$ is the Softmax function,  $y(z)$ denotes the one-hot vector representing the class of the image $z$ and $U$ is a fixed invertible diagonal matrix which controls how well the model is conditioned.      
We consider two cases, the \textit{Well-conditioned} case (WC) in which $U$ is the identity and the \textit{Ill-conditioned} case (IC) where $U$ is chosen to have a condition number equal to $10^{7}$. We compare the performance of the proposed method with several variants of  SGD: plain SGD, SGD + Momentum, and  SGD + Momentum + Weight decay. We also compare with Adam \cite{Kingma:2014}, KFAC optimizer \citep{Martens:2015,Grosse:2016a} and eKFAC \citep{George:2018a} which implements a fast approximation of the empirical Fisher Natural Gradient.  We emphasize that gradient clipping by norm was used for all experiments and was crucial for a stable optimization using KWNG. Details of the experiments are provided in \cref{subsec:experimental_details}.
\cref{fig:cifar10_classification} shows the training and test accuracy at each epoch on \texttt{Cifar10} in both (WC) and (IC) cases. While all methods achieve a similar test accuracy in the (WC) case on both datasets, methods based on the Euclidean gradient seem to suffer a drastic drop in performance in the (IC) case. This doesn't happen for KWNG (red line) which achieves a similar test accuracy as in (WC) case. Moreover, a speed-up in convergence in number of iterations can be obtained by increasing the number of basis points $M$ (brown line). The time cost is also in favor of KWNG (\cref{fig:cifar10_timing}).
On \texttt{Cifar100}, KWNG is also less affected by the ill-conditioning, albeit to a lower extent. Indeed, the larger number of classes in \texttt{Cifar100} makes the estimation of KWNG harder as discussed in \cref{subsec:gaussian_example}. In this case, increasing the batch-size can substantially improve the training accuracy (pink line).
Moreover, methods that are used to improve optimization using the Euclidean gradient can also be used for KWNG. For instance, using Momentum leads to an improved performance in the (WC) case (grey line). 
Interestingly, KFAC seems to also suffer a drop in performance in the (IC) case. This might result from the use of an isotropic damping term $D(\theta) = I$ which would be harmful in this case. We also observe a drop in performance when a different choice of damping is used for KWNG. More importantly, using only a diagonal pre-conditioning of the gradient doesn't match the performance of KWNG (\cref{fig:cifar10_ablation}).
\begin{figure}
\center
\includegraphics[width=1.\linewidth]{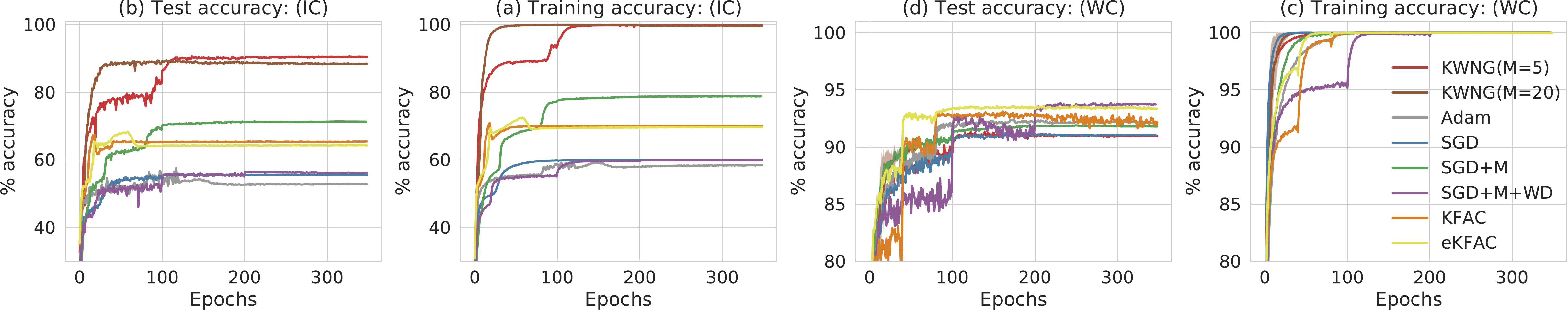}
\includegraphics[width=1.\linewidth]{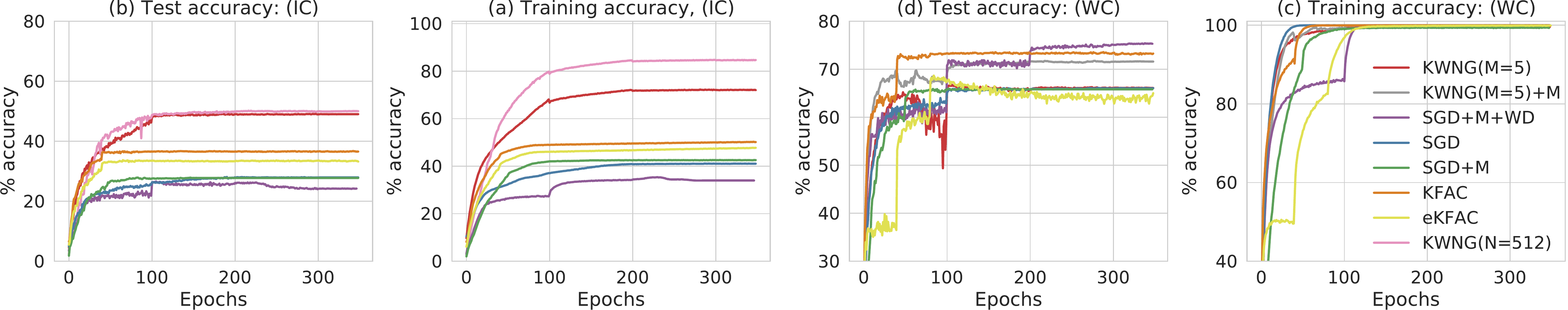}
\caption{Test accuracy and Training accuracy for classification on \texttt{Cifar10} (top)  and \texttt{Cifar100} (bottom) in both the ill-conditioned case (left side) and well-conditioned case (right side) for different optimization methods. on \texttt{Cifar10} Results are averaged over 5 independent runs except for KFAC and eKFAC.}
\label{fig:cifar10_classification}
\end{figure}
\subsection*{Acknowledgement} 
GM has received funding from the European Research Council (ERC) under the European Union's Horizon 2020 research and innovation programme (grant agreement n\textsuperscript{o} 757983). 
\subsubsection*{References}
\renewcommand\refname{\vskip -1cm}
\bibliographystyle{apalike}
\bibliography{bibliography}

\clearpage

\appendix

\section{Preliminaries}\label{appendix:sec:preliminaries}

\subsection{Notation}
We recall that $\Omega$ is an open subset of $\R^d$ while $\Theta$ is an open subset of parameters in $\R^q$.
Let $\mathcal{Z}\subset \R^p$ be a latent space  endowed with a probability distribution  $\nu$ over $\mathcal{Z}$. Additionally, $(\theta, z)\mapsto h_{\theta}(z)\in \Omega$ is a  function defined over $\Theta\times \mathcal{Z}$.
 We consider a parametric set of probability distributions $\mathcal{P}_{\Theta}$ over  $\Omega$ defined as the implicit model:
\begin{align}
	\mathcal{P}_{\Theta}:=\{ \rho_{\theta } := (h_{\theta})_{\#}\nu  \quad; \quad  \theta \in \Theta \},
\end{align}
where by definition, $\rho_{\theta } = (h_{\theta})_{\#}\nu$ means that any sample $x$ from $\rho_{\theta}$ can be written as   $ x = h_{\theta}(z) $ where $z $ is a sample from $\nu$.
We will write $B$ to denote the jacobian of $h_{\theta}$ w.r.t. $\theta$ viewed as a linear map from $\R^{q}$ to $L_2(\nu)^d$ without explicit reference to $\theta$:
\begin{align}\label{eq:jacobian}
	Bu(z) = \nabla h_{\theta}(z).u; \qquad \forall u\in \R^q.
\end{align}
As in the main text, $\mathcal{L} : \Theta \rightarrow \R$ is a loss functions which is assumed to be of the form $\mathcal{L} = \mathcal{F}(\rho_{\theta})$, with $\mathcal{F}$ being a real valued functional over the set of probability distributions. $\nabla\mathcal{L}(\theta)$ denotes the euclidean gradient of $\mathcal{L}$ w.r.t~$\theta$ while $\widehat{\nabla\mathcal{L}(\theta)}$ is an estimator of $\nabla\mathcal{L}(\theta)$ using $N$ samples from  $\rho_{\theta}$.

We also consider a Reproducing Kernel Hilbert Space  $\mathcal{H}$ of functions defined over $\Omega$ with inner product $\langle .,.\rangle_{\mathcal{H}}$ and norm $\Vert .\Vert_{\mathcal{H}}$ and with a kernel $k: \Omega \times \Omega\rightarrow \R $. The reproducing property for the derivatives \cite[Lemma 4.34]{Steinwart:2008a} will be important: $\partial_i f(x) = \langle f, \partial_i k(x,.)\rangle_{\mathcal{H}}$ for all $x\in \Omega$. It holds as long as $k$ is differentiable. 

$C^{\infty}_b(\Omega)$ denotes the space of smooth bounded  real valued functions on $\Omega$, and $C^{\infty}_c(\Omega)\subset C^{\infty}_b(\Omega)$ denotes the subset of compactly supported functions. For any measured space $\mathcal{Z}$ with probability distribution $\nu$, we denote by $L_2(\nu)$ the space of real valued and square integrable functions under $\nu$ and by $L_2(\nu)^d$ the space of square integrable vector valued functions under $\nu$ and with values in $\R^d$.

\subsection{Assumptions}\label{subsec:assumptions}
We make the following
set of assumptions:

\begin{assumplist}
\item \label{assump:support} $\Omega$ is a non-empty open subset of $\mathbb{R}^d$.
\item \label{assump:noise_condition}  
 There exists positive constants $\zeta$ and $\sigma$ such that $ \int \Vert z \Vert^{p}\diff \nu(z) \leq \frac{1}{2}p!\zeta^{p-2}\sigma^2$ for any $p\geq 2$.
\item \label{assump:linear_growth} 
For all $\theta \in \Theta$ there exists $C(\theta)$ such that $ \Vert \nabla_{\theta}h_{\theta}(z) \Vert \leq C(\theta)(1+\Vert z \Vert )$ for all $z\in \mathcal{Z}$.
\item \label{assump:differentiability} $k$ is twice continuously differentiable on $\Omega\times\Omega$.
\item \label{assump:integrability}  For all $\theta \in \Theta$ it holds that $\int \partial_i\partial_{i+d}k(x,x)\diff p_{\theta}(x) <\infty$ for all $1 \leq i\leq d $.
\item \label{assump:bounded_derivatives}  The following quantity is finite: $\kappa^2 = \sup_{\substack{x\in \Omega\\1\leq i\leq q}} \partial_i\partial_{i+q}k(x,x)$.
\item \label{assump:estimator_euclidean}  For all $0\leq \delta\leq 1 $, it holds with probability at least $1-\delta$ that  $\Vert \widehat{\nabla\mathcal{L}(\theta)} - \nabla\mathcal{L}(\theta) \Vert \lesssim N^{-\frac{1}{2}}$.

\end{assumplist}

\begin{remark}\label{remark_chebychev}
	\cref{assump:estimator_euclidean} holds if for instance $\widehat{\nabla \mathcal{L}(\theta)}$ can be written as an empirical mean of i.i.d. terms with finite variance:
	\[
	\widehat{\nabla \mathcal{L}(\theta)} = \frac{1}{N} \sum_{i=1}^{N} \nabla_{\theta}l(h_{\theta}(Z_i)) 
	\]
	where $Z_i$ are i.i.d. samples from the latent distribution $\nu$ where $\int \nabla_{\theta}l(h_{\theta}(z))\diff \nu(z)=  \mathcal{L}(\theta)$. This is often the case in the problems considered in machine-learning. 
In, this case, the sum of variances  of the vector $\widehat{\nabla \mathcal{L}(\theta)}$ along its coordinates satisfies:
\[
\int \Vert\widehat{\nabla \mathcal{L}(\theta)} - \nabla \mathcal{L}(\theta) \Vert^2 \diff \nu(z) = \frac{1}{N} \int \Vert \nabla_{\theta} l(h_{\theta}(z))\Vert^2\diff\nu(z):= \frac{1}{N}\sigma^2
\]
One can then conclude using Cauchy-Schwarz inequality followed by Chebychev's inequality that with probability $1-\delta$:
\begin{align}
	\Vert \widehat{\nabla\mathcal{L}(\theta)} - \nabla\mathcal{L}(\theta)\Vert\leq \frac{\sigma}{\sqrt{\delta N}}
\end{align}

Moreover, \cref{assump:linear_growth} is often satisfied when the implicit model is chosen to be a deep networks with ReLU non-linearity.
\end{remark}

\subsection{Operators definition}

\paragraph{Differential operators.}
We introduce the linear $L$ operator and its adjoint $L^{\top}$:
\begin{align}\label{eq:grad_operator}
	L: &\mathcal{H}\rightarrow L_2(\nu)^{d}  \qquad &L^{\top} : & L_2(\nu)^{d} \rightarrow \mathcal{H}\\
	&f \mapsto  (\partial_i f\circ h_{\theta})_{1\leq i \leq d} &&v \mapsto  \int \sum_{i=1}^d \partial_i k(h_{\theta}(z),.)v_i(z)\diff \nu(z)
\end{align}
This allows to obtain the linear operator $A$ defined in  \cref{assumption:well_specified} in the main text by composition $A:= L^{\top}L$. We recall here another expression for $A$ in terms of outer product $\otimes$ and its regularized version for a given $\lambda>0$,
\begin{align}\label{appendix:eq:differential_covariance_operator}
	A =  \int \sum_{i=1}^d \partial_i k(h_{\theta}(z),.) \otimes \partial_i k(h_{\theta}(z),.)\diff \nu(z)\qquad A_{\lambda}:= A + \lambda I .
\end{align}
It is easy to see that $A$ is a symmetric positive operator. Moreover, it was established in \cite{Sriperumbudur:2013} that $A$ is also a compact operator under \cref{assump:integrability}.

Assume now we have access to $N$ samples $(Z_{n})_{1\leq n\leq N}$ as in the main text. We define the following objects:
\begin{align}
	\hat{A} := \frac{1}{N} \sum_{n=1}^N \sum_{i=1}^d \partial_i k(h_{\theta}(Z_n),.) \otimes \partial_i k(h_{\theta}(Z_n),.),\qquad  \hat{A}_{\lambda} := \hat{A} + \lambda I.
\end{align}
Furthermore, if $v$ is a continuous function in $L_2(\nu)^{d}$, then we can also consider an empirical estimator for $L^{\top}v$:
\begin{align}
	\widehat{L^{\top}v} := \frac{1}{N}  \sum_{n=1}^N \sum_{i=1}^d \partial_i k(h_{\theta}(Z_n),.) v_i(Z_n) . 
\end{align}  

\paragraph{Subsampling operators.}
We consider the operator $\nystromop$ defined from $\mathcal{H}$ to $\R^{M}$ by:
\begin{align}
\label{eq:subsampling_op}
	\nystromop := \frac{\sqrt{q}}{\sqrt{M}} \sum_{m=1}^M e_m\otimes \partial_{i_m}k(Y_m,.)
\end{align}
where $(e_m)_{1\leq m\leq M}$ is an orthonormal basis of $\R^{M}$. $\nystromop$ admits a singular value decomposition of the form $\nystromop = U\Sigma V^{\top}$, with $VV^{\top}:=P_M$ being the orthogonal projection operator on the Nystr\"{o}m  subspace $\mathcal{H}_{M}$. Similarly to \cite{Rudi:2015,Sutherland:2017}, we define the projected inverse function $\projop(C)$ as:
\begin{align}\label{eq:projected_inverse}
\projop(C) = V(V^{\top}CV)^{-1}V^{\top}.
\end{align}
We recall here some properties of $\projop$  from \cite[Lemma 1]{Sutherland:2017}:
\begin{lemma}\label{lem:prop_proj_inverse}
	Let $A:\mathcal{H}\rightarrow \mathcal{H}$ be a  positive operator, and define $A_{\lambda} = A+\lambda I$ for any $\lambda >0$. The following holds:
	\begin{enumerate}
		\item $\projop(A)P_M = \projop(A)$
		\item $P_M \projop(A) = \projop(A)$
		\item $\projop(A_{\lambda})A_{\lambda}P_M = P_M$
		\item $\projop(A_{\lambda}) = (P_M AP_M + \lambda I)^{-1}P_M$
		\item $\Vert A_{\lambda}^{\frac{1}{2}} \projop(A_{\lambda}) A_{\lambda}^{\frac{1}{2}} \Vert$
	\end{enumerate}
\end{lemma}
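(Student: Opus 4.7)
The plan is to proceed directly from the definition $\projop(C) = V(V^{\top}CV)^{-1}V^{\top}$ together with one structural fact about the SVD $\nystromop = U\Sigma V^{\top}$: because $P_M = VV^{\top}$ is an orthogonal projection, the (reduced) right singular vectors of $\nystromop$ are orthonormal, so $V$ is a partial isometry satisfying $V^{\top}V = I_M$ and $VV^{\top} = P_M$. With this in hand, almost every item reduces to collapsing a factor $V^{\top}V$ into the identity.

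Items (1) and (2) are then immediate from associativity: for instance
$\projop(A)P_M = V(V^{\top}AV)^{-1}V^{\top}(VV^{\top}) = V(V^{\top}AV)^{-1}(V^{\top}V)V^{\top} = \projop(A)$, and symmetrically for (2). Item (3) follows by the same mechanism, with the middle factor telescoping:
$\projop(A_{\lambda})A_{\lambda}P_M = V(V^{\top}A_{\lambda}V)^{-1}(V^{\top}A_{\lambda}V)V^{\top} = VV^{\top} = P_M$, where invertibility of $V^{\top}A_{\lambda}V$ is guaranteed by $\lambda>0$ and $V^{\top}V = I$.

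Item (4) is the conceptual heart of the lemma. The plan is to split $\mathcal{H} = \mathrm{range}(P_M)\oplus \ker(P_M)$ and analyze the action of $\mathcal{M}:=P_M A P_M + \lambda I$ on each piece. On $\ker(P_M)$ one has $\mathcal{M} = \lambda I$, while on $\mathrm{range}(P_M)$, writing any element as $Vw$ with $w \in \R^M$, one computes $\mathcal{M} Vw = V(V^{\top}AV + \lambda I)w$, using $V^{\top}V = I$ to pull the inner $V^{\top}V$ through. Thus $\mathcal{M}^{-1}Vw = V(V^{\top}A_{\lambda}V)^{-1}w$. Post-multiplying $\mathcal{M}^{-1}$ by $P_M = VV^{\top}$ annihilates the $\ker(P_M)$-component of any input and applies the previous identity on the rest, so $\mathcal{M}^{-1}P_M = V(V^{\top}A_{\lambda}V)^{-1}V^{\top} = \projop(A_{\lambda})$.

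Item (5), which I read as the standard bound $\|A_{\lambda}^{1/2}\projop(A_{\lambda})A_{\lambda}^{1/2}\|\le 1$, follows by a partial-isometry trick: setting $W := A_{\lambda}^{1/2}V$ we recognize
$A_{\lambda}^{1/2}\projop(A_{\lambda})A_{\lambda}^{1/2} = W(W^{\top}W)^{-1}W^{\top}$,
which is the orthogonal projection onto $\mathrm{range}(W)$ and hence has operator norm at most $1$. Invertibility of $W^{\top}W = V^{\top}A_{\lambda}V$ is again ensured by $\lambda>0$. The only real obstacle across the lemma is bookkeeping in item (4): one must confirm that $\mathcal{M}$ is invertible on all of $\mathcal{H}$ rather than just on $\mathrm{range}(P_M)$, and that the final post-multiplication by $P_M$ correctly discards the $\ker(P_M)$-component so as to match $\projop(A_{\lambda})$; both points follow cleanly from the direct-sum decomposition above.
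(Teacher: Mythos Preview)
Your proof is correct. The paper does not actually prove this lemma: it is stated as a recollection of \cite[Lemma~1]{Sutherland:2017} and used without further justification. What you have supplied is a clean, self-contained verification directly from the SVD definition $\projop(C)=V(V^{\top}CV)^{-1}V^{\top}$ together with the partial-isometry identity $V^{\top}V=I$; the paper offers nothing to compare against beyond the citation. Your reading of the truncated item~(5) as the bound $\Vert A_{\lambda}^{1/2}\projop(A_{\lambda})A_{\lambda}^{1/2}\Vert\le 1$ is consistent with how it is invoked later in the paper (e.g.\ in the proofs of the estimation and approximation error propositions). One small notational point: the domain of $V$ has dimension equal to the rank of $\nystromop$, which may be strictly less than $M$ if the Nystr\"om basis functions are linearly dependent, so strictly speaking $V^{\top}V$ is the identity on $\R^{r}$ rather than $\R^{M}$; this does not affect any step of your argument.
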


\paragraph{Estimators of the Wasserstein information matrix.}
Here we would like to express the estimator in \cref{prop:param_lite_estimator} in terms of the operators introduced previously. 
 We have the following proposition:
\begin{proposition}\label{lemm:simpler_expression_estimator}
	The estimator defined in \cref{prop:param_lite_estimator} admits the following representation:
	\begin{align}
		\widehat{\nabla^{W} \mathcal{L}(\theta)} =  (\epsilon D(\theta) + G_{M,N} )^{-1}\widehat{\nabla \mathcal{L}(\theta)} 
			\end{align}
	where $G_{M,N}$ is given by:
	\begin{align}\label{eq:empirical_metric}
		G_{M,N} := (\widehat{L^{\top}B})^{\top}  \projop(\hat{A}_{\lambda})\widehat{L^{\top}B}.
	\end{align}
\end{proposition}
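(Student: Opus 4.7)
The plan is to take the saddle-point characterisation of $\widehat{\nabla^{W}\mathcal{L}(\theta)}$ -- obtained by restricting the variational problem of \cref{eq:rkhs_saddle_opt} to $\mathcal{H}_M$ and applying the minimax argument behind \cref{prop:natural_grad} -- and carry out both the inner maximisation over $f\in\mathcal{H}_M$ and the outer minimisation over $u\in\R^q$ in closed form. The inner maximisation will produce the operator $\projop(\hat{A}_\lambda)$, and the outer one the inverse $(\epsilon D(\theta)+G_{M,N})^{-1}$.

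The first step is to rewrite the empirical functional $\hat{\mathcal{J}}$ in operator form. By the reproducing property for derivatives, $\tfrac{1}{N}\sum_n \|\nabla f(X_n)\|^2 = \langle f,\hat{A}f\rangle_{\mathcal{H}}$, and a direct computation shows $\widehat{\mathcal{U}_\theta}(f) = \widehat{\nabla\mathcal{L}(\theta)} + (\widehat{L^{\top}B})^{\top} f$. Writing $W := \widehat{L^{\top}B}$ for brevity, the empirical saddle then reads
\[
\widehat{\nabla^{W}\mathcal{L}(\theta)} = -\argmin_{u\in\R^q}\sup_{f\in\mathcal{H}_M}\Bigl\{\widehat{\nabla\mathcal{L}(\theta)}^{\top}u + \langle Wu,f\rangle_{\mathcal{H}} - \tfrac12\langle f,\hat{A}_\lambda f\rangle_{\mathcal{H}} + \tfrac{\epsilon}{2}u^{\top}D(\theta) u\Bigr\}.
\]

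Next, using the SVD $\nystromop = U\Sigma V^{\top}$ from \cref{eq:subsampling_op}, with projector $VV^{\top}=P_M$ onto $\mathcal{H}_M$, I would parameterise $f=V\alpha$ and reduce the inner sup to the concave quadratic $\alpha^{\top}V^{\top}Wu - \tfrac12\alpha^{\top}V^{\top}\hat{A}_\lambda V\alpha$ in $\alpha$. Since $\lambda>0$, $V^{\top}\hat{A}_\lambda V\succeq\lambda I$ is strictly positive definite and hence invertible, so the maximiser is $\alpha^{\star}=(V^{\top}\hat{A}_\lambda V)^{-1}V^{\top}Wu$, i.e.\ $f^{\star} = \projop(\hat{A}_\lambda)Wu\in\mathcal{H}_M$ by \cref{eq:projected_inverse}, and the optimal value equals $\tfrac12 (Wu)^{\top}\projop(\hat{A}_\lambda)Wu = \tfrac12 u^{\top}G_{M,N}u$ by definition \cref{eq:empirical_metric}.

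Substituting back leaves the strongly convex quadratic $\widehat{\nabla\mathcal{L}(\theta)}^{\top}u + \tfrac12 u^{\top}(\epsilon D(\theta)+G_{M,N})u$ in $u$, whose unique minimiser is $u^{\star} = -(\epsilon D(\theta) + G_{M,N})^{-1}\widehat{\nabla\mathcal{L}(\theta)}$; negation yields the claim. The main subtlety is justifying the closed-form inner maximiser: this rests on the strict invertibility of $V^{\top}\hat{A}_\lambda V$ (from $\lambda>0$) together with parts (2)--(3) of \cref{lem:prop_proj_inverse}. An alternative algebraic route works directly from \cref{eq:natural_gradient_estimator}: introducing the basis operator $\Phi:\alpha\mapsto\sum_m \alpha_m\partial_{i_m}k(Y_m,\cdot)$, a short computation gives $\Phi^{\top}\Phi = K$, $\Phi^{\top}\hat{A}\Phi = \tfrac{1}{N}CC^{\top}$ and $T = \Phi^{\top}W$, so $G_{M,N} = T^{\top}(\lambda K + \tfrac{1}{N}CC^{\top})^{\dagger}T$; the Woodbury identity applied to $(\epsilon D(\theta)+G_{M,N})^{-1}$ then recovers \cref{eq:natural_gradient_estimator} directly.
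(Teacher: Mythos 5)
Your proof is correct and follows essentially the same route as the paper: the paper's own (two-line) proof invokes the minimax theorem together with Lemma 3 of Sutherland et al.\ (2017), which is exactly the closed-form inner maximisation producing $\projop(\hat{A}_{\lambda})$ that you derive explicitly via the parameterisation $f=V\alpha$ and the strict positivity of $V^{\top}\hat{A}_{\lambda}V$ for $\lambda>0$. The only difference is that you fill in the details (and sketch an algebraic Woodbury alternative) where the paper cites external results.
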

\begin{proof}
	This a direct consequence of the minimax theorem \cite[Proposition 2.3, Chapter VI]{Ekeland:1999}  and applying  \cite[Lemma 3]{Sutherland:2017}.
\end{proof}

The matrix $G_{M,N}$ is in fact an estimator of the Wasserstein information matrix defined in \cref{def:wasserestin_matrix}. We will also need to consider the following population version of $G_{M,N}$ defined as :
\begin{align}\label{eq:population_metric}
	G_M :=  (L^{\top}B)^{\top}  \projop(A_{\lambda})L^{\top}B
\end{align}
\section{Background in Information Geometry}

\subsection{Fisher-Rao Statistical Manifold}\label{subsec:fisher_statistical_manifold}
In this section we briefly introduce the non-parametric Fisher-Rao metric defined over the set $\mathcal{P}$ of probability distributions with positive density. More details can be found in \cite{Holbrook:2017}. By abuse of notation, an element $\rho\in \mathcal{P}$ will be identified with its density which will also be denoted by $\rho$. Consider  $\mathcal{T}_{\rho}$, the set of real valued functions $f$ defined over $\Omega $ and satisfying 
\begin{align}
	\int \frac{f(x)^2}{\rho(x)}\diff x < \infty; \qquad \int f(x) \rho(x)\diff x = 0.
\end{align}
We have the following definition for the Fisher-Rao metric:
\begin{definition}[Fisher-Rao metric]\label{def:fisher_rao_metric}
	The Fisher-Rao metric $g^F$ is defined for all $\rho\in \mathcal{P}$ as an inner product over $\mathcal{T}_{\rho}$ of the form:
\begin{align}
	g^F_{\rho}(f,g) := \int \frac{1}{\rho(x)}f(x)g(x)\diff x,\qquad \forall f,g \in  \mathcal{T}_{\rho} 
\end{align}
\end{definition} 
Note that the choice of the set $\mathcal{T}_{\rho}$ is different from the one considered in \cite{Holbrook:2017} which replaces the integrability condition by a smoothness one. In fact, it can be shown that these choices result in the same metric by a density argument.

\subsection{Wasserstein Statistical Manifold}
\label{subsec:w_statistical_manifold}
In this section we review the theory of Wasserstein statistical manifold introduced in \cite{Li:2018a,Chen:2018a}. By analogy to the Fisher-Rao metric which allows to endow the parametric model $\mathcal{P}_{\Theta}$ with the structure of a Riemannian manifold, it is also possible to use a different metric that is derived from the Wasserstein 2 distance. We first start by briefly introducing the Wasserstein 2 distance. Given two probability distributions $\rho$ and $\rho'$, we consider the set of all joint probability distributions $\Pi(\rho,\rho')$ between $\rho$ and $\rho'$ usually called the set of \textit{couplings} between $\rho$ and $\rho'$. Any coupling $\pi$ defines a way of transporting mass from $\rho$ to $\rho'$. The cost of such transport can be measured as the expected distance between an element of mass of $\rho$ at location $x$ that is mapped to an element of mass of $\rho'$ at location $y$ using the coupling $\pi$:
\[
\int \Vert x-y \Vert^2\diff \pi(x,y) 
\]
The squared Wasserstein 2 distance between $\rho$ and $\rho'$ is defined as the smallest transport cost over all possible couplings:
\[
W_2^2(\rho,\rho') = \inf_{\pi\in \Pi(\rho,\rho')}\int \Vert x-y \Vert^2\diff \pi(x,y).
\]
A dynamical formulation of $W_2$ was provided by the  celebrated Benamou-Brenier formula in \cite{Benamou:2000}:
\begin{align}\label{eq:benamou_brenier}
W_2^2(\rho,\rho') = \inf_{\phi_t}\int_0^1\int \Vert \phi_l(x) \Vert^2\diff\rho_l(x)\diff l
\end{align}
where the infimum is taken over the set of vector fields $\phi :[0,1]\times\Omega \rightarrow \R^{d}$. Each  vector field of the potential determines a corresponding probability distribution $\rho_l$  as the solution of the continuity equation:
\begin{align}
\label{eq:continuity_eq}
\partial_l\rho_l +div(\rho_l\phi_l)=0, \qquad \rho_0 = \rho,\rho_1 = \rho'.
\end{align}
When $\Omega$ is a compact set, a Neumann condition is added on the boundary of $\Omega$ to ensure that the total mass is conserved. 
Such formulation suggests that $W_2(\rho,\rho')$ corresponds in fact to the shortest path from $\rho$ to $\rho'$. Indeed, given a path $\rho_l$ from $\rho$ to $\rho'$, the infinitesimal displacement direction is given by the distribution $\partial_l \rho_l$. The length $\vert \partial_l \rho_l\vert $ of this direction is measured by: 
$
\vert \partial_l \rho_l\vert^2 :=\int \Vert \phi_l(x)\Vert^2 \diff \rho_l(x)
$
Hence, $W_2^2(\rho, \rho')$ can be written as:
\[
W_2^2(\rho\rho') = \inf_{\rho_l}\int_0^1 \vert \partial_l \rho_l \vert^2  \diff l.
\]
In fact, $\partial_l\rho_l$ can be seen as an element in the tangent space $T_{\rho_l}\mathcal{P}_2$ to $\mathcal{P}_2$ at point $\rho_l$. To ensure that \cref{eq:continuity_eq} is well defined, $T_{\rho}\mathcal{P}_2$ can be defined as the set of distributions $\sigma$ satisfying $\sigma(1) = 0.$
\begin{align}\label{eq:continuity_tangent}
	\vert \sigma(f)\vert \leq C \Vert\nabla f \Vert_{L_2(\rho)}, \qquad \forall f\in C^{\infty}_c(\Omega)
\end{align}
for some positive constant $C$. Indeed, the condition in \cref{eq:continuity_tangent} guarantees the existence of a vector field $\phi_{\sigma}$ that is a solution to the PDE: $
\sigma = - div(\rho \phi_{\sigma})$.

Moreover, $\vert \partial_l\rho_l \vert^2$ can be seen as an inner product of  $\partial_l\rho_l$  with itself in $T_{\rho_l}\mathcal{P}_2$. This inner product  defines in turn a metric tensor $g^W$  on  $\mathcal{P}_2$ called the Wasserstein metric tensor (see  \cite{Otto:2000,Ambrosio:2004}):
 \begin{definition}\label{def:wasserstein_metric}
 	The Wasserstein metric $g^W$ is defined for all $\rho\in \mathcal{P}_2$ as the inner product over $T_{\rho}\mathcal{P}_2$ of the form:
 	\begin{align}
 		g^W_{\rho}(\sigma,\sigma') := \int \phi_{\sigma}(x)^{\top}\phi_{\sigma'}(x) \diff \rho(x),\qquad \forall \sigma, \sigma' \in T_{\rho}\mathcal{P}_2
 	\end{align}
 	where $\phi_{\sigma}$ and $\phi_{\sigma'}$ are solutions to the partial differential equations:
\[
\sigma = - div(\rho \phi_{\sigma}), \qquad \sigma' = - div(\rho \phi_{\sigma'}).
\]
Moreover, $\phi_{\sigma}$ and $\phi_{\sigma'}$ are required to be in the closure of gradient of smooth and compactly supported functions w.r.t. $L_2(\rho)^d$. 
\end{definition}
\cref{def:wasserstein_metric} allows to endow $\mathcal{P}_2$ with a formal Riemannian structure with $W_2$ being its geodesic distance:
\[
W_2^2(\rho,\rho') = \inf_{\rho_l}\int_0^1 g_{\rho_l}(\partial_l \rho_l, \partial_l \rho_l)   \diff l.
\]

\subsection{Negative Sobolev distance and linearization of the Wasserstein distance}\label{appendix:sobolev_distance}
 To device the Wasserstein natural gradient, one can exploit a Taylor expansion of $W$ which is given in terms of the \textit{Negative Sobolev distance} $\Vert \rho_{\theta+u}-\rho_{\theta}\Vert_{H^{-1}(\rho_{\theta})}$ as done in \cite{Mroueh:2019}:
\begin{align}\label{eq:linearization_Wasserstein}
	W^2_2(\rho_{\theta},\rho_{\theta+u}) = \Vert \rho_{\theta+u}-\rho_{\theta}\Vert^2_{H^{-1}(\rho_{\theta})} +  o(\Vert u \Vert^2).
\end{align}
Further performing a Taylor expansion of $\Vert \rho_{\theta+u}-\rho_{\theta}\Vert_{H^{-1}(\rho_{\theta})}$ in $u$ leads to a quadratic term $u^{\top}G_W(\theta_t)u$ where we call $G_W(\theta_t)$ the \textit{Wasserstein information matrix}.
This two steps approach is convenient conceptually and allows to use the dual formulation of the \textit{Negative Sobolev distance} to get an estimate of the quadratic term $u^{\top}G_W(\theta_t)u$ using kernel methods as proposed in \cite{Mroueh:2019} for learning non-parametric models. 
However, with such approach, $\Vert \rho_{\theta+u}-\rho_{\theta}\Vert_{H^{-1}(\rho_{\theta})}$ needs to be well defined for $u$ small enough. This requirement does not exploit the parametric nature of the problem and  
can be restrictive if $\rho_{\theta+u}$ and $\rho_{\theta}$ do not share the same support as we discuss now.

As shown in  \cite[Theorem 7.26]{Villani:2003} and discussed in \cite[Proposition 17 and 18]{Arbel:2019}, the Wasserstein distance between two probability distributions $\rho$ and $\rho'$  admits a first order expansion in terms of the Negative Sobolev Distance:
\[
\lim_{\epsilon \rightarrow 0 }\frac{1}{\epsilon}W_2(\rho,  \rho + \epsilon(\rho'- \rho)) = \Vert \rho -\rho' \Vert_{H^{-1}(\rho)}  
\]
when $\rho'$ admits a bounded density w.r.t. $\rho$. When such assumption fails to hold, there are cases when this first order expansion is no longer available. For instance, in the simple case when the parametric family consists of dirac distributions $\delta_{\theta}$ located at a value $\theta$, the Wasserstein distance admits a closed form expression of the form:
\[
W_2(\delta_{\theta},\delta_{\theta} +\epsilon (\delta_{\theta'}-\delta_{\theta}   )   ) = \sqrt{\epsilon} \Vert \theta - \theta' \Vert
\]
Hence, $ \frac{1}{\epsilon} W_2(\delta_{\theta},\delta_{\theta} +\epsilon (\delta_{\theta'}-\delta_{\theta}   )   ) $ diverges to infinity.
One can consider a different perturbation of the model  $\delta_{\theta + \epsilon u}$ for some vector $u$ which the one we are interested in here. In this case, the Wasserstein distance admits a well-defined asymptotic behavior:
\[
\lim_{\epsilon \rightarrow}\frac{1}{\epsilon}W_2(\delta_{\theta}, \delta_{\theta+\epsilon u}) = \Vert u \Vert.
\]
On the other hand the Negative Sobolev Distance is infinite for any value of $\epsilon$. To see this, we consider its dual formulation as in \cite{Mroueh:2019}:
\begin{align}\label{eq:dual_formulation}
	\frac{1}{2}  \Vert \rho - \rho' \Vert_{H^{-1}(\rho)}= \sup_{\substack{f\in C^{\infty}_c(\Omega)\\\int f(x)\diff \rho(x)=0 }}   \int f(x)\diff \rho(x) -  \int f(x)\diff \rho'(x) -  \frac{1}{2} \int \Vert \nabla_x f(x) \Vert^2 \diff \rho(x) 
\end{align}
Evaluating this expression for $\delta_{\theta}$ and $\delta_{\theta + \epsilon u}$
 for any value $\epsilon>0$ and for any $u$ that is non zero, on has:
 \begin{align}\label{eq:sobolev}
 \frac{1}{2\epsilon}  \Vert \delta_{\theta} - \delta_{\theta + \epsilon u}  \Vert_{H^{-1}(\delta_{\theta})}= \sup_{\substack{f\in C^{\infty}_c(\Omega)\\ f(\theta)=0 }}    \frac{1}{\epsilon}(f(\theta) -  f(\theta +\epsilon u))-  \frac{1}{2} \Vert \nabla_x f(\theta) \Vert^2  
\end{align}
 One can always find a function $f$ such that $\nabla f(\theta) = 0$, $f(\theta)=0$ and $-f(\theta+\epsilon u)$  can be arbitrarily large, thus the Negative Sobolev distance is infinite.
 This is not the case of the metric $u^{\top} G_W(\theta) u$ which can be computed in closed form:
 \begin{align}\label{eq:metric_sobolev}
 \frac{1}{2} u^{\top} G_W(\theta) u = \sup_{\substack{f\in C^{\infty}_c(\Omega)\\ f(\theta)=0 }}    \nabla f(\theta)^{\top}u -  \frac{1}{2} \Vert \nabla_x f(\theta) \Vert^2 
 \end{align}
In this case, choosing $f(\theta) = 0$ and $\nabla f(\theta) = u$ achieves the supremum which is simply given by $\frac{1}{2}\Vert  u \Vert^2$.
Equation \cref{eq:metric_sobolev} can be seen as a limit case of \cref{eq:sobolev} when $\epsilon \rightarrow 0$:
\[
 \frac{1}{2} u^{\top} G_W(\theta) u := \sup_{\substack{f\in C^{\infty}_c(\Omega)\\ f(\theta)=0 }}   \lim_{\epsilon\rightarrow 0} \frac{1}{\epsilon}(f(\theta) -  f(\theta +\epsilon u)) -  \frac{1}{2} \Vert \nabla_x f(\theta) \Vert^2 
\]
However, the order between the supremum and the limit cannot be exchanged in this case, which makes the two objects behave very differently in the case of singular probability distributions. 

\section{Proofs}\label{sec:theory}
\subsection{Preliminary results}\label{subsec:preliminary_res}

Here we provide a proof of the invariance properties of the Fisher and Wasserstein natural gradient descent in the continuous-time limit as stated in \cref{prop:invariance}.
Consider an invertible and smoothly differentiable re-parametrization $\Psi$, satisfying $\psi =\Psi(\theta)$. Denote by $\bar{\rho}_{\psi} = \rho_{\Psi^{-1}(\psi)}$ the re-parametrized model and $\bar{G}_{W}(\psi)$ and $\bar{G}_{F}(\psi)$ their corresponding Wasserstein and Fisher information matrices whenever they are well defined.

\begin{proof}[Proof of \cref{prop:invariance}]\label{proof:prop:invariance}
	Here we only consider the case when $\nabla^D \mathcal{L}(\theta) $ is either given by the Fisher natural gradient $\nabla^F \mathcal{L}(\theta) $ or the Wasserstein Natural gradient $\nabla^W \mathcal{L}(\theta)$. We will first define $\widetilde{\psi}_s := \Psi(\theta_s)$ and show that in fact $\widetilde{\psi}_s  =  \psi_s$ at all times $s>0$. First, let's differentiate $\widetilde{\psi}_s$ in time:
	\begin{align}
		\dot{\widetilde{\psi}}_s = -\nabla_{\theta}\Psi(\theta_s)^{\top}G_D(\theta_s)^{-1} \nabla_{\theta}\mathcal{L}(\theta_s)
	\end{align}
	By the chain rule, we have that $ \nabla_{\theta}\mathcal{L}(\theta_s) = \nabla_{\theta} \Psi(\theta_s)\nabla_{\psi}\bar{\mathcal{L}}(\widetilde{\psi}_s) $, hence:
		\begin{align}
		\dot{\widetilde{\psi}}_s = -\nabla_{\theta}\Psi(\theta_s)^{\top}G_D(\theta_s)^{-1}\nabla_{\theta} \Psi(\theta_s)\nabla_{\psi}\bar{\mathcal{L}}(\widetilde{\psi}_s).
	\end{align}
	It is easy to see that $\nabla_{\theta}\Psi^{-1}(\widetilde{\psi}_s) =  (\nabla_{\psi}\Psi^{-1}(\psi_s))^{-1}$ by definition of $\Psi$ and $\widetilde{\psi}_s$, hence by \cref{lemm:change_metric} one can conclude that:
	\begin{align}
		\dot{\widetilde{\psi}}_s = - G_D(\widetilde{\psi}_s)^{-1}\nabla_{\psi}\bar{\mathcal{L}}(\widetilde{\psi}_s).
	\end{align}
	Hence, $\widetilde{\psi}_s$ satisfies the same differential equation as $\psi_s$.
	Now keeping in mind that $\psi_0= \widetilde{\psi}_0 = \Psi(\theta_0)$, it follows that $\psi_0= \widetilde{\psi}_0 = \Psi(\theta_0)$ by uniqueness of differential equations.
\end{proof}
\begin{lemma}\label{lemm:change_metric}
Under conditions of \cref{prop:legendre_duality,prop:tractable_grad}, the informations matrices $\bar{G}_W(\psi)$ and $\bar{G}_F(\psi)$ are related to  $G_W(\theta)$ and $G_F(\theta)$ by the relation:
	\begin{align}
	\bar{G}_W(\psi) &= \nabla_{\psi} \Psi^{-1}(\psi)^{\top} G_W(\theta) \nabla_{\psi} \Psi^{-1}(\psi)	\\
	\bar{G}_F(\psi) &= \nabla_{\psi} \Psi^{-1}(\psi)^{\top} G_F(\theta) \nabla_{\psi}\Psi^{-1}(\psi)
	\end{align}
\end{lemma}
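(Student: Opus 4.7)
Let $J := \nabla_\psi \Psi^{-1}(\psi)$ denote the Jacobian of the inverse reparametrization, so $J_{ia} = \partial [\Psi^{-1}(\psi)]_i/\partial \psi_a$. Since $\bar\rho_\psi = \rho_{\Psi^{-1}(\psi)}$, the chain rule yields, in the pointwise or distributional sense as appropriate,
\begin{equation*}
\partial_{\psi_a} \bar\rho_\psi \;=\; \sum_{i=1}^q J_{ia}\, \partial_i \rho_\theta, \qquad \theta=\Psi^{-1}(\psi).
\end{equation*}
My plan is to substitute this identity into the definitions of $\bar G_F(\psi)$ and $\bar G_W(\psi)$ and recognize the resulting expressions as $J^\top G_F(\theta) J$ and $J^\top G_W(\theta) J$, respectively.

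For the Fisher case, by \cref{def:fisher_rao_matrix} applied to $\bar\rho_\psi$, the relevant logarithmic derivatives are $\bar f_a = \partial_{\psi_a}\bar\rho_\psi / \bar\rho_\psi$. Using the chain rule identity above and $\bar\rho_\psi = \rho_\theta$, one gets $\bar f_a = \sum_i J_{ia} f_i$. Plugging this into $\bar G_F(\psi)_{ab} = \int \bar f_a \bar f_b\, \bar\rho_\psi\,\diff x$ and expanding the product of sums gives $\bar G_F(\psi)_{ab} = \sum_{i,j} J_{ia} J_{jb} \int f_i f_j\, \rho_\theta\,\diff x = [J^\top G_F(\theta) J]_{ab}$, which is the claim. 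This step is a routine chain-rule computation.

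For the Wasserstein case, by \cref{def:wasserestin_matrix} applied to $\bar\rho_\psi$, there are vector fields $\bar\phi_a$ (in the closure of gradients of $C_c^\infty$ functions in $L_2(\bar\rho_\psi)^d$) with $\partial_{\psi_a}\bar\rho_\psi = -\mathrm{div}(\bar\rho_\psi \bar\phi_a)$. Combining the chain rule identity with linearity of $\mathrm{div}$ and the defining equations $\partial_i\rho_\theta = -\mathrm{div}(\rho_\theta \phi_i)$ gives
\begin{equation*}
-\mathrm{div}\!\Bigl(\rho_\theta \, \bar\phi_a\Bigr) \;=\; \partial_{\psi_a}\bar\rho_\psi \;=\; \sum_i J_{ia}\,\partial_i\rho_\theta \;=\; -\mathrm{div}\!\Bigl(\rho_\theta \sum_i J_{ia} \phi_i\Bigr).
\end{equation*}
The main obstacle is to deduce from this equality of divergences the pointwise (i.e.\ $L_2(\rho_\theta)^d$) identity $\bar\phi_a = \sum_i J_{ia}\phi_i$. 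The key observation is that both sides live in the same closed subspace $\mathcal{S} \subset L_2(\rho_\theta)^d$, namely the closure of gradients of smooth compactly supported functions. If $w \in \mathcal{S}$ satisfies $\mathrm{div}(\rho_\theta w)=0$, then pairing against an approximating sequence $\nabla f_n \to w$ and integrating by parts (using the Neumann condition) gives $\int \rho_\theta \|w\|^2\,\diff x=0$, hence $w=0$ in $L_2(\rho_\theta)^d$. This uniqueness gives $\bar\phi_a = \sum_i J_{ia}\phi_i$, after which the same expansion as in the Fisher case yields $\bar G_W(\psi)_{ab} = \sum_{i,j} J_{ia} J_{jb} \int \phi_i^\top \phi_j\,\diff\rho_\theta = [J^\top G_W(\theta) J]_{ab}$, completing the proof.
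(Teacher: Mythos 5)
Your proof is correct, but it takes a genuinely different route from the paper's. The paper proves this lemma through the \emph{dual} (Legendre) formulations: it fixes $v\in\R^q$, sets $u=\nabla_\psi\Psi^{-1}(\psi)v$, rewrites the variational expression for $\tfrac12 u^\top G(\theta)u$ from \cref{prop:legendre_duality} (resp.\ \cref{prop:tractable_grad}) using the transformation rule for the distributional gradient, $\nabla_\psi\bar\rho_\psi=\nabla_\theta\rho_\theta\,\nabla_\psi\Psi^{-1}(\psi)$ (proved separately as \cref{lemm:reparametrization_gradient}), and recognizes the result as the dual expression for $\tfrac12 v^\top\bar G(\psi)v$; this treats the Fisher and Wasserstein cases by one uniform manipulation of suprema and never needs to identify the optimizing tangent objects. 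You instead work with the \emph{primal} definitions: for Fisher you transform the score functions, $\bar f_a=\sum_i J_{ia}f_i$, and expand the bilinear form; for Wasserstein you transform the velocity fields, and the genuinely new ingredient is your uniqueness argument in the closed subspace $\mathcal S$ (a $w\in\mathcal S$ with $\mathrm{div}(\rho_\theta w)=0$ is orthogonal to a dense subset of $\mathcal S$, hence to itself), which pins down $\bar\phi_a=\sum_i J_{ia}\phi_i$ in $L_2(\rho_\theta)^d$; this is sound because both $\bar\phi_a$ and $\sum_i J_{ia}\phi_i$ lie in $\mathcal S$ by \cref{def:wasserestin_matrix}, and $\bar\rho_\psi=\rho_\theta$ so the two $L_2$ spaces coincide. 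Your approach is more explicit about how the tangent representatives transform, which is arguably more geometric and instructive; its price is that you must justify the chain-rule identity $\partial_{\psi_a}\bar\rho_\psi=\sum_i J_{ia}\partial_i\rho_\theta$ in the distributional sense (you assert it as routine; the paper devotes \cref{lemm:reparametrization_gradient} to exactly this first-order expansion, so you should either cite it or reproduce that short argument), and you need the existence and $\mathcal S$-membership of the $\phi_i$, which under the hypotheses of \cref{prop:tractable_grad} is supplied by the paper's \cref{prop:optimal_solution}. The paper's dual route avoids both of these points at the cost of being less transparent about what happens to the tangent vectors themselves.
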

\begin{proof}
	Let $v\in R^{q}$  and write $u = \nabla_{\theta}\Psi^{-1}(\psi)v$, then by the dual formulations of $G_W(\theta)$ and $G_F(\theta)$ in \cref{prop:legendre_duality} we have that:
	\begin{align}
		\frac{1}{2}v^{\top}\nabla_{\psi} \Psi^{-1}(\psi)^{\top} G_F(\theta) \nabla_{\psi} \Psi^{-1}(\psi)v = \sup_{\substack{f\in C^{\infty}_c(\Omega)\\\int f(x)\diff \rho_{\theta}(x)=0 }} \nabla\rho_{\theta}(f)^{\top}\nabla_{\theta}\Psi^{-1}(\psi)v  -\frac{1}{2} \int f(x)^2 \diff \rho_{\theta}(x)\diff x,
	\end{align}
	Now recalling that $\nabla_{\psi}\bar{\rho}_{\psi} = \nabla_{\theta}\rho_{\theta}\nabla_{\psi} \Psi^{-1}(\psi)$ by \cref{lemm:reparametrization_gradient}, it follows that:
	\begin{align}
		\frac{1}{2}v^{\top}\nabla_{\psi} \Psi^{-1}(\psi)^{\top} G_F(\theta) \nabla_{\psi} \Psi^{-1}(\psi)v = \sup_{\substack{f\in C^{\infty}_c(\Omega)\\\int f(x)\diff \rho_{\theta}(x)=0 }} \nabla_{\psi}\bar{\rho}_{\psi}(f)^{\top}v  -\frac{1}{2} \int f(x)^2 \diff \rho_{\theta}(x)\diff x,
	\end{align}
	Using again \cref{prop:legendre_duality} for the reparametrized model $\bar{\rho}_{\psi}$, we directly have that:
		\begin{align}
		\frac{1}{2}v^{\top} G_F(\psi) v = \sup_{\substack{f\in C^{\infty}_c(\Omega)\\\int f(x)\diff \rho_{\theta}(x)=0 }} \nabla_{\psi}\bar{\rho}_{\psi}(f)^{\top}v  -\frac{1}{2} \int f(x)^2 \diff \rho_{\theta}(x)\diff x,
	\end{align}
	The result follows by equating both expression.
	The same procedure can be applied for the case the Wasserstein information matrix using  \cref{prop:tractable_grad}.
\end{proof}

\begin{lemma}\label{lemm:reparametrization_gradient}
The distributional gradients $\nabla_{\psi}\bar{\rho}_{\psi}$ and $\nabla_{\theta}\rho_{\theta}$ are related by the expression:
	\begin{align}
		\nabla_{\psi}\bar{\rho}_{\psi} = \nabla_{\theta}\rho_{\theta}\nabla_{\psi} \Psi^{-1}(\psi)
	\end{align}
\end{lemma}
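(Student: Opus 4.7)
The plan is to establish the claimed chain rule by directly applying the definition of the distributional gradient to $\bar{\rho}_{\psi}$ and reducing the resulting expansion to one for $\rho_\theta$ via a first-order Taylor expansion of $\Psi^{-1}$.

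First I would fix $\psi$, a perturbation direction $v\in \R^q$, and a test function $f\in C^\infty_c(\Omega)$, and set $\theta := \Psi^{-1}(\psi)$. Since $\Psi$ is smoothly differentiable and invertible with smooth inverse, Taylor expansion at $\psi$ gives
\begin{align*}
\Psi^{-1}(\psi+\epsilon v) \;=\; \theta + \epsilon\, u + \epsilon\, r(\epsilon),
\qquad u := \nabla_{\psi}\Psi^{-1}(\psi)\,v,
\end{align*}
where $r(\epsilon)\to 0$ as $\epsilon\to 0$. Substituting into $\bar{\rho}_{\psi+\epsilon v} = \rho_{\Psi^{-1}(\psi+\epsilon v)}$ and using the distributional gradient of $\rho_\theta$ at $\theta$ along the direction $u+r(\epsilon)$, we get
\begin{align*}
\int f\,\diff\bar{\rho}_{\psi+\epsilon v} - \int f\,\diff\bar{\rho}_{\psi}
\;=\; \epsilon\,\nabla_{\theta}\rho_{\theta}(f)^{\top}\bigl(u+r(\epsilon)\bigr) + \epsilon\,\delta\bigl(\epsilon,f,u+r(\epsilon)\bigr).
\end{align*}
The first summand equals $\epsilon\,\nabla_{\theta}\rho_{\theta}(f)^{\top}u$ up to a term of order $o(\epsilon)$ (since $r(\epsilon)\to 0$ and $\nabla_{\theta}\rho_{\theta}(f)$ is fixed), so dividing by $\epsilon$ and sending $\epsilon\to 0$ isolates the identification
\begin{align*}
\nabla_{\psi}\bar{\rho}_{\psi}(f)^{\top} v \;=\; \nabla_{\theta}\rho_{\theta}(f)^{\top}\nabla_{\psi}\Psi^{-1}(\psi)\,v.
\end{align*}
Since this holds for every $v$ and every $f\in C^\infty_c(\Omega)$, and since the distributional gradient is uniquely determined by this expansion, we obtain $\nabla_{\psi}\bar{\rho}_{\psi} = \nabla_{\theta}\rho_{\theta}\,\nabla_{\psi}\Psi^{-1}(\psi)$, as required.

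The one delicate point in this argument is the control of the remainder $\delta(\epsilon,f,u+r(\epsilon))$: the definition of distributional gradient only requires $\delta(\epsilon,f,u)\to 0$ for each fixed $u$, whereas here the second argument itself moves with $\epsilon$. To handle this I would invoke the mild regularity built into the implicit-model setting (for instance joint continuity of $\theta\mapsto \int f\,\diff\rho_\theta$ near $\theta$, available via the dominated convergence theorem together with \cref{assump:linear_growth,assump:noise_condition}), which yields the uniform convergence $\delta(\epsilon,f,u') \to 0$ over $u'$ in a bounded neighborhood of $u$. Everything else is a routine chain-rule computation.
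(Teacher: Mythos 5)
Your argument is essentially the paper's own proof: Taylor-expand $\Psi^{-1}(\psi+\epsilon v)$ to first order, apply \cref{def:derivative_distribution} to $\rho_{\theta}$ along the perturbed direction $\nabla_{\psi}\Psi^{-1}(\psi)v + r(\epsilon)$, and let the remainder terms vanish before identifying the distributional gradient of $\bar{\rho}_{\psi}$. The delicate point you flag, namely that $\delta(\epsilon,f,\cdot)$ is evaluated at a direction that moves with $\epsilon$ while the definition only guarantees pointwise convergence in the direction, is present (and silently passed over) in the paper's proof as well, so your explicit treatment of it is a refinement of the same route rather than a different one.
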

\begin{proof}
	The proof follows by considering a fixed direction $u\in \R^{q}$ and a test function $f\in  \mathcal{C}_c^{\infty}(\Omega) $ and the definition of distributional gradient in \cref{def:derivative_distribution}:
	\begin{align}
		\nabla \bar{\rho}_{\psi}(f)^{\top}u &= \lim_{\epsilon\rightarrow 0} \frac{1}{\epsilon}   \int f(x) \diff \bar{\rho}_{\psi +\epsilon u }(x)-  \int f(x) \diff \bar{\rho}_{\psi }(x)\\
		 &= \lim_{\epsilon\rightarrow 0} \frac{1}{\epsilon}   \int f(x) \diff \rho_{\Psi^{-1}(\psi +\epsilon u )}(x)-  \int f(x) \diff \rho_{\Psi^{-1}(\psi )}(x)\\
	\end{align} 
	Now by differentiability of $\Psi^{-1}$, we have the following first order expansion:
	\[
	\Psi^{-1}(\psi+\epsilon u) = \Psi^{-1}(\psi) + \epsilon \nabla \Psi^{-1}(\psi)^{\top}u + \epsilon \upsilon(\epsilon)
	\]
	where $\upsilon(\epsilon)$ converges to $0$ when $\epsilon \rightarrow 0$.
	Now using again the definition  \cref{def:derivative_distribution} for $\rho_{\Psi^{\psi}}$ one has:
	\begin{align}
	\frac{1}{\epsilon}\left(\int f(x) \diff \rho_{\Psi^{-1}(\psi +\epsilon u )}(x)-  \int f(x) \diff \rho_{\Psi^{-1}(\psi )}(x)\right) 
	=& \nabla \rho_{\Psi^{-1}(\psi)}(f)^{\top}\nabla\Psi^{-1}(\psi)^{\top}u \\
	&+ \epsilon \upsilon(\epsilon) 
	+  \delta(\epsilon,f,(\nabla \Psi^{-1}(\psi)u + \upsilon(\epsilon))) 
	\end{align}
	The last two terms converge to $0$ as $\epsilon \rightarrow 0$, hence leading to the desired expression.
\end{proof}

\begin{proposition}\label{prop:diff_density}
	When $\rho_{\theta}$ admits a density that is continuously differentiable w.r.t $\theta$ and such that $x\mapsto \nabla \rho_{\theta}(x)$ is continuous, then the distributional gradient is of the form:
	\begin{align}
		\nabla \rho_{\theta}(f) = \int f(x)\nabla\rho_{\theta}(x) \diff x, \qquad \forall f\in \mathcal{C}_{c}^{\infty}(\Omega) 
	\end{align}
	where $\nabla\rho_{\theta}(x)$ denotes the gradient of the density of $\rho_{\theta}(x)$ at $x$.
\end{proposition}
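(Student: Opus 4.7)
The plan is to verify that the candidate map $f \mapsto \int f(x)\,\nabla_\theta \rho_\theta(x)\,dx$ satisfies the defining property in \cref{def:derivative_distribution}, and that it is linear and continuous on $C_c^\infty(\Omega)$. Linearity in $f$ is immediate from linearity of the integral, and continuity (in the test-function topology) follows from the fact that $f$ has compact support $K$ together with continuity of $x \mapsto \nabla_\theta \rho_\theta(x)$, which makes $\nabla_\theta \rho_\theta$ bounded on $K$.

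The main step is the first-order expansion. Fix $u \in \mathbb{R}^q$ and $f \in C_c^\infty(\Omega)$ with support $K$. Using the density, I would write
\begin{align*}
\int f(x)\,d\rho_{\theta+\epsilon u}(x) - \int f(x)\,d\rho_\theta(x)
= \int_K f(x)\,\bigl(\rho_{\theta+\epsilon u}(x) - \rho_\theta(x)\bigr)\,dx.
\end{align*}
By the fundamental theorem of calculus applied to $s \mapsto \rho_{\theta + s\epsilon u}(x)$ (valid since $\theta \mapsto \rho_\theta(x)$ is continuously differentiable), the integrand equals $\epsilon \int_0^1 \nabla_\theta \rho_{\theta+s\epsilon u}(x)^\top u\,ds$. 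Since $(s,x) \mapsto \nabla_\theta \rho_{\theta + s\epsilon u}(x)$ is continuous on the compact set $[0,1]\times K$ (for $\epsilon$ small enough that the whole path stays in $\Theta$), I can apply Fubini to swap the order of integration.

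Subtracting and adding $\nabla_\theta \rho_\theta(x)$ inside the inner integral, I split the expression as
\begin{align*}
\epsilon \int_K f(x)\,\nabla_\theta \rho_\theta(x)^\top u\,dx
+ \epsilon \int_0^1\!\!\int_K f(x)\,\bigl(\nabla_\theta \rho_{\theta+s\epsilon u}(x) - \nabla_\theta \rho_\theta(x)\bigr)^\top u\,dx\,ds.
\end{align*}
The first term matches the desired form. Matching against \cref{eq:gradient_distribution} then forces $\epsilon\,\delta(\epsilon,f,u)$ to equal the second term. It remains to show $\delta(\epsilon,f,u) \to 0$ as $\epsilon \to 0$, which follows from uniform continuity of $\nabla_\theta \rho_{\theta'}(x)$ on a compact neighborhood of $\{\theta\}\times K$ (guaranteed by joint continuity in $(\theta',x)$ inherited from the hypotheses), bounded by $\sup_K |f|\cdot \|u\|$ times a vanishing modulus of continuity.

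The only mildly delicate point is the joint continuity required for the modulus-of-continuity argument: the proposition states continuity of $x \mapsto \nabla_\theta \rho_\theta(x)$ and continuous differentiability in $\theta$, and I would read these together as giving joint continuity of $(\theta,x) \mapsto \nabla_\theta \rho_\theta(x)$, which combined with compactness of $K$ yields the uniform bound needed to close the argument. No step is substantive; the proof is essentially a bookkeeping exercise around differentiation under the integral sign.
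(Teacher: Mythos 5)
Your proposal is correct and follows essentially the same route as the paper: both arguments reduce to differentiating $\theta\mapsto\int f(x)\rho_{\theta}(x)\diff x$ under the integral sign, using the compact support of $f$ together with joint continuity of $(\theta,x)\mapsto\nabla_{\theta}\rho_{\theta}(x)$ to control the remainder. The only cosmetic difference is that the paper invokes a standard differentiation lemma (Klenke, Theorem 6.28) with a domination bound $\vert f(x)\nabla\rho_{\theta'}(x)^{\top}u\vert\leq\vert f(x)\vert\sup_{x\in\support(f),\,\theta'\in U}\vert\nabla\rho_{\theta'}(x)^{\top}u\vert$, whereas you re-derive the same conclusion by hand via the fundamental theorem of calculus, Fubini, and uniform continuity on a compact neighborhood; your reading of the hypotheses as joint continuity matches the paper's intent as stated in the main text before \cref{prop:diff_density}.
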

\begin{proof}
Let $\epsilon>0$ and $u\in \R^q$, we define the function $\nu(\epsilon, u,f)$ as follows:
\begin{align}
	\nu(\epsilon, u,f) = \int f(x) \left( \frac{1}{\epsilon}(\rho_{\theta + \epsilon u}) - \rho_{\theta} - \nabla \rho_{\theta}^{\top}u \right) \diff x  
\end{align}
we just need to show that $\nu(\epsilon, u,f) \rightarrow 0$ as $\epsilon\rightarrow 0$. This follows form the differentiation lemma \cite[Theorem 6.28]{Klenke:2008} applied to the function $(\theta,x) \mapsto f(x) \rho_{\theta}(x)$. Indeed, this function is integrable in $x$ for any $\theta'$ in a neighborhood $U$ of $\theta$ that is small enough, it is also differentiable on that neighborhood $U$ and satisfies the domination inequality:
\[
\vert f(x) \nabla \rho_{\theta}(x)^{\top}u\vert \leq \vert f(x) \vert \sup_{x\in Supp(f), \theta\in U}   \nabla \rho_{\theta}(x)^{\top}u\vert.
\]
The inequality follows from continuity of $(\theta, x) \nabla \rho_{\theta}(x)$ and recalling that $f$ is compactly supported.  This concludes the proof.
\end{proof}

We fist provide a proof of the dual formulation for the Fisher information matrix.
\begin{proof}[Proof of \cref{prop:legendre_duality} ]\label{proof:legendre_duality_density}
Consider the optimization problem:
\begin{align}
\label{eq:varitional_problem_fisher_rao}
	\sup_{\substack{f\in C^{\infty}_c(\Omega)\\\int f(x)\diff \rho_{\theta}(x)=0 }} \left (\int f(x)\nabla \rho_{\theta}(x) \diff x  \right)^{\top}u   -\frac{1}{2} \int f(x)^2 \rho_{\theta}(x)	\diff x	
\end{align}
Recalling that the set of smooth and compactly supported functions  $C^{\infty}_c(\infty)$  is dense in $L_2(\rho_{\theta})$ and that the objective function in \cref{eq:varitional_problem_fisher_rao} is continuous and coercive in $f$, it follows that \cref{eq:varitional_problem_fisher_rao} admits a unique solution $f^{*}$ in $L_{2}(\rho_{\theta})$ which satisfies the optimality condition:
\[
\int f(x) (\nabla \rho_{\theta}(x))^{\top}u\diff x = \int f(x)f^{*}(x) \rho_{\theta}(x) \diff x \qquad \forall f \in L_2(\rho_{\theta}) 
\]
Hence, it is easy to see that $ f^{*} =   (\nabla\rho_{\theta})^{\top}u/\rho_{\theta}$ and that the optimal value of \cref{eq:varitional_problem_fisher_rao} is given by:
\[
\frac{1}{2}\int  \frac{((\nabla\rho_{\theta}(x))^{\top}u)^2}{\rho_{\theta}(x)} \diff x.
\]
This is equal to $u^{\top}G_F(\theta)u$ by \cref{def:fisher_rao_matrix}.
\end{proof}

The next proposition ensures that the Wasserstein information matrix defined in \cref{def:wasserestin_matrix} is well-defined and has a  dual formulation.
\begin{proposition}\label{prop:optimal_solution}
Consider the model defined in \cref{eq:implicit_model} and 
let $(e_s)_{1\leq s\leq q}$ be an orthonormal basis of $\R^q$. Under \cref{assump:noise_condition,assump:linear_growth},  there exists an optimal solution $\Phi = (\phi_s)_{1\leq s\leq q}$ with $\phi_s$ in $L_2(\rho_{\theta})^d$  satisfying the  PDE:
		\begin{align}\label{eq:solution_elliptic_equation}
			 \distpartialgrad  = - div( \rho_{\theta} \phi_s)
		\end{align}
The elliptic equations also imply that $L^{\top} \nabla h_{\theta} = L^{\top} (\Phi\circ h_{\theta})$.
Moreover, the Wasserstein information matrix $G_W(\theta)$ on $\mathcal{P}_{\Theta}$  at point $\theta$ can be written as $G_W(\theta) = \Phi^{\top}\Phi$ where the inner-product is in $L_2(\rho_{\theta})^d$ and satisfies:
\begin{align}
				\frac{1}{2} u^{\top} G_{W}(\theta) u =  \sup_{\substack{f\in C^{\infty}_c(\Omega)\\\int f(x)\diff \rho_{\theta}(x)=0 }} \distgrad(f)^{\top}u  -\frac{1}{2} \int \Vert \nabla_x f(h_{\theta}(z)) \Vert^2 \diff \nu(z).
\end{align}
for all $u\in \R^q$.

\end{proposition}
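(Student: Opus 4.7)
The plan is to construct each component $\phi_s$ via a Riesz-representation argument on the closure of smooth gradients in $L_2(\rho_\theta)^d$, and then read off the matrix identity, the dual formulation, and the RKHS identity as direct consequences.

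First I would fix $s$ and define the linear form $\ell_s$ on $\mathcal{G}:=\{\nabla_x f:f\in C_c^\infty(\Omega)\}\subset L_2(\rho_\theta)^d$ by $\ell_s(\nabla_x f):=\nabla\rho_\theta(f)^\top e_s$. By \cref{eq:dist_grad} this equals $\int (\partial_s h_\theta(z))^\top \nabla_x f(h_\theta(z))\,d\nu(z)$, so Cauchy--Schwarz yields
\begin{equation*}
|\ell_s(\nabla_x f)| \le \|\partial_s h_\theta\|_{L_2(\nu)^d}\,\|\nabla_x f\|_{L_2(\rho_\theta)^d}.
\end{equation*}
The first factor is finite because \cref{assump:linear_growth} bounds $\|\partial_s h_\theta(z)\|$ by $C(\theta)(1+\|z\|)$ and \cref{assump:noise_condition} provides all moments of $\|z\|$. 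Well-posedness of the definition of $\ell_s$ on $\mathcal{G}$ (independence of the representative $f$) is immediate from the same estimate, since two smooth functions with identical gradients in $L_2(\rho_\theta)^d$ produce the same value. This boundedness step is the only one that genuinely uses the hypotheses and is the main technical point of the proof.

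Extending $\ell_s$ by continuity to the closure $\overline{\mathcal{G}}$, the Riesz representation theorem yields a unique $\phi_s\in\overline{\mathcal{G}}$ satisfying $\langle\phi_s,\nabla_x f\rangle_{L_2(\rho_\theta)^d}=\nabla\rho_\theta(f)^\top e_s$ for every $f\in C_c^\infty(\Omega)$, which is precisely the weak form of the elliptic equation $\partial_s\rho_\theta=-\mathrm{div}(\rho_\theta\phi_s)$. Collecting the $\phi_s$ into $\Phi$, \cref{def:wasserestin_matrix} gives $G_W(\theta)_{ij}=\langle\phi_i,\phi_j\rangle_{L_2(\rho_\theta)^d}$, so $G_W(\theta)=\Phi^\top\Phi$ when the right-hand side is interpreted as the Gram matrix in $L_2(\rho_\theta)^d$.

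For the dual formula, write $\Phi u:=\sum_s u_s\phi_s\in\overline{\mathcal{G}}$ so that $\tfrac{1}{2}u^\top G_W(\theta) u=\tfrac{1}{2}\|\Phi u\|_{L_2(\rho_\theta)^d}^2$; Fenchel duality for the squared Hilbert norm together with density of $\mathcal{G}$ in $\overline{\mathcal{G}}$ yields
\begin{equation*}
\tfrac{1}{2}\|\Phi u\|^2 = \sup_{v\in\overline{\mathcal{G}}}\bigl(\langle\Phi u,v\rangle-\tfrac{1}{2}\|v\|^2\bigr) = \sup_{f\in C_c^\infty(\Omega)}\bigl(\nabla\rho_\theta(f)^\top u-\tfrac{1}{2}\|\nabla_x f\|_{L_2(\rho_\theta)^d}^2\bigr),
\end{equation*}
where the second equality uses the defining identity for the $\phi_s$. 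The constraint $\int f\,d\rho_\theta=0$ is harmless because the objective depends on $f$ only through $\nabla_x f$ and through $\nabla\rho_\theta(f)^\top u$, both of which are invariant under adding constants. Finally, the identity $L^\top \nabla h_\theta = L^\top(\Phi\circ h_\theta)$ in $\mathcal{H}^q$ follows by pairing both sides, column by column, with an arbitrary $f\in\mathcal{H}$: by the reproducing property for derivatives both pairings reduce to $\nabla\rho_\theta(f)$, the left side via \cref{eq:dist_grad} and the right side by substituting $\nabla_x f$ into the weak PDE defining $\phi_s$.
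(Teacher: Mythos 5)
Your proposal is correct and is essentially the paper's own argument in slightly different packaging: the paper maximizes the strongly concave quadratic $\phi\mapsto\int\phi(h_\theta(z))^\top\partial_{\theta_s}h_\theta(z)\,\diff\nu(z)-\tfrac12\Vert\phi\Vert^2_{L_2(\rho_\theta)}$ over the same space $\mathcal{S}=\overline{\mathcal{G}}$, whose maximizer is exactly your Riesz representer, and it relies on the same Cauchy--Schwarz bound under \cref{assump:noise_condition,assump:linear_growth}. The remaining steps (first-order/weak PDE, $G_W=\Phi^\top\Phi$ from \cref{def:wasserestin_matrix}, the quadratic-duality-plus-density argument for the supremum, and the reproducing-property pairing for $L^\top\nabla h_\theta=L^\top(\Phi\circ h_\theta)$) match the paper's proof at the same level of rigor, including its informal treatment of the zero-mean constraint.
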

\begin{proof}
Let $(e_s)_{1\leq s\leq q}$ be an orthonormal basis of $\R^q$. For all $1\leq  s\leq q $, we will establish the existence of an optimal solution $\phi_s$ in $L_2(\rho_{\theta})^d$  satisfying the  PDE:
		\begin{align}
		\label{eq:pde_s}
			 \distpartialgrad = -div( \rho_{\theta} \phi_s)
		\end{align}
	 Consider the variational problem:
	 \begin{align}\label{eq:variational_S}
	 	\sup_{\phi\in \mathcal{S}} \int \phi(h_{\theta}(z))^{\top}\partial_{\theta_s} h_{\theta}(z)   - \frac{1}{2} \Vert \phi \Vert^2_{L_2(\rho_{\theta})} 
	 \end{align}
	 where $\mathcal{S}$ is a Hilbert space obtained as the closure in $L_2(\rho_{\theta})^d$ of functions of the form $\phi = \nabla_x f$ with $f\in C_c^{\infty}(\Omega)$:
	  \begin{align}
	 	\mathcal{S}:= \overline{\{ \nabla_x f \quad |\quad f\in C_c^{\infty}(\Omega) \}}_{L_2(\rho_{\theta}^d)}.
	 \end{align}
	 We have by \cref{assump:linear_growth} that:
		\[
	\int \phi(h_{\theta}(z))^{\top}\partial_{\theta_s} h_{\theta}(z) \diff \nu(z) \leq  \leq
	C(\theta) \sqrt{\int (1 + \Vert z \Vert^2)\diff \nu(z)} \int \Vert  \phi \Vert_{L_2(\rho_{\theta})}.   
	 \]
	 Moreover, by \cref{assump:noise_condition}, we know that $\sqrt{\int (1 + \Vert z \Vert^2)\diff \nu(z)} <\infty$. This implies that the objective  in \cref{eq:variational_S} is continuous in $\phi$ while also being convex.s
It follows that \cref{eq:variational_S} admits a unique solution $\phi_s^{*}\in \mathcal{S}$ which satisfies for all  $\phi \in \mathcal{S}$:
 \begin{align}
 	\int \phi(h_{\theta}(z))^{\top}\phi_s^{*}(h_{\theta}(z))\diff \nu(z) = \int \phi(h_{\theta}(z))^{\top}\partial_{\theta_s} h_{\theta}(z))\diff \nu(z)
 \end{align} 
 In particular, for any $f\in C_c^{\infty}(\Omega)$, it holds that:
  \begin{align}
 	\int \nabla_x f(h_{\theta}(z))^{\top}\phi_s^{*}(h_{\theta}(z))\diff \nu(z) = \int \nabla_x f(h_{\theta}(z))^{\top}\partial_{\theta_s} h_{\theta}(z))\diff \nu(z)
 \end{align}
 which is equivalent to \cref{eq:pde_s} and implies directly that $  L^T \nabla h_{\theta}  = L^{T} \Phi\circ h_{\theta}$ where $\Phi := (\phi^{*}_s)_{1\leq s\leq q}$.
 The variational expression for $\frac{1}{2} u^{\top} G_W u$ follows by noting that \cref{eq:variational_S} admits the same optimal value as 
	\begin{align}
		\sup_{\substack{f\in C^{\infty}_c(\Omega)\\\int f(x)\diff \rho_{\theta}(x)=0 }} \distgrad(f)^{\top}u  -\frac{1}{2} \int \Vert \nabla_x f(h_{\theta}(z)) \Vert^2 \diff \nu(z).
	\end{align}
That is because $\mathcal{S}$ is by definition the closure in $L_2(\rho_{\theta})^d$ of the set of gradients of smooth and compactly supported functions on $\Omega$. 
\end{proof}

\begin{proof}[Proof of \cref{prop:tractable_grad}]\label{proof:prop:legendre_duality_wasserstein}
	This is a consequence of \cref{prop:optimal_solution}.
\end{proof}

\subsection{Expression of the Estimator}\label{subsec:expression_estimator}
We provide here a proof of \cref{prop:param_lite_estimator} 
\begin{proof}[Proof of \cref{prop:param_lite_estimator}]
\label{proof:prop:param_lite_estimator}
Here, to simplify notations, we simply write $D$ instead of $D(\theta)$.
First consider  the following optimization problem:
\begin{align}\label{eq:optim_nystrom}
\inf_{f\in \mathcal{H}_M} \frac{1}{N}\sum_{n=1}^N \Vert \nabla f(X_n) \Vert^2 + \lambda \Vert f \Vert^2_{\mathcal{H}}  +\frac{1}{\epsilon } \mathcal{R}(f)^{\top}D^{-1}\mathcal{R}(f) + \frac{2}{\epsilon }\mathcal{R}(f)^{\top}D^{-1}\widehat{\nabla \mathcal{L}(\theta)}  
\end{align}
with $\mathcal{R}(f)$ given by $\mathcal{R}(f) =  \frac{1}{N}\sum_{n=1}^N \nabla f(X_n)^{\top}B(Z_n)$. 
 Now, recalling that any $f\in \mathcal{H}_{M}$ can be written as $f = \sum_{m=1}^M \alpha_m \partial_{i_m} k(Y_m,.)$, and using the reproducing property $\partial_i f(x) = \langle f, \partial_i k(x,.) \rangle_{\mathcal{H}}$ \cite[Lemma 4.34]{Steinwart:2008a} ,   it is easy to see that:
\begin{align}
\frac{1}{N}\sum_{n=1}^N\Vert \nabla f(X_n) \Vert^2 &= \frac{1}{N}\sum_{\substack{1\leq n \leq N\\1\leq i\leq d}} (\sum_{m=1}^M \alpha_m \partial_{i_m}\partial_{i+d} k(Y_m,X_n)  )^2.\\
\Vert f \Vert^2_{\mathcal{H}} &=   \sum_{1\leq m,m'\leq M} \alpha_m\alpha_{m'} \partial_{i_m}\partial_{i_{m'}+d} k(Y_m,Y_{m'})\\  
\mathcal{R}(f)  &= \frac{1}{N} \sum_{\substack{1\leq n\leq N\\ 1\leq i\leq d\\1\leq m\leq M}} \alpha_m \partial_{i_m}\partial_{i+d}k(Y_m,X_n) B_i(Z_n)
\end{align}
The above can be expressed in matrix form using the matrices defined in \cref{prop:param_lite_estimator}:
\begin{align}
\frac{1}{N}\sum_{n=1}^N\Vert \nabla f(X_n) \Vert^2 = \alpha^{\top}CC^{\top}\alpha; \qquad \Vert f \Vert^2_{\mathcal{H}} = \alpha^{\top} K \alpha; \qquad \mathcal{R}(f) = \alpha^{\top}CB.
\end{align}
Hence the optimal solution $\hat{f}^*$ is of the form $\hat{f}^* =\sum_{m=1}^M \alpha_m^{*} \partial_{i_m} k(Y_m,.) $, with $\alpha^{*}$ obtained as a solution to the finite dimensional problem in $\R^M$:
\begin{align}\label{eq:finite_dim_eq}
\min_{\alpha\in \R^M} \alpha^{\top} (\epsilon CC^{\top} +\epsilon\lambda K + CBD^{-1}B^{\top}C^{\top})\alpha +2\alpha^{\top}CBD^{-1}\widehat{\nabla \mathcal{L}(\theta)}  
\end{align}
It is easy to see that $\alpha^{*}$ are given by:
\[
\alpha^{*} = - (\epsilon CC^T +\epsilon\lambda K + CBD^{-1}B^TC^T )^{\dagger}CBD^{-1}\widehat{\nabla \mathcal{L}(\theta)}.
\]
Now recall that the estimator in \cref{prop:param_lite_estimator} is given by: $
\widehat{\nabla^W \mathcal{L}(\theta)}  = \frac{1}{\epsilon}D^{-1}\mathcal{U}_{\theta}(\hat{f}^*)$. Hence, $\frac{1}{\epsilon}D^{-1} (\widehat{\nabla\mathcal{L}(\theta)} - B^TC^T\alpha^{*} ) $
The desired expression is obtained by noting that $ CB = T $ using the chain rule.

\end{proof}

\subsection{Consistency Results}\label{subsec:consistenc_res}

\paragraph{Well-specified case.} Here, we assume that the vector valued functions $(\phi_i)_{1\leq i\leq q}$ involved in \cref{def:wasserestin_matrix} can be expressed as gradients of functions in $\mathcal{H}$. More precisely:
\begin{assumption}\label{assumption:well_specified}
For all $1\leq i\leq q$,  there exits functions $f_i \in \mathcal{H}$ such that $\phi_i = \nabla f_i$. Additionally, $f_i$ are of the form $f_i = A^{\alpha}v_i$ for some fixed $\alpha \geq 0$, 
 with $v_i\in \mathcal{H}$ and $A$ being the \textit{differential covariance} operator defined on $\mathcal{H}$ by $	A:f\mapsto \int \sum_{i=1}^d  \partial_i k(h_{\theta}(z),.)\partial_i f(h_{\theta}(z))\diff \nu(z).$
\end{assumption}
The parameter $\alpha$ characterizes the smoothness of $f_i$ and therefore controls the statistical complexity of the estimation problem.  Using a  similar analysis as  \cite{Sutherland:2017} we obtain a convergence rate for the estimator in \cref{prop:param_lite_estimator} 

 the following convergence rates for the estimator in \cref{prop:param_lite_estimator}:
\begin{theorem}\label{thm:consistency_estimator_well_specified} 
Let $\delta$ be such that $0\leq \delta \leq 1$ and $b:= \min(1,\alpha + \frac{1}{2})$. Under \cref{assumption:well_specified} and \cref{assump:support,assump:differentiability,assump:integrability,assump:estimator_euclidean,assump:bounded_derivatives,assump:noise_condition,assump:linear_growth} listed in \cref{subsec:assumptions},  for $N$ large enough, $M\sim (dN^{\frac{1}{2b+1}}\log(N))$,  $\lambda \sim N^{-\frac{1}{2b+1}}$ and $\epsilon \lesssim N^{-\frac{b}{2b +1}}$, it holds with probability at least $1-\delta$ that:
	\begin{align}
		\Vert \widehat{\nabla^{W}\mathcal{L}(\theta)} - \nabla^{W}\mathcal{L}(\theta) \Vert^2 = \mathcal{O}\Big( N^{-\frac{2b}{2b +1}}\Big) . 
	\end{align} 
\end{theorem}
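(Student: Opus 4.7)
The plan is to adapt the RKHS score-estimation analysis of \cite{Sutherland:2017} to the present setting, where an additional matrix inversion appears because of the natural-gradient structure. Using \cref{lemm:simpler_expression_estimator}, the estimator reads $\widehat{\nabla^W\mathcal{L}(\theta)} = (\epsilon D(\theta) + G_{M,N})^{-1}\widehat{\nabla\mathcal{L}(\theta)}$ with $G_{M,N} = (\widehat{L^\top B})^\top \projop(\hat{A}_\lambda)\widehat{L^\top B}$, whereas $\nabla^W\mathcal{L}(\theta) = G_W(\theta)^{-1}\nabla\mathcal{L}(\theta)$. My first step would be the resolvent-style decomposition
\begin{align*}
\widehat{\nabla^W\mathcal{L}(\theta)} - \nabla^W\mathcal{L}(\theta) &= (\epsilon D + G_{M,N})^{-1}\bigl[\widehat{\nabla\mathcal{L}(\theta)} - \nabla\mathcal{L}(\theta)\bigr] \\
&\quad + (\epsilon D + G_{M,N})^{-1}\bigl[G_W - \epsilon D - G_{M,N}\bigr]\,G_W^{-1}\nabla\mathcal{L}(\theta).
\end{align*}
By \cref{assump:estimator_euclidean} together with the crude lower bound $\Vert (\epsilon D + G_{M,N})^{-1}\Vert \leq (\epsilon \lambda_{\min}(D))^{-1}$, the first summand is $\mathcal{O}(N^{-1/2}/\epsilon)$ with probability $1-\delta/2$; the second summand is controlled by $\Vert \epsilon D + G_{M,N} - G_W\Vert$. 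The heart of the proof is therefore an operator-norm bound on the metric approximation error.

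For that bound I would use \cref{prop:optimal_solution} together with \cref{assumption:well_specified} to write $L^\top B = A F$, where $F$ collects the pre-images $f_i = A^\alpha v_i$ with $v_i \in \mathcal{H}$. Then $G_W = F^\top A F$, and I would split $G_{M,N}-G_W$ into four pieces: a \emph{Tikhonov bias} of the form $\lambda F^\top A^2(A+\lambda I)^{-1} F$, which under the source condition reduces to $\lambda F^\top A^{2\alpha+2}(A+\lambda I)^{-1} V$ and is bounded in norm by $\lambda^{2b}$ with $b = \min(1,\alpha + 1/2)$, the saturation at $b=1$ being the usual obstruction when $\alpha > 1/2$; a \emph{Nyström projection bias} stemming from $\projop(A_\lambda) - (A+\lambda I)^{-1}$, controlled via \cref{lem:prop_proj_inverse} and the inequality $\Vert (I-P_M) A_\lambda^{1/2}\Vert$ of \cite{Rudi:2015}; a \emph{sampling variance} produced by replacing $A, L^\top B$ by $\hat{A}, \widehat{L^\top B}$; and low-order cross terms.

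Next I would import the concentration ingredients. A Bernstein-type inequality for self-adjoint Hilbert--Schmidt operators controls $\Vert \hat{A}_\lambda^{-1/2}(A - \hat{A})\hat{A}_\lambda^{-1/2}\Vert$ and $\Vert \hat{A}_\lambda^{-1/2}(\widehat{L^\top B} - L^\top B)\Vert$ by $\lesssim \lambda^{-1/2}N^{-1/2}\log(1/\delta)$, using \cref{assump:bounded_derivatives,assump:linear_growth,assump:noise_condition} to bound the relevant moments. The randomized subsampling of partial derivatives, which picks a single index $i_m$ per basis point, inflates the effective intrinsic dimension by a factor of $d$; this is the reason why the admissible number of basis points scales as $M \sim d\,\mathcal{N}_\infty(\lambda)\log N \sim dN^{1/(2b+1)}\log N$ under \cref{assump:bounded_derivatives}, following the route of \cite{Sutherland:2017}. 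Combining these ingredients yields $\Vert G_{M,N} - G_W\Vert \lesssim \lambda^{b} + \lambda^{-1/2} N^{-1/2}$, which balances at $\lambda \sim N^{-1/(2b+1)}$ to give a metric error of order $N^{-b/(2b+1)}$.

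Plugging both bounds into the main decomposition and choosing $\epsilon$ comparable to the metric error, i.e.\ $\epsilon \sim N^{-b/(2b+1)}$, the Euclidean contribution $N^{-1/2}/\epsilon$ and the metric contribution $\Vert G_W^{-1}\Vert \cdot \Vert G_{M,N}-G_W\Vert/\epsilon$ both become $\mathcal{O}(N^{-b/(2b+1)})$, giving squared error $\mathcal{O}(N^{-2b/(2b+1)})$ as claimed. The main obstacle I anticipate is the bookkeeping around source-condition saturation combined with randomized-coordinate Nyström subsampling: one must verify that the random choice of a single direction $i_m$ per basis point degrades the effective dimension $\mathcal{N}_\infty(\lambda)$ by no more than the factor $d$ already accounted for, and that the cross terms between Nyström and Tikhonov biases stay subdominant once the source-condition power is capped at $b=1$.
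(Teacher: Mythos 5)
Your overall skeleton—writing the error via the resolvent identity and reducing everything to an operator bound on $\Vert G_{M,N}-G_W\Vert$ obtained from a Tikhonov bias under the source condition, a Nystr\"om projection bias via $\Vert A_{\lambda}^{1/2}(I-P_M)\Vert$, and Bernstein-type concentration for $\hat{A}$ and $\widehat{L^{\top}B}$, with $M\sim dN^{1/(2b+1)}\log N$ and $\lambda\sim N^{-1/(2b+1)}$—matches the paper's route through \cref{thm:concistency_estimator,prop:error_metric,prop:estimation_error,prop:approximation_error_well_specified,lemm:nystrom_proj_convergence}. The genuine gap is in your treatment of the inverse $(\epsilon D+G_{M,N})^{-1}$. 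The crude bound $\Vert(\epsilon D+G_{M,N})^{-1}\Vert\le(\epsilon\lambda_{\min}(D))^{-1}$ forces every contribution to pay a factor $1/\epsilon$, and your final accounting does not go through: with $\epsilon\sim N^{-b/(2b+1)}$ the metric contribution $\Vert G_{M,N}-G_W\Vert/\epsilon$ is $O(1)$ rather than $O(N^{-b/(2b+1)})$, the mismatch term $\epsilon D$ inside the bracket contributes $\epsilon\Vert D\Vert/(\epsilon\lambda_{\min}(D))=O(1)$, and the Euclidean term $N^{-1/2}/\epsilon=N^{-1/2+b/(2b+1)}$ is strictly slower than the target whenever $b>1/2$ (i.e.\ for any $\alpha>0$). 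So with this bound the argument does not even show the error vanishes, and no choice of $\epsilon$ rescues the claimed rate.

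The missing idea, and the paper's key step, is that the exact information matrix $G_W(\theta)$ is positive definite with smallest eigenvalue $\eta>0$ that is a fixed constant (the theorem is at a fixed $\theta$), so the resolvent is bounded \emph{uniformly in} $\epsilon$ once $G_{M,N}$ concentrates. Concretely (taking $D=I$ w.l.o.g.\ as in the paper), write $\hat{G}_{\epsilon}=\epsilon I+G_{M,N}$, $G_{\epsilon}=\epsilon I+G_W$ and factorize $\hat{G}_{\epsilon}^{-1}=G_{\epsilon}^{-1/2}(I+H)^{-1}G_{\epsilon}^{-1/2}$ with $H=G_{\epsilon}^{-1/2}(G_{M,N}-G_W)G_{\epsilon}^{-1/2}$; by your own metric bound, for $N$ large enough $\lambda_{\max}(H)\le 1/2$ with probability $1-\delta$, whence $\Vert\hat{G}_{\epsilon}^{-1}\Vert\le 2(\eta+\epsilon)^{-1}=O(1)$. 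The three contributions are then $O(N^{-1/2})$, $O(\Vert G_{M,N}-G_W\Vert)=O(N^{-b/(2b+1)})$, and an additive damping bias $O(\epsilon)$; the hypothesis $\epsilon\lesssim N^{-b/(2b+1)}$ serves only to keep this bias below the target, not to balance a $1/\epsilon$ blow-up, and the squared rate $N^{-2b/(2b+1)}$ follows. (A minor side remark: your $\lambda^{2b}$ estimate for the pure Tikhonov piece is immaterial, since the Nystr\"om bias already caps the approximation error at $\lambda^{b}$, which is the rate actually used, as in \cref{prop:approximation_error_well_specified}.)
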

In the worst case where $\alpha = 0$, the proposed estimator needs at most $M \sim (d\sqrt{N}\log(N))$ to achieve a convergence rate of $N^{-\frac{1}{2}}$. The smoothest case requires only $M \sim (dN^{\frac{1}{3}}\log(N))$ to achieve a rate of $N^{-\frac{2}{3}}$. Thus, the proposed estimator enjoys the same statistical properties as the ones proposed by \cite{Sriperumbudur:2013,Sutherland:2017} while maintaining a computational advantage\footnote{ The estimator proposed by \cite{Sutherland:2017} also requires $M$ to grow linearly with the dimension $d$ although such dependence doesn't appear explicitly in the statement of \citealt[Theorem 2]{Sutherland:2017}.}
tNow we provide a proof for \cref{thm:consistency_estimator_well_specified} which relies on the same techniques used by \cite{Rudi:2015,Sutherland:2017}. 
\begin{proof}[Proof of  \cref{thm:consistency_estimator_well_specified} ]
	The proof is a direct consequence of \cref{thm:concistency_estimator} under \cref{assumption:well_specified}.
\end{proof}	
	
\begin{proof}[Proof of \cref{thm:consistency_estimator_miss_specified}]
	The proof is a direct consequence of \cref{thm:concistency_estimator} under \cref{assumption:miss_specified}.
\end{proof}
	
\begin{proposition}\label{thm:concistency_estimator}
Under \cref{assump:support,assump:differentiability,assump:integrability,assump:bounded_derivatives,assump:noise_condition,assump:linear_growth,assump:estimator_euclidean} and for $0\leq \delta \leq 1$ and $N$ large enough, it holds with probability at least $1 -\delta$:
\[
\Vert \widehat{\nabla^W\mathcal{L}} -  \nabla^W\mathcal{L} \Vert =\mathcal{O}(N^{-\frac{b}{2b+1}}) 
\]
provided that  $M \sim d N^{\frac{1}{2b+1}}\log N$, $\lambda \sim N^{\frac{1}{2b+1}}$ and $\epsilon\lesssim N^{-\frac{b}{2b+1}} $ where $b := \min(1,\alpha + \frac{1}{2})$ when \cref{assumption:well_specified} holds and $b = \frac{1}{2+c} $ when  \cref{assumption:miss_specified} holds instead.
\end{proposition}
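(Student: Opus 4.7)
The plan is to prove the master convergence statement by combining the operator-theoretic representation of the estimator from Proposition~\ref{lemm:simpler_expression_estimator} with Bernstein-type concentration inequalities for self-adjoint Hilbert--Schmidt operators, in the spirit of \cite{Rudi:2015,Sutherland:2017}. From Proposition~\ref{lemm:simpler_expression_estimator}, the estimator admits the compact form $\widehat{\nabla^W\mathcal{L}(\theta)} = (\epsilon D(\theta) + G_{M,N})^{-1}\widehat{\nabla\mathcal{L}(\theta)}$, where $G_{M,N} = (\widehat{L^\top B})^\top \mathcal{G}_M(\hat A_\lambda)\widehat{L^\top B}$. The exact natural gradient is $\nabla^W\mathcal{L}(\theta) = G_W(\theta)^{-1}\nabla\mathcal{L}(\theta)$ with $G_W = (L^\top B)^\top A^{-1}(L^\top B)$ in the well-specified case, or an approximate version in the miss-specified case.

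First I would split the error into three contributions:
\begin{align*}
\widehat{\nabla^W\mathcal{L}} - \nabla^W\mathcal{L} \;=\;& \bigl[(\epsilon D + G_{M,N})^{-1} - (\epsilon D + G_W)^{-1}\bigr]\widehat{\nabla\mathcal{L}} \\
&+\, (\epsilon D + G_W)^{-1}\bigl(\widehat{\nabla\mathcal{L}} - \nabla\mathcal{L}\bigr) \\
&+\, \bigl[(\epsilon D + G_W)^{-1} - G_W^{-1}\bigr]\nabla\mathcal{L}.
\end{align*}
The middle term is controlled by Assumption~\ref{assump:estimator_euclidean}, which yields an $\mathcal{O}(N^{-1/2})$ bound with probability $1-\delta$. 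The last term is the regularization bias, handled by a resolvent identity and the bound $\|(\epsilon D + G_W)^{-1} - G_W^{-1}\|\lesssim \epsilon$ when $G_W$ is invertible, matching the prescribed rate $\epsilon \lesssim N^{-b/(2b+1)}$.

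The core of the proof is the first term. Using the resolvent identity, it reduces to bounding $\|G_{M,N} - G_W\|$ in an appropriately normalized sense. I would introduce the population projected metric $G_M = (L^\top B)^\top \mathcal{G}_M(A_\lambda)(L^\top B)$ as an intermediate, and further decompose $G_{M,N} - G_W = (G_{M,N}-G_M)+(G_M-G_W)$. The stochastic term $G_{M,N}-G_M$ is controlled by standard operator Bernstein inequalities applied to the i.i.d.\ sums defining $\hat A$ and $\widehat{L^\top B}$: one needs bounds of the form $\|A_\lambda^{-1/2}(\hat A - A)A_\lambda^{-1/2}\| = \tilde{\mathcal{O}}(N^{-1/2}\lambda^{-1/2})$ and $\|A_\lambda^{-1/2}(\widehat{L^\top B} - L^\top B)\| = \tilde{\mathcal{O}}(N^{-1/2})$, which are available under \cref{assump:differentiability,assump:integrability,assump:bounded_derivatives,assump:noise_condition,assump:linear_growth}, noting that the linear growth of $\nabla h_\theta$ combined with the moment bound on $\nu$ provides the needed sub-exponential tails. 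The properties of $\mathcal{G}_M$ collected in \cref{lem:prop_proj_inverse} then allow replacing $\hat A_\lambda^{-1}$ by $\mathcal{G}_M(\hat A_\lambda)$ at the price of the subspace approximation quantity $\|(I - P_M)A_\lambda^{1/2}\|$, which is where the requirement $M \sim d N^{1/(2b+1)}\log N$ enters.

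For the bias term $G_M - G_W$, the analysis branches. Under the well-specified \cref{assumption:well_specified}, each $\phi_i = \nabla f_i$ with $f_i = A^\alpha v_i$, and the spectral source condition yields $\|A_\lambda^{-1/2}(G_W - G_M)A_\lambda^{-1/2}\|\lesssim \lambda^{b}$ with $b = \min(1,\alpha+\tfrac12)$; under the miss-specified \cref{assumption:miss_specified}, choosing $\kappa$ optimally and trading $\|\phi_i - \nabla f_i^\kappa\|_{L_2(\rho_\theta)}$ against $\|f_i^\kappa\|_{\mathcal{H}}$ gives the same form with $b = 1/(2+c)$. Balancing $\lambda$ against the stochastic error of order $N^{-1/2}\lambda^{-1/2}$ gives the stated choice $\lambda \sim N^{-1/(2b+1)}$ (treating the exponent sign in the statement as a typo) and the final rate $\mathcal{O}(N^{-b/(2b+1)})$ after multiplication by $(\epsilon D + G_W)^{-1}\widehat{\nabla\mathcal{L}}$, whose norm is of constant order for $\epsilon$ small.

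The main obstacle will be the interaction between the Nyström subspace $\mathcal{H}_M$ built from \emph{one} random partial derivative per basis point and the operator $A$ built from \emph{all} $d$ partial derivatives: the standard leverage-score argument of \cite{Rudi:2015,Sutherland:2017} must be adapted so that sampling a single coordinate $i_m$ uniformly from $\{1,\ldots,d\}$ still yields an unbiased rank-one contribution whose expectation recovers $A$, with a variance term that explains the extra factor of $d$ in $M$. Verifying that $\|(I-P_M)A_\lambda^{1/2}\|$ stays of order $\sqrt{\lambda}$ under this coordinate-subsampled Nyström construction is the delicate step; everything else is a careful but standard orchestration of resolvent identities and Bernstein-type tail bounds, with a union bound over the constant number of high-probability events to deliver the claim with probability at least $1-\delta$.
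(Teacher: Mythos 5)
Your proposal is correct and follows essentially the same route as the paper: the same three-way split into Euclidean-gradient noise, metric error $\Vert G_{M,N}-G_W\Vert$ (itself decomposed into a Bernstein-controlled estimation term and a Nystr\"{o}m/source-condition approximation term with the well- vs.\ miss-specified branching), and an $\epsilon$-regularization bias, balanced with $\lambda \sim N^{-1/(2b+1)}$ and $M\sim dN^{1/(2b+1)}\log N$. You also correctly flag the sign typo in the stated $\lambda$ and the key delicate step---the coordinate-subsampled Nystr\"{o}m bound $\Vert A_\lambda^{1/2}(I-P_M)\Vert\lesssim\sqrt{\lambda}$ with the extra factor $d$ in $M$---which the paper handles exactly as you anticipate, by rescaling $v_m=\sqrt{d}\,\partial_{i_m}k(Y_m,\cdot)$ so that $\mathbb{E}[v_m\otimes v_m]=A$ and applying the operator Bernstein inequality of Rudi et al.
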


\begin{proof}\label{proof:thm:consistency_estimator}
 	Here for simplicity we assume that $D(\theta) = I$ without loss of generality and we omit the dependence in $\theta$ and write  $\nabla^W\mathcal{L}$ and $\nabla\mathcal{L}$ instead of $\nabla^W\mathcal{L}(\theta)$ and  $\nabla\mathcal{L}(\theta)$ and $\nabla^W\mathcal{L}(\theta)$. We also define $\hat{G}_{\epsilon} = \epsilon I + G_{M,N}  $ and $G_{\epsilon} = \epsilon I +G_W$.
		By \cref{lemm:simpler_expression_estimator}, we know that $ \widehat{\nabla^W\mathcal{L}}  = \hat{G}_{\epsilon}^{-1}\widehat{\nabla\mathcal{L}}$.
	 We use the following decomposition:
	\begin{align}
		\Vert  \widehat{\nabla^W\mathcal{L}} -  \nabla^W\mathcal{L}\Vert   &\leq \Vert\hat{G}_{\epsilon}^{-1}(\widehat{\nabla\mathcal{L}} - \nabla\mathcal{L}) \Vert + \Vert  \hat{G}_{\epsilon}^{-1}( G_{M,N} - G_W)G_W^{-1}\nabla\mathcal{L} \Vert + \epsilon \Vert  \hat{G}_{\epsilon}^{-1}G_W^{-1}\nabla\mathcal{L} \Vert
	\end{align}
	To control the norm of $\hat{G}_{\epsilon}^{-1}$ we write $\hat{G}_{\epsilon}^{-1} = G_{\epsilon}^{-\frac{1}{2}}( H   + I )^{-1} G_{\epsilon}^{-\frac{1}{2}} $, where $H$ is given by $H :=G_{\epsilon}^{-\frac{1}{2}} (G_{M,N} - G_W ) G_{\epsilon}^{-\frac{1}{2}}$. Hence, provided that $\mu :=\lambda_{\max}(H)$, the highest eigenvalue of $H$, is smaller than   $1$, it holds that: 
	\[\Vert ( H   + I )^{-1}\Vert \leq (1-\mu)^{-1}.\] 
	Moreover,  since $G_W$ is positive definite, its smallest eigenvalue $\eta$  is strictly positive. Hence, $\Vert G_{\epsilon}^{-1} \Vert \leq (\eta+\epsilon)^{-1}$. Therefore, we have $ \Vert \hat{G}_{\epsilon}^{-1} \Vert \leq (\eta+\epsilon)(1-\mu))^{-1}$, which implies:
	\begin{align}
		\Vert \widehat{\nabla^W\mathcal{L}} -  \nabla^W\mathcal{L} \Vert  \leq (\eta+\epsilon)^{-1}\left(\frac{\Vert\widehat{\nabla\mathcal{L}} -  \nabla\mathcal{L} \Vert}{1-\mu}  +\eta^{-1} \Vert  \nabla\mathcal{L} \Vert \Vert G_{M,N} - G_W \Vert + \epsilon \eta^{-1}\Vert  \nabla\mathcal{L} \Vert \right).  
	\end{align}
	Let $0\leq \delta\leq 1$. We have by \cref{assump:estimator_euclidean} that $\Vert\widehat{\nabla\mathcal{L}} -  \nabla\mathcal{L} \Vert = \mathcal{O}(N^{-\frac{1}{2}})$ with probability at least $1-\delta$. Similarly, by \cref{prop:error_metric} and for $N$ large enough, we have with probability at least $1-\delta$ that $\Vert G_{M,N} - G_W \Vert = \mathcal{O}(N^{-\frac{b}{2b+1}})$ where $b$ is defined in \cref{prop:error_metric}. Moreover, for $N$ large enough, one can ensure that $\mu \leq \frac{1}{2}$ so that the following bound holds with probability at least $1-\delta$:
	\[
	\Vert \widehat{\nabla^W\mathcal{L}} -  \nabla^W\mathcal{L} \Vert \lesssim  (\eta + \epsilon)^{-1}\left(2N^{-\frac{1}{2}} + \eta^{-1}\Vert\nabla\mathcal{L} \Vert (N^{-\frac{b}{2b+1}} + \epsilon )  \right).
	\]
	Thus by setting $\epsilon \lesssim N^{-\frac{b}{2b+1}}$ we get the desired convergence rate.
	\end{proof}
	\begin{proposition}\label{prop:error_metric}
		For any $0\leq \delta \leq 1$, we have with probability as least $1-\delta$ and for $N$ large enough that:
\[
\Vert G_{M,N} - G_W \Vert = \mathcal{O}(N^{-\frac{b}{2b+1}}). 
\]
provided that  $M \sim d N^{\frac{1}{2b+1}}\log N$ where $b := \min(1,\alpha + \frac{1}{2})$ when \cref{assumption:well_specified} holds and $b = \frac{1}{2+c} $ when  \cref{assumption:miss_specified} holds instead.
	\end{proposition}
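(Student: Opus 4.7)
I plan to bound $\Vert G_{M,N} - G_W\Vert$ by splitting into three pieces: a stochastic error comparing $G_{M,N}$ to its population Nystr\"om counterpart $G_M$; a projection/regularization bias comparing $G_M$ to the quantity one would obtain from the full RKHS with regularization $\lambda$, namely $G_\lambda := (L^\top B)^\top A_\lambda^{-1}(L^\top B)$; and a source-condition bias comparing $G_\lambda$ to $G_W$. Using \cref{prop:optimal_solution}, I would rewrite $G_W = B^\top \Phi^\top \Phi B$ with $L^\top B = L^\top (\Phi\circ h_\theta)$, so all three operators live naturally in terms of $L$, $B$ and $A$, and their differences can be studied with tools for positive self-adjoint operators on $\mathcal{H}$.

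\textbf{Stochastic error.} For $G_{M,N} - G_M$, I would apply \cref{lem:prop_proj_inverse} to write both as sandwiched resolvents, then use the standard resolvent identity $\projop(\hat A_\lambda) - \projop(A_\lambda) = \projop(\hat A_\lambda)(A-\hat A)\projop(A_\lambda)$ to decouple the differences $\hat A - A$ and $\widehat{L^\top B} - L^\top B$. I would then control these in the $A_\lambda^{-1/2}$-weighted norms by Bernstein-type inequalities for self-adjoint operators and Hilbert-space-valued random variables (Pinelis--Sakhanenko), as in \cite{Rudi:2015,Sutherland:2017}; \cref{assump:noise_condition,assump:linear_growth,assump:bounded_derivatives} provide the sub-exponential moments needed. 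This gives a rate of order $N^{-1/2}\lambda^{-1/2}$ (up to log factors) and, after balancing with the Nystr\"om term, the conditions $M \sim d N^{1/(2b+1)}\log N$ and $\lambda \sim N^{-1/(2b+1)}$ emerge from requiring the effective dimension $\text{tr}(A_\lambda^{-1}A)$ to be well-approximated with $M$ basis columns. The randomization over a single partial derivative $i_m$ per basis point introduces an extra factor $d$ in $M$, which is why the factor $d$ appears in the required rate.

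\textbf{Bias.} For $G_\lambda - G_W$, I would use $G_\lambda - G_W = -\lambda (L^\top B)^\top A_\lambda^{-1} A^{-1}(L^\top B)$ and invoke the source condition. Under \cref{assumption:well_specified}, $\phi_i = \nabla f_i$ with $f_i = A^\alpha v_i$, so that $(L^\top B)_{\cdot i} = A f_i = A^{\alpha+1}v_i$, and the standard spectral calculus on $A_\lambda^{-1}A^{2\alpha+1}$ gives a bias of order $\lambda^{2b}$ with $b = \min(1, \alpha+\tfrac12)$. Under \cref{assumption:miss_specified}, I would instead split $\phi_i = \nabla f_i^\kappa + (\phi_i - \nabla f_i^\kappa)$, plug in the two bounds of \cref{eq:miss_specified}, and optimize over $\kappa$; choosing $\kappa \sim \lambda^{1/(2+c)}$ gives a bias of order $\lambda^{2/(2+c)} = \lambda^{2b}$ for $b = 1/(2+c)$. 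For $G_M - G_\lambda$, I would use item (4) of \cref{lem:prop_proj_inverse} and a standard Nystr\"om-approximation lemma (as in \cite{Rudi:2015}) to bound $\Vert (P_M A P_M + \lambda)^{-1} - A_\lambda^{-1}\Vert$ in the $A_\lambda^{-1/2}$-weighted norm, which is $O(\lambda^{-1})$ times the approximation error $\Vert (I-P_M)A_\lambda^{1/2}\Vert^2$; the choice of $M$ above makes this negligible relative to the other terms.

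\textbf{Main obstacle.} I expect the principal technical difficulty to lie in adapting the Nystr\"om concentration result to the new subsampling scheme used here, where only one partial derivative $\partial_{i_m}k(Y_m,\cdot)$ is kept per basis point rather than all $d$. Intuitively this multiplies the effective statistical dimension by $d$, which should be absorbed by taking $M$ proportional to $d$, but making this rigorous requires redoing the leverage-score argument of \cite{Rudi:2015} for the randomized columns and carefully controlling the interaction between the two sources of randomness ($Y_m$ and $i_m$). Once this is established, combining the three error terms and choosing $\lambda \sim N^{-1/(2b+1)}$ to balance variance $N^{-1/2}\lambda^{-1/2}$ against bias $\lambda^{b}$ yields the claimed rate $N^{-b/(2b+1)}$ in both the well-specified and miss-specified regimes.
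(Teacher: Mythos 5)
Your overall architecture is the paper's: the proof decomposes $\Vert G_{M,N}-G_W\Vert \le \Vert G_{M,N}-G_M\Vert + \Vert G_M-G_W\Vert$, controls the stochastic part by operator/vector Bernstein bounds in $A_\lambda^{-1/2}$-weighted norms exactly as you describe (\cref{prop:estimation_error}, giving $\mathcal{O}((N\lambda)^{-1/2})$), controls the bias via the Nystr\"om projection bound $\Vert A_\lambda^{1/2}(I-P_M)\Vert\le 2\sqrt{\lambda}$ for $M\gtrsim d\lambda^{-1}\log\lambda^{-1}$ (\cref{lemm:bound_projector,lemm:nystrom_proj_convergence}, then \cref{prop:approximation_error_well_specified,prop:approximation_error}), and balances with $\lambda\sim N^{-1/(2b+1)}$. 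Your further splitting of the bias into $G_M-G_\lambda$ and $G_\lambda-G_W$ is a harmless refinement, and your stochastic-error plan is sound.

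The gap is in your bias accounting, and it is not cosmetic. First, the well-specified spectral bound is wrong: with $f_i=A^\alpha v_i$ and $\alpha\ge 0$ you face $\lambda\Vert A_\lambda^{-1}A^{2\alpha+1}\Vert=\sup_t \lambda t^{2\alpha+1}/(t+\lambda)$, and since $2\alpha+1\ge 1$ this is of order $\lambda$ (the bound $\Vert A_\lambda^{-1}A^{r}\Vert\lesssim\lambda^{r-1}$ only holds for $r\le 1$), not $\lambda^{2b}$. Second, your claim that $G_M-G_\lambda$ is negligible under $M\sim d\lambda^{-1}\log\lambda^{-1}$ fails: \cref{lemm:bound_projector} only gives $\Vert A_\lambda^{1/2}(I-P_M)\Vert^2\lesssim\lambda$, and after sandwiching with the smoothness factor $\Vert A_\lambda^{-1/2}A^{\alpha}\Vert\le\lambda^{\min(0,\alpha-1/2)}$ this term is of order $\lambda^{\min(1,\alpha+1/2)}=\lambda^{b}$ — it is precisely the \emph{dominant} bias that sets the final rate, which is why the paper's \cref{lemm:nystrom_proj_convergence} carries it explicitly. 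Third, in the miss-specified case the two terms to balance are $\kappa^2$ and $\sqrt{\lambda}\,\kappa^{-c}$ (the RKHS-norm cost in \cref{eq:miss_specified} enters through the $\sqrt{\lambda}$-weighted projection/regularization operator applied to $f_i^\kappa$), whose optimum $\kappa\sim\lambda^{1/(2(2+c))}$ yields $\lambda^{1/(2+c)}=\lambda^{b}$; your choice $\kappa\sim\lambda^{1/(2+c)}$ leaves the second term at $\lambda^{(2-c)/(2(2+c))}\gg\lambda^{2/(2+c)}$, so the claimed $\lambda^{2b}$ does not hold there either. Your final sentence then balances the variance against a bias of order $\lambda^{b}$, which contradicts the $\lambda^{2b}$ you derived but coincides with the correct accounting — so the stated rate is recovered only by silently substituting the right exponent rather than proving it. To close the gap you must bound the Nystr\"om-plus-regularization bias at order $\lambda^{b}$ (as in \cref{lemm:nystrom_proj_convergence} and \cref{prop:approximation_error}), not discard it.
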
	
	\begin{proof}
	To control the error  $\Vert G_{M,N} -  G_W\Vert$ we decompose it into an estimation error $\Vert G_{M,N} -  G_{M} \Vert$ and approximation error $\Vert G_{M} - G_W \Vert$:
	\[
	\Vert G_{M,N} - G_W \Vert\leq \Vert G_{M} - G_W \Vert + \Vert G_{M} - G_{M,N} \Vert
	\]
	  were $G_M$ is defined in \cref{eq:population_metric} and is obtained by taking the number of samples $N$ to infinity while keeping the number of basis points $M$ fixed.
	  
	  The estimation error $\Vert G_{M} - G_{M,N} \Vert $ is controlled using \cref{prop:estimation_error} where, for any $0\leq \delta \leq 1$, we have  with probability at least $1-\delta$ and as long as $N\geq M(1,\lambda, \delta)$:  
	  \begin{align}
	  		\Vert G_{M,N} - G_M \Vert &\leq \frac{\Vert B \Vert }{\sqrt{N\lambda}}( a_{N,\delta}  + \sqrt{2\gamma_1\kappa } + 2\gamma_1\frac{\lambda + \kappa}{\sqrt{N\lambda}} ) + \frac{
	  		1}{N\lambda}a_{N,\delta}^2.
	  	\end{align}
	  In the limit where $N\rightarrow \infty$  and $\lambda\rightarrow 0$, only the dominant terms in the above equation remain  which leads to an error $\Vert G_{M,N} - G_M \Vert  = \mathcal{O}((N\lambda)^{-\frac{1}{2}})$.  Moreover, the condition on $N$ can be expressed as $\lambda^{-1}\log \lambda^{-1} \lesssim  N$. 
	  
	  To control the error approximation error $\Vert G_{M} - G_W \Vert$ we consider two cases: the \textit{well-specified} case and the \textit{miss-specified} case. 
\begin{itemize}
	\item \textit{Well-specified} case. Here we work under \cref{assumption:well_specified} which allows to use \cref{prop:approximation_error_well_specified}. Hence, for any $0\leq \delta\leq 1 $ and  if $M\geq M(d,\lambda, \delta)$, it holds with probability at least $1-\delta$:
	\begin{align}
		\Vert G_{M} - G_W \Vert &\lesssim \lambda^{\min(1,\alpha + \frac{1}{2})}
	\end{align}
	\item \textit{Miss-specified} case. Here we work under \cref{assumption:miss_specified} which allows to use \cref{prop:approximation_error}. Hence, for any $0\leq \delta\leq 1 $ and  if $M\geq M(d,\lambda, \delta)$, it holds with probability at least $1-\delta$:
	\begin{align}
		\Vert G_{M} - G_W \Vert &\lesssim \lambda^{\frac{1}{2+c}}
	\end{align}
\end{itemize}
 Let's set $b := \min(1,\alpha + \frac{1}{2})$ for the well-specified case and $b = \frac{1}{2+c} $ for the miss-specified case. In the limit where $M\rightarrow \infty$  and $\lambda\rightarrow 0$ the condition on $M$ becomes: $M \sim d \lambda^{-1}\log \lambda^{-1}$.
Hence, when $M \sim  d \lambda^{-1}\log \lambda^{-1}$ and $\lambda^{-1}\log \lambda^{-1}\lesssim N $ it holds with probability as least $1-\delta$ that
\[
\Vert G_{M,N} - G_W \Vert = \mathcal{O}( \lambda^{b} + (\lambda N)^{-\frac{1}{2}}).
\]
One can further choose $\lambda$ of the form $\lambda = N^{-\theta}$. This implies a condition on $M$ of the form $  dN^{\theta}\log(N) \lesssim M$ and $  N^{\theta}\log(N) \lesssim N$. After optimizing over $\theta$ to get the tightest bound,  the optimal value is obtained  when $\theta= 1/(2b+1)$ and the requirement on $N$ is always satisfied once $N$ is large enough. Moreover, one can choose $M \sim d N^{\frac{1}{2b+1}}\log N $ so that the requirement on $M$ is satisfied for $N$ large enough. In this case we get the following convergence rate:
\[
\Vert G_{M,N} - G_W \Vert = \mathcal{O}(N^{-\frac{b}{2b+1}}). 
\]
\end{proof}
\begin{proposition}\label{prop:estimation_error}
For any $0\leq \delta \leq 1$, provided that  $N\geq M(1,\lambda, \delta)$, we have with probability as least $1-\delta$:
 \begin{align}
 	\Vert G_{M,N}-G_M \Vert \leq  \frac{\Vert B \Vert }{\sqrt{N\lambda}}( 2a_{N,\delta}  + \sqrt{2\gamma_1\kappa } + 2\gamma_1\frac{\lambda + \kappa}{\sqrt{N\lambda}}  ) + \frac{1}{N\lambda}a_{N,\delta}^2.
 \end{align}
 with:
 \[
 a_{N,\delta} := \sqrt{2\sigma_1^2 \log\frac{2}{\delta}}  + \frac{2a\log\frac{2}{\delta}}{\sqrt{N}}
 \]
\end{proposition}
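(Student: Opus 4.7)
The plan is to control $\Vert G_{M,N}-G_M\Vert$ by an add-and-subtract decomposition through the hybrid quantities $(L^{\top}B)^{\top}\projop(\hat A_{\lambda})\widehat{L^{\top}B}$ and $(L^{\top}B)^{\top}\projop(\hat A_{\lambda})L^{\top}B$. Writing $E := \widehat{L^{\top}B} - L^{\top}B$, this yields three pieces: a quadratic term $E^{\top}\projop(\hat A_{\lambda})E$, two cross terms of the form $(L^{\top}B)^{\top}\projop(\hat A_{\lambda})E$, and one projection-difference term $(L^{\top}B)^{\top}(\projop(\hat A_{\lambda})-\projop(A_{\lambda}))L^{\top}B$.

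To bound $\Vert E\Vert$, I would apply a Bernstein-type concentration inequality for Hilbert-space valued i.i.d.\ sums to the averages defining $\widehat{L^{\top}B}$. Under \cref{assump:noise_condition,assump:linear_growth,assump:bounded_derivatives}, each summand has bounded variance $\sigma_1^2$ and satisfies a Bernstein moment condition with scale $a$, giving $\Vert E\Vert \lesssim a_{N,\delta}/\sqrt{N}$ with probability at least $1-\delta/2$. Combined with the estimate $\Vert\projop(\hat A_{\lambda})\Vert \leq 1/\lambda$ (from item 4 of \cref{lem:prop_proj_inverse}), the quadratic piece contributes the term $a_{N,\delta}^{2}/(N\lambda)$ in the final bound. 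For the cross terms the crude bound $\Vert\projop(\hat A_{\lambda})\Vert \leq 1/\lambda$ is too loose; instead I would split $\projop(\hat A_{\lambda}) = \projop(\hat A_{\lambda})^{1/2}\projop(\hat A_{\lambda})^{1/2}$ and work with the $A_{\lambda}^{1/2}$-weighted geometry, using $\Vert A_{\lambda}^{1/2}\projop(A_{\lambda})A_{\lambda}^{1/2}\Vert \leq 1$ together with $\Vert L^{\top}B\Vert \leq \sqrt{\kappa}\Vert B\Vert$. This delivers the factor $\Vert B\Vert/\sqrt{N\lambda}$ multiplying $2a_{N,\delta}$.

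For the projection-difference term I would use the resolvent identity
\[
\projop(\hat A_{\lambda}) - \projop(A_{\lambda}) \;=\; \projop(\hat A_{\lambda})P_{M}(A-\hat A)P_{M}\projop(A_{\lambda}),
\]
derived from items 3 and 4 of \cref{lem:prop_proj_inverse}. A second Bernstein argument, applied to the rank-one operators $\partial_{i}k(h_{\theta}(Z_{n}),\cdot)\otimes\partial_{i}k(h_{\theta}(Z_{n}),\cdot)$ in the operator norm (or better, in the $A_{\lambda}^{-1/2}$-reweighted Hilbert--Schmidt norm, to avoid losing a $\lambda^{-1/2}$ factor), yields $\Vert A_{\lambda}^{-1/2}(\hat A - A)A_{\lambda}^{-1/2}\Vert \lesssim \gamma_{1}(\lambda+\kappa)/\sqrt{N\lambda}$ with probability at least $1-\delta/2$ as long as $N \geq M(1,\lambda,\delta)$. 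Combined with the $\sqrt{\kappa}\Vert B\Vert$ bound on $L^{\top}B$ and item 5, this yields the remaining contributions $\sqrt{2\gamma_{1}\kappa}$ and $2\gamma_{1}(\lambda+\kappa)/\sqrt{N\lambda}$, each multiplied by $\Vert B\Vert/\sqrt{N\lambda}$. A union bound over the two concentration events closes the argument.

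The main obstacle will be the second step: obtaining the $1/\sqrt{N\lambda}$ (rather than the naive $1/(\lambda\sqrt{N})$) scaling requires the weighted-norm Bernstein estimate on $\hat A - A$ together with careful use of the identities $\projop(A_{\lambda})A_{\lambda}P_{M}=P_{M}$ and $\Vert A_{\lambda}^{1/2}\projop(A_{\lambda})A_{\lambda}^{1/2}\Vert \leq 1$, so that the $\lambda^{-1}$ factors from projected inverses are absorbed into $A_{\lambda}^{1/2}$ factors and only one power of $\lambda^{-1/2}$ leaks into the final estimate. The condition $N \geq M(1,\lambda,\delta)$ is precisely what is needed to ensure that the multiplicative deviation between $\hat A_{\lambda}$ and $A_{\lambda}$ is at most a constant, enabling interchange of $\hat A_{\lambda}^{1/2}$ and $A_{\lambda}^{1/2}$ in the weighted bounds.
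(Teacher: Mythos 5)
Your proposal follows essentially the same route as the paper's proof: the identical decomposition of $G_{M,N}-G_M$ through $E=\widehat{L^{\top}B}-L^{\top}B$ with the resolvent-type identity for the projected inverse, a Hilbert-space Bernstein bound giving $\Vert E\Vert\lesssim a_{N,\delta}/\sqrt{N}$, an operator Bernstein bound on $A_{\lambda}^{-1/2}(\hat A-A)A_{\lambda}^{-1/2}$, and the condition $N\geq M(1,\lambda,\delta)$ used exactly as you describe to keep $\Vert A_{\lambda}^{1/2}\hat A_{\lambda}^{-1/2}\Vert$ bounded by a constant. One small correction: for the cross terms the clean factor $\Vert B\Vert/\sqrt{N\lambda}$ comes from $\Vert A_{\lambda}^{-1/2}L^{\top}\Vert\leq 1$ (a consequence of $A=L^{\top}L$), not from $\Vert L^{\top}B\Vert\leq\sqrt{\kappa}\,\Vert B\Vert$, which would leave a spurious $\sqrt{\kappa}$ in that term.
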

\begin{proof}
For simplicity, we define $E  = \widehat{L^{\top}B} - L^{\top}B$. By definition of $G_{M,N}$ and $G_M$ we have the following decomposition:
	\begin{align}
		G_{M,N} - G_M =& \underbrace{ E^{\top} \projop(\hat{A}_{\lambda})E}_{\mathfrak{E}_0}  + \underbrace{ E^{\top} \projop(\hat{A}_{\lambda})L^{\top}B}_{\mathfrak{E}_1} + \underbrace{B^{\top}L\projop(\hat{A}_{\lambda})E}_{\mathfrak{E}_2} \\
		&- \underbrace{B^{\top}L\projop(A_{\lambda})P_M(\hat{A}-A )P_M\projop(\hat{A}_{\lambda})L^{\top}B}_{\mathfrak{E}_3}
	\end{align}
	The first three terms can be upper-bounded in the following way:
	\begin{align}
		\Vert \mathfrak{E}_0  \Vert &= \lVert   E^{\top}\hat{A}_{\lambda}^{-\frac{1}{2}}\hat{A}_{\lambda}^{\frac{1}{2}}\projop(\hat{A}_{\lambda}) \hat{A}_{\lambda}^{\frac{1}{2}}\hat{A}_{\lambda}^{-\frac{1}{2}}E\rVert \\
		&\leq\lVert E \rVert^2  \underbrace{\lVert   \hat{A}_{\lambda}^{-1}\rVert}_{\leq 1/\lambda} \underbrace{\lVert  \hat{A}_{\lambda}^{\frac{1}{2}}\projop(\hat{A}_{\lambda}) \hat{A}_{\lambda}^{\frac{1}{2}}\rVert}_{\leq 1} %
		\\
		\Vert \mathfrak{E}_1  \Vert =\Vert \mathfrak{E}_2  \Vert &= \lVert   E^{\top}A_{\lambda}^{-\frac{1}{2}}A_{\lambda}^{\frac{1}{2}}\projop(A_{\lambda}) A_{\lambda}^{\frac{1}{2}}A_{\lambda}^{-\frac{1}{2}}L^{\top}B\rVert \\
		&\leq\Vert B\Vert \lVert E \rVert  \underbrace{\lVert   \hat{A}_{\lambda}^{-\frac{1}{2}}\rVert}_{\leq 1/\sqrt{\lambda}} \underbrace{\lVert  \hat{A}_{\lambda}^{\frac{1}{2}}\projop(\hat{A}_{\lambda}) \hat{A}_{\lambda}^{\frac{1}{2}}\rVert}_{\leq 1}  \underbrace{\lVert   A_{\lambda}^{-\frac{1}{2}}L^{\top}\rVert}_{\leq 1} \Vert \hat{A}_{\lambda}^{-\frac{1}{2}}A_{\lambda}^{\frac{1}{2}} \Vert 
	\end{align}
	For the last term $\mathfrak{E}_3$ , we first recall that by definition of $\projop(A_{\lambda})$ we have: 
	\[
	 \projop(A_{\lambda})P_M(\hat{A}-A )P_M\projop(A_{\lambda}) = \projop(A_{\lambda})(\hat{A}-A )\projop(A_{\lambda}).
	 \] 
Therefore, one can write:
		\begin{align}
		 \Vert \mathfrak{E}_3\Vert &= \Vert B^{\top}LA_{\lambda}^{-\frac{1}{2}}A_{\lambda}^{\frac{1}{2}}\projop(A_{\lambda})A_{\lambda}^{\frac{1}{2}}A_{\lambda}^{-\frac{1}{2}}(\hat{A}-A )A_{\lambda}^{-\frac{1}{2}}A_{\lambda}^{\frac{1}{2}}\hat{A}_{\lambda}^{-\frac{1}{2}}\hat{A}_{\lambda}^{\frac{1}{2}}\projop(\hat{A}_{\lambda})\hat{A}_{\lambda}^{\frac{1}{2}}\hat{A}_{\lambda}^{-\frac{1}{2}}A_{\lambda}^{\frac{1}{2}}A_{\lambda}^{-\frac{1}{2}}L^{\top}B \Vert \\
		 &\leq 
		 \Vert B \Vert^2 \underbrace{\lVert LA_{\lambda}^{-\frac{1}{2}} \rVert^2}_{\leq 1} \underbrace{\Vert  A_{\lambda}^{\frac{1}{2}} \projop(A_{\lambda}) A_{\lambda}^{\frac{1}{2}} \Vert}_{\leq 1} \underbrace{\Vert  \hat{A}_{\lambda}^{\frac{1}{2}} \projop(\hat{A}_{\lambda}) \hat{A}_{\lambda}^{\frac{1}{2}} \Vert}_{\leq 1} \Vert  A_{\lambda}^{\frac{1}{2}} \hat{A}_{\lambda}^{\frac{1}{2}}  \Vert^2 \Vert A_{\lambda}^{-\frac{1}{2}}(\hat{A}-A)  A_{\lambda}^{-\frac{1}{2}}\Vert \\
		 &\leq 
		   \Vert B \Vert^2 \Vert  A_{\lambda}^{\frac{1}{2}} \hat{A}_{\lambda}^{\frac{1}{2}}  \Vert^2 \Vert A_{\lambda}^{-\frac{1}{2}}(\hat{A}-A)  A_{\lambda}^{-\frac{1}{2}}\Vert
	\end{align}
We recall now \cite[Proposition 7.]{Rudi:2015} which allows to upper-bound $\Vert A_{\lambda}^{\frac{1}{2}} \hat{A}_{\lambda}^{\frac{1}{2}}  \Vert$ by $ (1-\eta)^{-\frac{1}{2}}$ where $ \eta = \lambda_{\max}( A_{\lambda}^{\frac{1}{2}} (A-\hat{A})A_{\lambda}^{\frac{1}{2}}) $ provided that $\eta<1$. Moreover, \cite[Proposition 8.]{Rudi:2015} allows to control both $\eta$ and $\Vert A_{\lambda}^{-\frac{1}{2}}(\hat{A}-A)  A_{\lambda}^{-\frac{1}{2}}\Vert$ under \cref{assump:bounded_derivatives}. Indeed, for any $ 0\leq \delta \leq 1 $ and provided that $0<\lambda \leq \Vert  A\Vert$ it holds with probability $1-\delta$ that: 
\begin{align}
	\Vert A_{\lambda}^{-\frac{1}{2}}(\hat{A}-A)  A_{\lambda}^{-\frac{1}{2}}\Vert \leq 2\gamma_1\frac{1+ \kappa/\lambda }{3N} + \sqrt{\frac{2\gamma_1\kappa}{N\lambda}};\qquad 
	\eta \leq \frac{2\gamma_2}{3N} + \sqrt{\frac{2\gamma_2\kappa}{N\lambda}}
\end{align}
 where $\gamma_1$ and $\gamma_2$ are given by:
 \begin{align}
 	\gamma_1 = \log ( \frac{8Tr(A)}{\lambda \delta} );\qquad \gamma_2 = \log ( \frac{4Tr(A)}{\lambda \delta} ).
 \end{align}
 Hence, for $ N\geq M(1,\lambda,\delta)$ we have that $(1-\eta)^{-\frac{1}{2}}\leq 2$ and one can therefore write:
 \begin{align}
 	 \Vert \mathfrak{E}_3 \Vert  &\leq 4\Vert B \Vert^2 (2\gamma_1\frac{1+ \kappa/\lambda }{3N} + \sqrt{\frac{2\gamma_1\kappa}{N\lambda}}) \\
 	 \Vert \mathfrak{E}_1 \Vert = \Vert \mathfrak{E}_1 \Vert &\leq \frac{2\Vert B\Vert}{\sqrt{\lambda}}\Vert E \Vert
 \end{align}
The error $\Vert E \Vert$ is controlled by \cref{prop:error_vector} where it holds with probability greater or equal to $1-\delta$ that:
	\begin{align}
		\Vert E \Vert\leq \frac{1}{\sqrt{N}}( \sqrt{2\sigma_1^2 \log\frac{2}{\delta}}  + \frac{2a\log\frac{2}{\delta}}{\sqrt{N}} ):= \frac{1}{\sqrt{N}}a_{N,\delta}.
	\end{align}
 Finally, we have shown that provided that $N\geq M(1,\lambda,\delta)$ then with probability greater than $1-\delta$ one has:
 \begin{align}
 	\Vert G_{M,N}-G_M \Vert \leq  \frac{\Vert B \Vert }{\sqrt{N\lambda}}( 2a_{N,\delta}  + \sqrt{2\gamma_1\kappa } + 2\gamma_1\frac{\lambda + \kappa}{\sqrt{N\lambda}}  ) + \frac{1}{N\lambda}a_{N,\delta}^2.
 \end{align}

\end{proof}

\begin{proposition}\label{prop:approximation_error}
	Let $ 0 \leq  \lambda \leq \Vert A \Vert$ and define $M(d,\lambda,\delta ) :=\frac{128}{9}\log \frac{4Tr(A) }{\lambda \delta}( d\kappa\lambda^{-1} + 1 )$. Under \cref{assumption:miss_specified,assump:bounded_derivatives},  for any $\delta \geq 0$ such that $ M \geq M(d,\lambda,\delta ) $ the following holds with probability $1-\delta$:
	\[
	\Vert G_M - G_W \Vert \lesssim \lambda^{\frac{1}{2+c}}
	\]
\end{proposition}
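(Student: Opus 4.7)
The strategy is to exploit Proposition~\ref{prop:optimal_solution}, which identifies $G_W = \Phi_h^\top \Phi_h$ with $\Phi_h := \Phi \circ h_\theta$ regarded as a linear operator $\mathbb{R}^q \to L_2(\nu)^d$, and guarantees $L^\top B = L^\top \Phi_h$; consequently $G_M = \Phi_h^\top L\,\mathcal{G}_M(A_\lambda)\,L^\top \Phi_h$. Under Assumption~\ref{assumption:miss_specified} I would decompose $\Phi_h = L F^\kappa + R^\kappa$ componentwise, where $F^\kappa := (f_i^\kappa)_{1\le i\le q} \in \mathcal{H}^q$ satisfies $\Vert F^\kappa \Vert \lesssim \kappa^{-c}$ and the residual $R^\kappa$ satisfies $\Vert R^\kappa \Vert \lesssim \kappa$ (obtained by pushing $\Vert \phi_i - \nabla f_i^\kappa \Vert_{L_2(\rho_\theta)}$ through $h_\theta$ to get an $L_2(\nu)^d$ bound on $R^\kappa e_i$).

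Expanding $G_W - G_M$ with this decomposition yields a sum of four bilinear forms which I would bound separately. (i) The main piece $F^{\kappa\top}\bigl(A - A\,\mathcal{G}_M(A_\lambda)\,A\bigr)F^\kappa$: after replacing $\mathcal{G}_M(A_\lambda)$ by $A_\lambda^{-1}$ (the Nystr\"om defect being controlled via Lemma~\ref{lem:prop_proj_inverse} together with a concentration argument in the spirit of \cite[Proposition~3]{Rudi:2015}, which is precisely why the quantitative condition $M \ge M(d,\lambda,\delta)$ appears), the algebraic identity $A - A A_\lambda^{-1} A = \lambda A A_\lambda^{-1}$ has operator norm at most $\lambda$, yielding an $O(\lambda \kappa^{-2c})$ contribution. (ii) Cross terms of the form $F^{\kappa\top}\bigl(L^\top - A\,\mathcal{G}_M(A_\lambda) L^\top\bigr) R^\kappa$: bounded via $\Vert \lambda A_\lambda^{-1} L^\top \Vert \le \tfrac{1}{2}\sqrt{\lambda}$ (read off from the SVD of $L$), producing $O(\sqrt{\lambda}\,\kappa^{1-c})$. (iii) The residual piece $R^{\kappa\top}\bigl(I - L\,\mathcal{G}_M(A_\lambda) L^\top\bigr) R^\kappa$: bounded by $\Vert R^\kappa \Vert^2 \lesssim \kappa^2$ using $\Vert I - L A_\lambda^{-1} L^\top \Vert \le 1$.

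Combining gives $\Vert G_M - G_W \Vert \lesssim \lambda \kappa^{-2c} + \sqrt{\lambda}\,\kappa^{1-c} + \kappa^2$, and it remains to balance in $\kappa$. The choice $\kappa \sim \lambda^{1/(2+2c)}$ equalizes all three terms up to constants, producing a rate of $\lambda^{1/(1+c)}$, which in particular implies the claimed $O(\lambda^{1/(2+c)})$ bound since $\lambda < 1$ and $1/(1+c)\ge 1/(2+c)$.

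The hardest part will be the Nystr\"om projection step: converting the finite-dimensional inclusion $\mathcal{H}_M \subset \mathcal{H}$ into an operator-norm bound on $\mathcal{G}_M(A_\lambda) - A_\lambda^{-1}$ in the $A_\lambda^{1/2}$-weighted sense requires a concentration inequality for $(I-P_M) A_\lambda^{1/2}$ of the type in \cite{Rudi:2015}, whose hypotheses are met precisely when $M$ dominates an effective-dimension quantity of order $d/\lambda$, matching the explicit expression $M(d,\lambda,\delta) = \tfrac{128}{9}\log\tfrac{4\mathrm{Tr}(A)}{\lambda\delta}(d\kappa/\lambda + 1)$. A secondary subtlety lies in the cross term: a naive estimate would shed a factor $1/\sqrt{\lambda}$ rather than $\sqrt{\lambda}$, and the fix is to keep $\lambda$ on the \emph{left} of the operator before bounding, so that $\Vert \lambda A_\lambda^{-1} L^\top\Vert$ appears in place of $\Vert A_\lambda^{-1} L^\top\Vert$.
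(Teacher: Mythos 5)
Your proposal is correct, but it takes a genuinely different route from the paper's proof. The paper expands only one factor: writing $E=\Phi\circ h_{\theta}-LF^{\kappa}$, it decomposes $G_M-G_W$ into $E^{\top}L\projop(A_{\lambda})L^{\top}E-E^{\top}E$ plus two terms in which one side keeps the full $L^{\top}\Phi\circ h_{\theta}$, and it bounds those via \cref{lemm:nystrom_proj_convergence} at $\alpha=0$, i.e.\ by $\sqrt{\lambda}\,\kappa^{-c}$; balancing $\kappa^{2}+\sqrt{\lambda}\,\kappa^{-c}$ gives exactly the stated $\lambda^{1/(2+c)}$. You instead expand \emph{both} factors symmetrically as $LF^{\kappa}+R^{\kappa}$, which concentrates the $\kappa^{-c}$ penalty on the fully $A$-sandwiched block $F^{\kappa\top}(A-A\projop(A_{\lambda})A)F^{\kappa}$. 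The one step you compress — showing the Nystr\"om defect $A(A_{\lambda}^{-1}-\projop(A_{\lambda}))A$ is itself $O(\lambda)$ — does not follow from the statement of \cref{lemm:nystrom_proj_convergence} at $\alpha=0$ (which only yields $\sqrt{\lambda}$), but it does follow from \cref{lemm:bound_projector}: since $B_M:=A_{\lambda}^{1/2}\projop(A_{\lambda})A_{\lambda}^{1/2}$ is an orthogonal projection, $\Vert A(A_{\lambda}^{-1}-\projop(A_{\lambda}))A\Vert=\Vert (I-B_M)A_{\lambda}^{-1/2}A\Vert^{2}\leq\Vert A_{\lambda}^{1/2}(I-P_M)\Vert^{2}\leq 4\lambda$ on the same event that requires $M\geq M(d,\lambda,\delta)$ — essentially the idempotence trick already used inside the proof of \cref{lemm:nystrom_proj_convergence}. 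Your cross-term and residual bounds are fine as written (the same $B_M$ argument also handles the defect in the cross term). Granting this, your balance $\lambda\kappa^{-2c}+\sqrt{\lambda}\,\kappa^{1-c}+\kappa^{2}$ with $\kappa\sim\lambda^{1/(2+2c)}$ yields $\lambda^{1/(1+c)}$, which is \emph{sharper} than the paper's $\lambda^{1/(2+c)}$ and a fortiori implies the proposition; propagated through \cref{prop:error_metric} it would even improve the exponent in \cref{thm:consistency_estimator_miss_specified}. What each approach buys: the paper's asymmetric decomposition is shorter because it reuses \cref{lemm:nystrom_proj_convergence} off the shelf, while yours costs one extra projection argument but trades the $\sqrt{\lambda}\kappa^{-c}$ term for $\lambda\kappa^{-2c}+\sqrt{\lambda}\kappa^{1-c}$ and hence a better rate.
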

\begin{proof}
We consider the error $\Vert G_M - G_W \Vert$. Recall that $G_W$ is given by  $G_W = \Phi^{\top}\Phi$  with  $\Phi$  defined in \cref{prop:optimal_solution}. Let $\kappa$ be a positive real number,  we know by \cref{assumption:miss_specified}
	that there exists $F^{\kappa} := ( f_s^{\kappa}  )_{1\leq s \leq q}$ with  $f_s^{\kappa} \in \mathcal{H} $ such that $\Vert \Phi - F^{\kappa}\Vert_{L_2(\rho_{\theta})} \leq C \kappa$ and $\Vert f_s^{\kappa} \Vert_{\mathcal{H}}\leq C\kappa^{-c}$ for some fixed positive constant $C$.
Therefore, we use $F^{\kappa}$ to control the error $\Vert G_M - G_W \Vert$. Let's call  $E = \Phi\circ h_{\theta} - LF^{\kappa}  $ We consider the following decomposition:
	\begin{align}
		 G_M - G_W 
		 =& (L^{\top}\Phi\circ h_{\theta})^{\top} \projop(A_{\lambda}) L^{\top}\Phi\circ h_{\theta} - \Phi^{\top}\Phi\\
		 =& \underbrace{E^{\top}L\projop(A_{\lambda})L^{\top}E}_{\mathfrak{E_1}} - \underbrace{ E^{\top}E}_{\mathfrak{ E_2}}\\
		 &+ \underbrace{F_{\kappa}^{\top} \left(L^{\top}L\projop(A_{\lambda})-I \right)L^{\top}\Phi\circ h_{\theta}}_{\mathfrak{E_3}} +\underbrace{E^{\top}L\left(\projop(A_{\lambda})L^{\top}L-I\right)F^{\kappa}  }_{\mathfrak{E_4}}
	\end{align}
	First we consider the term $\mathfrak{E_1}$ one simply has:
	\begin{align}
			\Vert \mathfrak{E_1} \Vert \leq \kappa^2 \underbrace{ \lVert LA_{\lambda}^{-\frac{1}{2}}\rVert }_{\leq 1}  \underbrace{ \lVert A_{\lambda}^{\frac{1}{2}} \projop(A_{\lambda}) A_{\lambda}^{\frac{1}{2}}\rVert}_{\leq 1} \underbrace{ \lVert  A_{\lambda}^{-\frac{1}{2}}L^{\top} \rVert}_{\leq 1}\leq \kappa^2
	\end{align}
The second term also satisfies $\Vert \mathfrak{E_1}\Vert\leq \kappa^2$ by definition of $F_{\kappa}$.
For the last two terms $ \mathfrak{E_3}$ and $ \mathfrak{E_4}$ we use \cref{lemm:nystrom_proj_convergence} which allows to control the operator norm of $L(\projop(A_{\lambda})L^{\top}L - I )$.  
Hence,  for any $\delta \geq 0$ and $M$ such that $ M \geq M(d,\lambda,\delta ) $ and for $\kappa\leq 1$ it holds with probability $1-\delta$ that:
\begin{align}
	\Vert \mathfrak{E_3} \Vert \lesssim \sqrt{\lambda}\kappa^{-c};\qquad
	\Vert \mathfrak{E_4} \Vert \lesssim \sqrt{\lambda}\kappa^{-c}
\end{align}	
We have shown so far that $\Vert G_M - G_W \Vert \lesssim (\kappa^2 + 2\kappa^{-c}\sqrt{\lambda} )$.
One can further optimize over $\kappa$ on the interval $[0,1]$ to get a tighter bound. The optimal value in this case is $\kappa^* = \min(1,  (c \lambda^{\frac{1}{2}} )^{\frac{1}{2+c}} )$. By considering $\lambda >0$ such that $ (c \lambda^{\frac{1}{2}} )^{\frac{1}{2+c}} )\leq 1$, it follows directly that $\Vert G_M - G_W \Vert \lesssim \lambda^{\frac{1}{2+c}}$ which shows the desired result. 
\end{proof}

\begin{proposition}\label{prop:approximation_error_well_specified}
	Let $ 0 \leq  \lambda \leq \Vert A \Vert$ and define $M(d,\lambda,\delta ) :=\frac{128}{9}\log \frac{4Tr(A) }{\lambda \delta}( d\kappa\lambda^{-1} + 1 )$. Under \cref{assumption:well_specified,assump:bounded_derivatives},  for any $\delta \geq 0$ such that $ M \geq M(d,\lambda,\delta ) $ the following holds with probability $1-\delta$:
	\[
	\Vert G_M - G_W \Vert \lesssim \lambda^{\min(1,\alpha +\frac{1}{2})}
	\]
\end{proposition}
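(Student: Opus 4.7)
The plan is to follow the same structure as the proof of \cref{prop:approximation_error}, but to replace the approximating family $\{F^\kappa\}$ by the exact function $F$ provided by \cref{assumption:well_specified} and then exploit the source condition $F = A^\alpha V$ via spectral calculus together with a sharper Nystr\"om estimate.

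First, under \cref{assumption:well_specified} we have $\phi_i = \nabla f_i$ with $f_i\in\mathcal{H}$ and $F:=(f_i)_{1\le i\le q}=A^\alpha V$ for some $V\in\mathcal{H}^q$ of bounded norm. Consequently the residual $E:=\Phi\circ h_\theta-LF$ vanishes identically in $L_2(\nu)^d$, so three of the four terms ($\mathfrak{E}_1,\mathfrak{E}_2,\mathfrak{E}_4$) in the miss-specified decomposition are zero. Invoking the identity $L^\top\Phi\circ h_\theta=AF$ which follows from \cref{prop:optimal_solution}, the only surviving term becomes
\[
G_M-G_W\;=\;F^\top\bigl[A\,\projop(A_\lambda)\,A-A\bigr]F\;=\;V^\top A^\alpha\bigl[A\,\projop(A_\lambda)\,A-A\bigr]A^\alpha V,
\]
reducing the proof to an operator-norm estimate on $A^\alpha[A\projop(A_\lambda)A-A]A^\alpha$.

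The next step is to split this operator into a regularization bias (what one would obtain if $\projop(A_\lambda)$ were replaced by the resolvent $A_\lambda^{-1}$) and a Nystr\"om approximation error:
\[
A\,\projop(A_\lambda)\,A-A\;=\;-\lambda\,A A_\lambda^{-1}\;+\;A\bigl(\projop(A_\lambda)-A_\lambda^{-1}\bigr)A.
\]
The bias piece reduces to the spectral estimate $\sup_{\sigma\in[0,\|A\|]}\lambda\sigma^{2\alpha+1}/(\sigma+\lambda)\lesssim\lambda^{\min(1,\alpha+1/2)}$, whose extremum is achieved at $\sigma\sim\lambda$ when $\alpha<1/2$ (producing $\lambda^{\alpha+1/2}$) and at $\sigma\sim\|A\|$ when $\alpha\ge1/2$ (producing $\lambda$). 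The Nystr\"om piece, sandwiched between two copies of $A^\alpha$, is handled by a refinement of \cref{lemm:nystrom_proj_convergence} built on top of the Bernstein-type bounds of \cite[Propositions~7 and~8]{Rudi:2015}: once $M\ge M(d,\lambda,\delta)$, the sampled operator $P_M A P_M$ is close enough to $A$ on the relevant spectral subspace that the error inherits the same rate $\lambda^{\min(1,\alpha+1/2)}$ as the bias.

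The main obstacle will be this last step, namely obtaining an $\alpha$-dependent Nystr\"om error bound. The version of \cref{lemm:nystrom_proj_convergence} used in the miss-specified proof only yields a $\sqrt{\lambda}$ bound independent of the smoothness, so a sharper estimate is required. The key observation is that the source condition $F=A^\alpha V$ suppresses the part of $F$ in the small-eigenvalue subspace of $A$, which is precisely where the Nystr\"om projection $P_M$ loses accuracy. Quantifying this suppression through a joint spectral bound on $A^\alpha(\projop(A_\lambda)-A_\lambda^{-1})A^\alpha$ is what makes the Nystr\"om error match the bias and produces the claimed rate.
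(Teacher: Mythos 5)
Your opening reduction coincides with the paper's: since $\Phi=\nabla(A^{\alpha}V)$ one has $\Phi\circ h_{\theta}=LA^{\alpha}V$, the residual $E$ vanishes, and $G_M-G_W=V^{\top}A^{\alpha}\bigl(A\projop(A_{\lambda})A-A\bigr)A^{\alpha}V$. The problem is what comes after: the entire content of the proposition is the high-probability, $\alpha$-dependent bound $\Vert A^{\alpha}(A\projop(A_{\lambda})A-A)A^{\alpha}\Vert\lesssim\lambda^{\min(1,\alpha+\frac{1}{2})}$, and you explicitly defer exactly that step (``the main obstacle''), offering only the heuristic that the source condition suppresses the small-eigenvalue subspace where $P_M$ is inaccurate. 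No quantitative argument is given for the term $A^{\alpha}A(\projop(A_{\lambda})-A_{\lambda}^{-1})AA^{\alpha}$, so the proof is incomplete where it matters. (The bias piece is harmless, but your extremal analysis of it is wrong in detail: $\sigma\mapsto\lambda\sigma^{2\alpha+1}/(\sigma+\lambda)$ is increasing on $[0,\Vert A\Vert]$, so its supremum is of order $\lambda\Vert A\Vert^{2\alpha}$, i.e.\ $O(\lambda)$, not $\lambda^{\alpha+\frac12}$ attained near $\sigma\sim\lambda$; this is actually better than needed, which only confirms that all the difficulty sits in the Nystr\"om piece you did not bound.)

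Note also that the ``sharper estimate'' you call for is not something to be invented: \cref{lemm:nystrom_proj_convergence} is stated for arbitrary $\alpha\geq 0$ and gives $\Vert L(\projop(A_{\lambda})L^{\top}L-I)A^{\alpha}\Vert\lesssim\lambda^{\min(1,\alpha+\frac{1}{2})}$ with probability $1-\delta$ once $M\geq M(d,\lambda,\delta)$; the smoothness-independent $\sqrt{\lambda}$ rate you attribute to it is merely its $\alpha=0$ instantiation as used in the miss-specified proof. The paper's proof of the present proposition is precisely your reduction followed by a direct application of that lemma, after writing $A\projop(A_{\lambda})A-A=L^{\top}\,L(\projop(A_{\lambda})L^{\top}L-I)$ and absorbing the bounded factor $(LA^{\alpha})^{\top}$. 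If you prefer not to cite the lemma, you must reproduce its argument: decompose via $P_M$ using $\projop(A_{\lambda})A_{\lambda}P_M=P_M$, invoke \cref{lemm:bound_projector} to get $\Vert A_{\lambda}^{\frac{1}{2}}(I-P_M)\Vert\leq 2\sqrt{\lambda}$ for $M\geq M(d,\lambda,\delta)$ (this is where the Bernstein bounds of \cite{Rudi:2015} enter), and combine with the spectral estimate $\Vert A_{\lambda}^{-\frac{1}{2}}A^{\alpha}\Vert\leq\lambda^{\alpha-\frac{1}{2}}$ for $\alpha\leq\frac{1}{2}$ (and $\leq\Vert A\Vert^{\alpha-\frac{1}{2}}$ otherwise) --- this is the rigorous form of the ``suppression'' you gesture at. As written, your proposal stops short of that derivation.
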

\begin{proof}
	Recall that $G_W$ is given by  $G_W = \Phi^{\top}\Phi$  with  $\Phi$  defined in \cref{prop:optimal_solution}.
	By \cref{assumption:well_specified}, we have that $\Phi = \nabla (A^{\alpha}V)$ with $ V := (v_s)_{1\leq s \leq q} \in \mathcal{H}^q$. Hence, one can write
	\begin{align}
		G_M-G_W &= (L^{\top}\Phi\circ h_{\theta})^{\top}\projop(A_{\lambda})L^{\top}\Phi\circ h_{\theta} - \Phi^{\top}\Phi\\
		&= V^{\top} (A^{\alpha}(A \projop(A_{\lambda})A-A)A^{\alpha}V
	\end{align}
we can therefore directly apply \cref{lemm:nystrom_proj_convergence} and get $\Vert G_M - G_W \Vert \lesssim \lambda^{\min(1,\alpha + \frac{1}{2})} $ with probability $1-\delta$ for any $\delta \geq 0$ such that $ M \geq M(d,\lambda,\delta ) $.
\end{proof}

\begin{lemma}\label{lemm:nystrom_proj_convergence}
Let $ 0 \leq  \lambda \leq \Vert A \Vert$,  $ \alpha \geq 0 $ and define $M(d,\lambda,\delta ) :=\frac{128}{9}\log \frac{4Tr(A) }{\lambda \delta}( d\kappa\lambda^{-1} + 1 )$. Under \cref{assump:bounded_derivatives},  for any $\delta \geq 0$ such that $ M \geq M(d,\lambda,\delta ) $ the following holds with probability $1-\delta$:
\[
\Vert L(\projop(A_{\lambda})L^{\top}L - I )A^{\alpha}\Vert \lesssim \lambda^{min(1,\alpha+\frac{1}{2})}
\]	
\end{lemma}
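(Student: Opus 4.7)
The plan is to split the operator $L(\projop(A_\lambda)L^\top L - I)A^\alpha$ into two pieces that can be handled separately, then reduce the analysis to controlling a single random quantity: the Nyström projection error $\beta := \|(I-P_M)A_\lambda^{1/2}\|$. Using $L^\top L = A = A_\lambda - \lambda I$, and the identity $(\projop(A_\lambda)A_\lambda - I)P_M = 0$ (which follows directly from property 3 of \cref{lem:prop_proj_inverse}), I would write
\begin{align*}
L(\projop(A_\lambda)A - I)A^\alpha
 = -\lambda\, L\projop(A_\lambda)A^\alpha \;+\; L(\projop(A_\lambda)A_\lambda - I)(I - P_M)A^\alpha.
\end{align*}
This decomposition is the crux, because $(I-P_M)$ now appears explicitly and lets us feed the Nyström concentration bound into the estimate.

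For the first piece I would insert $A_\lambda^{1/2}A_\lambda^{-1/2}$ on both sides of $\projop(A_\lambda)$ and use $\|LA_\lambda^{-1/2}\|\leq 1$ together with the fact that $A_\lambda^{1/2}\projop(A_\lambda)A_\lambda^{1/2}$ equals the orthogonal projection onto $\mathrm{range}(A_\lambda^{1/2}V)$ (by rewriting $\projop(A_\lambda) = V(V^\top A_\lambda V)^{-1}V^\top$), hence has norm $\leq 1$. The remaining scalar bound is
\[
\|A_\lambda^{-1/2}A^\alpha\| \;=\; \sup_{\sigma\geq 0}\frac{\sigma^\alpha}{\sqrt{\sigma+\lambda}} \;\lesssim\; \lambda^{\min(0,\,\alpha - 1/2)},
\]
obtained by elementary calculus on the spectrum of $A$. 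Combining these gives $\|\lambda L\projop(A_\lambda)A^\alpha\| \lesssim \lambda^{\min(1,\,\alpha + 1/2)}$, the target rate.

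For the second piece, the same sandwiching trick gives
\[
\|L(\projop(A_\lambda)A_\lambda - I)(I-P_M)A^\alpha\| \;\leq\; \|LA_\lambda^{-1/2}\|\,\|A_\lambda^{1/2}(\projop(A_\lambda)A_\lambda - I)A_\lambda^{-1/2}\|\,\|A_\lambda^{1/2}(I-P_M)A^\alpha\|,
\]
where the middle factor is $\leq 1$ (because $I - A_\lambda^{1/2}\projop(A_\lambda)A_\lambda^{1/2}$ is $I$ minus an orthogonal projection). Using $(I-P_M)^2 = I-P_M$ and the adjointness of $I-P_M$ with respect to itself, I would bound the third factor by
\[
\|A_\lambda^{1/2}(I-P_M)A^\alpha\| \;=\; \|A_\lambda^{1/2}(I-P_M)\cdot (I-P_M)A_\lambda^{1/2}\cdot A_\lambda^{-1/2}A^\alpha\| \;\leq\; \beta^2\,\|A_\lambda^{-1/2}A^\alpha\|,
\]
so that on the event $\beta^2 \lesssim \lambda$, the second piece is also $\lesssim \lambda \cdot \lambda^{\min(0,\,\alpha - 1/2)} = \lambda^{\min(1,\,\alpha + 1/2)}$, matching the first piece.

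\textbf{Main obstacle.} Everything reduces to the probabilistic claim $\beta^2 \lesssim \lambda$ with probability $1-\delta$ whenever $M \geq M(d,\lambda,\delta)$. This is a spectral concentration statement for a sum of random rank-one operators of the form $A_\lambda^{-1/2}\partial_{i_m}k(Y_m,\cdot)\otimes \partial_{i_m}k(Y_m,\cdot)A_\lambda^{-1/2}$, where both the basis point $Y_m$ and the partial-derivative index $i_m\in\{1,\dots,d\}$ are drawn uniformly at random. The standard route, following Proposition~3 of \cite{Rudi:2015}, applies a non-commutative Bernstein inequality to the self-adjoint random operator $I - A_\lambda^{1/2} V(V^\top A_\lambda V)^{-1} V^\top A_\lambda^{1/2}$; the bound $\kappa^2$ on $\partial_i\partial_{i+d}k(x,x)$ from \cref{assump:bounded_derivatives} controls the per-term norm, while the effective dimension $\mathrm{Tr}(AA_\lambda^{-1})$ controls the variance. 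The extra factor $d$ in $M(d,\lambda,\delta) \asymp \log(\mathrm{Tr}(A)/(\lambda\delta))(d\kappa/\lambda + 1)$ arises precisely because each basis point is assigned only one of $d$ possible directions, diluting the signal in each coordinate by a factor $d$; formally this appears as an extra $d$ in the variance proxy for the Bernstein bound. Once $\beta^2\lesssim\lambda$ is established with the stated probability, combining the two pieces completes the proof.
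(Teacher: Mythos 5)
Your proposal is correct and takes essentially the same route as the paper: the identity $\projop(A_\lambda)A_\lambda P_M=P_M$ plus the split $A=A_\lambda-\lambda I$, sandwiching by $A_\lambda^{\pm\frac{1}{2}}$ with $\Vert LA_\lambda^{-\frac{1}{2}}\Vert\le 1$, $\Vert A_\lambda^{\frac{1}{2}}\projop(A_\lambda)A_\lambda^{\frac{1}{2}}\Vert\le 1$ and $\Vert A_\lambda^{-\frac{1}{2}}A^\alpha\Vert\lesssim\lambda^{\min(0,\alpha-\frac{1}{2})}$, and the reduction of everything to the event $\Vert A_\lambda^{\frac{1}{2}}(I-P_M)\Vert\lesssim\sqrt{\lambda}$ (the paper only differs cosmetically by splitting your combined $(I-P_M)$ term into two pieces $\mathfrak{E}_1,\mathfrak{E}_2$). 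The probabilistic ingredient you flag as the main obstacle is exactly the paper's \cref{lemm:bound_projector}, proved as you sketch via \cite{Rudi:2015} (Propositions 3, 7, 8), with the factor $d$ in $M(d,\lambda,\delta)$ coming from the almost-sure bound $\langle v_m,A_\lambda^{-1}v_m\rangle\le d\kappa/\lambda$ for the rescaled, randomly-directed features $v_m=\sqrt{d}\,\partial_{i_m}k(Y_m,\cdot)$.
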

\begin{proof}
	We have the following identities:
	\begin{align}
		L(\projop(A_{\lambda})L^{\top}L - I )A^{\alpha} 
		=& L(\projop(A_{\lambda})A_{\lambda}- I - \lambda \projop(A_{\lambda}))A^{\alpha}\\
		=& \underbrace{LA_{\lambda}^{-\frac{1}{2}}A_{\lambda}^{\frac{1}{2}}(\projop(A_{\lambda})A_{\lambda}P_M -I)A^{\alpha}}_{\mathfrak{E_1}}  - \underbrace{\lambda LA^{-\frac{1}{2}}_{\lambda}A^{\frac{1}{2}}_{\lambda}\projop(A_{\lambda})A^{\frac{1}{2}}_{\lambda}A^{-\frac{1}{2}}_{\lambda}A^{\alpha}}_{\mathfrak{E_3}}\\
		&+ \underbrace{LA_{\lambda}^{-\frac{1}{2}}A_{\lambda}^{\frac{1}{2}}\projop(A_{\lambda})A^{\frac{1}{2}}_{\lambda}A^{\frac{1}{2}}_{\lambda}(I-P_M)A^{\alpha}}_{\mathfrak{E_2}}.  \\
	\end{align}
For the first $\mathfrak{E_1}$ we use \cite[Lemma 1 (iii)]{Sutherland:2017} which implies that  $\projop(A_{\lambda})A_{\lambda}P_M = P_M$. Thus $\mathfrak{E_1} = LA_{\lambda}^{-\frac{1}{2}}A_{\lambda}^{\frac{1}{2}}(P_M-I)A^{\alpha}$. Moreover, by \cref{lemm:bound_projector} we have that $\Vert A_{\lambda}^{\frac{1}{2}}(I-P_M) \Vert \leq 2\sqrt{\lambda}$ with probability  $1-\delta$ for $M>M(d,\lambda,\delta) $. Therefore, recalling that $(I-P_M)^2 = I-P_M$ since $P_M$ is a projection,  one can further write:
\begin{align}
	\Vert \mathfrak{E_1}\Vert &\leq \underbrace{ \Vert L A_{\lambda}^{-\frac{1}{2}} \Vert}_{\leq 1} \underbrace{\Vert A_{\lambda}^{\frac{1}{2}}(P_M-I)\Vert^2}_{\leq \lambda} \Vert A_{\lambda}^{-\frac{1}{2}}A^{\alpha} \Vert \\
	\Vert \mathfrak{E_2}\Vert &\leq \underbrace{ \Vert L A_{\lambda}^{-\frac{1}{2}} \Vert}_{\leq 1} \underbrace{\Vert A_{\lambda}^{\frac{1}{2}}\projop(A_{\lambda}) A_{\lambda}^{\frac{1}{2}}\Vert}_{\leq 1}  \underbrace{\Vert A_{\lambda}^{\frac{1}{2}} (P_M-I)\Vert^2}_{\leq 4\lambda} \Vert A_{\lambda}^{-\frac{1}{2}}A^{\alpha} \Vert\\
	\Vert \mathfrak{E_3}\Vert &\leq \lambda \underbrace{ \Vert L A_{\lambda}^{-\frac{1}{2}} \Vert}_{\leq 1} \underbrace{\Vert A_{\lambda}^{\frac{1}{2}}\projop(A_{\lambda}) A_{\lambda}^{\frac{1}{2}}\Vert}_{\leq 1} \Vert A_{\lambda}^{-\frac{1}{2}}A^{\alpha}  \Vert
\end{align}
It remains to note that $\Vert A_{\lambda}^{-\frac{1}{2}}A^{\alpha}  \Vert \leq \lambda^{\alpha-\frac{1}{2}}$ when $0\leq \alpha \leq \frac{1}{2}$ and that $\Vert A_{\lambda}^{-\frac{1}{2}}A^{\alpha}  \Vert \leq \Vert A\Vert^{\alpha - \frac{1}{2}}$ for $\alpha >\frac{1}{2}$ which allows to conclude.
\end{proof}
\subsection{Auxiliary Results}

\begin{lemma}\label{lemm:bound_projector}
	Let $ 0 \leq  \lambda \leq \Vert A \Vert$. Under \cref{assump:bounded_derivatives},  for any $\delta \geq 0$ such that $M \geq M(d,\lambda,\delta ) :=\frac{128}{9}\log \frac{4Tr(A) }{\lambda \delta}( \kappa\lambda^{-1} + 1 ) $ 
	the following holds with probability $1-\delta$:
	\[
	\Vert A_{\lambda}^{\frac{1}{2}}(I-P_M) \Vert \leq 2\sqrt{\lambda} 
	\]
\end{lemma}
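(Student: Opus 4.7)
The plan is a standard two-layer argument for Nyström-type projection errors: a deterministic algebraic bound, combined with a Bernstein-type concentration inequality for i.i.d. self-adjoint Hilbert-space operators.

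First, I would realize $P_M$ as the projection onto the range of an empirical covariance operator. Define
\[
\hat{A}_M \;:=\; \frac{d}{M}\sum_{m=1}^M \partial_{i_m} k(Y_m,\cdot)\otimes \partial_{i_m} k(Y_m,\cdot).
\]
Since $i_m$ is uniform on $\{1,\ldots,d\}$ and $Y_m$ is drawn from $\rho_\theta$, a direct computation shows $\mathbb{E}[\hat{A}_M] = A$. By construction, $P_M$ is the orthogonal projection onto $\mathrm{range}(\hat{A}_M^{1/2})$, which in particular gives $(I-P_M)\hat{A}_M(I-P_M) = 0$.

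Second, I would apply the standard deterministic Nyström bound (as in Rudi, Camoriano, Rosasco 2015, Proposition 3, and used in Sutherland et al.\ 2017): because $\hat{A}_M$ annihilates $I-P_M$ on both sides, one obtains
\[
\|A_\lambda^{1/2}(I-P_M)\|^2 \;\leq\; \lambda\, \bigl\| A_\lambda^{1/2}(\hat{A}_M+\lambda I)^{-1}A_\lambda^{1/2}\bigr\|.
\]
Setting $\tau := \|A_\lambda^{-1/2}(A-\hat{A}_M)A_\lambda^{-1/2}\|_{op}$, a Neumann-series / operator-monotonicity argument (analogous to the one already used in the proof of \cref{prop:estimation_error} via Rudi Proposition 7) gives $\| A_\lambda^{1/2}(\hat{A}_M+\lambda I)^{-1}A_\lambda^{1/2}\|\leq (1-\tau)^{-1}$ whenever $\tau<1$. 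Therefore it is enough to show $\tau \leq 3/4$ with probability at least $1-\delta$, which would yield $\|A_\lambda^{1/2}(I-P_M)\|\leq 2\sqrt{\lambda}$.

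Third, I would control $\tau$ via a dimension-free Bernstein inequality for sums of bounded, i.i.d., self-adjoint Hilbert-Schmidt operators (e.g.\ Pinelis--Sakhanenko, or Minsker's version used in Rudi et al.\ 2015, Proposition 8). The summands are $\xi_m := \tfrac{1}{M}A_\lambda^{-1/2}\bigl(d\,\partial_{i_m}k(Y_m,\cdot)\otimes \partial_{i_m}k(Y_m,\cdot) - A\bigr)A_\lambda^{-1/2}$. Using the reproducing property for derivatives, $\|\partial_i k(x,\cdot)\|_\mathcal{H}^2 = \partial_i\partial_{i+d}k(x,x)\leq \kappa^2$, so each term has operator norm $\lesssim d\kappa^2/(M\lambda)$ and a variance term controlled by $d\kappa^2\, \mathrm{Tr}(A)/(M\lambda)$ (the effective dimension enters through $\mathrm{Tr}(A)/\lambda$). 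Matching the resulting tail bound to $\tau\leq 3/4$ yields precisely the threshold $M \gtrsim \log\!\bigl(\mathrm{Tr}(A)/(\lambda\delta)\bigr)\bigl(d\kappa\lambda^{-1}+1\bigr)$, with the stated numerical constants after a careful tracking.

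The main obstacle is the third step: one must use a dimension-free operator Bernstein (the RKHS is infinite-dimensional, so a matrix Bernstein does not directly apply), and the $d$-dependence must be correctly injected via the $d/M$ reweighting that compensates for subsampling a single coordinate $i_m$ per basis point. The other two steps are essentially algebraic and parallel manipulations already carried out elsewhere in the appendix. Once the Bernstein bound is in place with the right constants, the conclusion $\|A_\lambda^{1/2}(I-P_M)\|\leq 2\sqrt{\lambda}$ follows immediately from steps one and two.
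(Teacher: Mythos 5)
Your proposal is correct and follows essentially the same route as the paper's proof: identify $P_M$ with the projection onto the range of the subsampled empirical operator $\hat{A}_M$ (with the $d/M$ reweighting so that $\mathbb{E}[\hat{A}_M]=A$), apply the deterministic Nystr\"{o}m bound of Rudi et al.\ (Proposition 3) together with the $(1-\tau)^{-1}$ comparison (Proposition 7), and then use an operator Bernstein concentration (the paper cites Rudi et al., Proposition 8, for exactly this) to guarantee the relative error is at most $3/4$ once $M \gtrsim \log\bigl(\mathrm{Tr}(A)/(\lambda\delta)\bigr)(d\kappa\lambda^{-1}+1)$, giving $\Vert A_{\lambda}^{1/2}(I-P_M)\Vert \leq 2\sqrt{\lambda}$. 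The only cosmetic difference is that you control the operator norm of $A_\lambda^{-1/2}(A-\hat{A}_M)A_\lambda^{-1/2}$ while the paper works with its largest eigenvalue, which changes nothing in the conclusion.
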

\begin{proof}
	The proof is an adaptation of the results in \cite{Rudi:2015,Sutherland:2017}. 
	Here we  recall $Q_M$ defined in \cref{eq:subsampling_op}. Its transpose $\nystromop^{\top}$ sends vectors in $\R^M$ to elements in the span of the Nystr\"{o}m basis points, hence $P_M$ and $Q_M^{\top}$ have the same range, i.e.:  $range(P_M)= \bar{range( \nystromop^{\top})}$. We are in position to apply \cite[Proposition 3.]{Rudi:2015} which allows to find an upper-bound on $A_{\lambda}^{\frac{1}{2}}(P_M-I)$ in terms of $\nystromop$:
	\[
	\Vert A_{\lambda}^{\frac{1}{2}}(P_M-I)\Vert \leq \sqrt{\lambda}\Vert A_{\lambda}^{\frac{1}{2}}(\nystromop^{\top}\nystromop +\lambda I )^{-\frac{1}{2}}\Vert. 
	\]
	For simplicity we write $\hat{A}_M :=  \nystromop^{\top}\nystromop$ and $E_2 := A_{\lambda}^{-\frac{1}{2}}(A - \hat{A}_M) A_{\lambda}^{-\frac{1}{2}} $. We also denote by $\beta = \lambda_{max} (E_2) $ the highest eigenvalue of $E_2$. We can therefore control $\Vert  A_{\lambda}^{\frac{1}{2}}(\hat{A}_M +\lambda I )^{-\frac{1}{2}} \Vert$ in terms of $\beta$ using \cite[Proposition 7]{Rudi:2015} provided  that $\beta < 1$:
	 \[
	\Vert A_{\lambda}^{\frac{1}{2}}(P_M-I)\Vert \leq \sqrt{\lambda}\frac{1}{\sqrt{1-\beta}}. 
	\]
	 Now we need to make sure that $\beta < 1$ for $M$ large enough. To this end, we will apply \cite[Proposition 8.]{Rudi:2015} to $\hat{A}_M$. Denote by $v_m = \sqrt{d} \partial_{i_m}k(Y_m,.)$. Hence, by definition of $\hat{A}_M$ it follows that $\hat{A}_M =  \frac{1}{M} \sum_{m=1}^M v_m\otimes v_m$. Moreover, $(v_m)_{1\leq m \leq M}$ are independent and identically distributed and satisfy:
	\[
	\mathbb{E} [v_m \otimes v_m]=  \int \sum_{i=1}^{q} \partial_{i} k(y,.)\otimes \partial_{i} k(y,.)\diff p_{\theta}(y) = A.    
	\]
	We also have by \cref{assump:bounded_derivatives} that $ \langle v_m, A_{\lambda}^{-1} v_m\rangle\leq  \frac{d\kappa}{\lambda}$ almost surely and for all $\lambda>0$. We can therefore apply \cite[Proposition 8.]{Rudi:2015} which implies that for any $1\geq \delta \geq 0$ and with  probability  $1-\delta$ it holds that:
	\[
	\beta \leq \frac{2\gamma}{3M} + \sqrt{\frac{2\gamma d\kappa  }{M\lambda}} 
	\]
	with $\gamma = \log \frac{4Tr(A) }{\lambda \delta}$ provided that $ \lambda\leq \Vert A\Vert$. Thus by choosing $M \geq \frac{128\gamma}{9}( d\kappa\lambda^{-1} + 1 ) $ we have that $\beta \leq \frac{3}{4}$ with probability $1-\delta$ which allows to conclude.
\end{proof}

\begin{proposition}\label{prop:error_vector}
There exist $a>0$ and $\sigma_1>0$ such that for any $0\leq \delta\leq 1$, it holds with probability greater of equal than $1-\delta$ that:

\[	
\Vert \widehat{L^{\top}B} - L^{\top}B  \Vert\leq \frac{2a\log \frac{2}{\delta}}{N} +  \sqrt{\frac{2\sigma_1^2\log \frac{2}{\delta}}{N}}
	\]
\end{proposition}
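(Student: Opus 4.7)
The plan is to recognize $\widehat{L^{\top}B} - L^{\top}B$ as a centered empirical mean of i.i.d.\ operator-valued random variables in a Hilbert space, and then apply a Bernstein-type concentration inequality (Pinelis--Sakhanenko) under a moment condition that we shall verify using \cref{assump:noise_condition,assump:linear_growth,assump:bounded_derivatives}.

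First I would introduce, for each $n$, the random operator $\xi_n : \mathbb{R}^q \to \mathcal{H}$ defined by
\begin{align*}
\xi_n u := \sum_{i=1}^d \partial_i k(h_\theta(Z_n),\cdot)\,\bigl(\nabla h_\theta(Z_n)u\bigr)_i,\qquad u\in\mathbb{R}^q.
\end{align*}
By construction $\widehat{L^{\top}B} = \tfrac{1}{N}\sum_{n=1}^N \xi_n$ and $\mathbb{E}[\xi_n]=L^{\top}B$, so $\widehat{L^{\top}B}-L^{\top}B=\tfrac{1}{N}\sum_{n=1}^N (\xi_n-\mathbb{E}[\xi_n])$ is a sum of $N$ i.i.d., centered, Hilbert--Schmidt (hence Hilbert-space-valued) random variables.

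Next I would control the operator norm $\Vert\xi_n\Vert$. Using the reproducing property for derivatives, $\Vert\sum_i \partial_i k(h_\theta(Z_n),\cdot) v_i\Vert_{\mathcal{H}}^2 = \sum_{i,j} v_i v_j \partial_i\partial_{j+d}k(h_\theta(Z_n),h_\theta(Z_n))$, which, together with \cref{assump:bounded_derivatives}, gives $\Vert\xi_n u\Vert_{\mathcal{H}} \le \sqrt{d}\,\kappa\, \Vert \nabla h_\theta(Z_n)u\Vert$. Combining with \cref{assump:linear_growth} yields
\begin{align*}
\Vert\xi_n\Vert_{\mathrm{op}} \;\le\; \sqrt{d}\,\kappa\, C(\theta)\,(1+\Vert Z_n\Vert).
\end{align*}
Raising to the $p$-th power and using \cref{assump:noise_condition} on $\int\Vert z\Vert^p d\nu$, a standard expansion of $(1+\Vert z\Vert)^p$ and a Bernstein-moment bookkeeping argument (exactly the style leading to the Bernstein condition from a sub-exponential random variable) produces constants $a,\sigma_1>0$ depending only on $d,\kappa,C(\theta),\zeta,\sigma$ such that
\begin{align*}
\mathbb{E}\bigl[\Vert \xi_n - \mathbb{E}\xi_n\Vert_{\mathrm{op}}^p\bigr] \;\le\; \tfrac{1}{2}\,p!\,\sigma_1^2\, a^{p-2},\qquad \forall p\ge 2.
\end{align*}

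Finally, this is precisely the Bernstein moment hypothesis for a Pinelis--Sakhanenko inequality for sums of independent Hilbert-space-valued random variables (e.g.\ Pinelis 1994, Theorem 3.4), which yields
\begin{align*}
\mathbb{P}\!\left(\Bigl\Vert \tfrac{1}{N}\sum_{n=1}^N(\xi_n-\mathbb{E}\xi_n)\Bigr\Vert > t\right) \;\le\; 2\exp\!\left(-\frac{Nt^2}{2(\sigma_1^2+at)}\right).
\end{align*}
Setting the right-hand side equal to $\delta$ and solving the quadratic in $t$ (or using the standard ``either the Gaussian or the sub-exponential term dominates'' dichotomy) produces the announced bound $t \le \tfrac{2a\log(2/\delta)}{N} + \sqrt{\tfrac{2\sigma_1^2\log(2/\delta)}{N}}$.

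The main obstacle I anticipate is the moment computation: turning the polynomial-in-$\Vert z\Vert$ bound on $\Vert\xi_n\Vert$ into the clean Bernstein form $\tfrac{1}{2}p!\sigma_1^2 a^{p-2}$ requires combining the factorial moment bound of \cref{assump:noise_condition} with the binomial expansion of $(1+\Vert z\Vert)^p$ and then absorbing constants. Everything else is a direct application of an off-the-shelf Hilbert-space Bernstein inequality.
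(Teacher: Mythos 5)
Your proposal takes essentially the same route as the paper: identify $\widehat{L^{\top}B}-L^{\top}B$ as a centered empirical mean of i.i.d.\ Hilbert-space-valued (Hilbert--Schmidt operator) summands, verify the Bernstein moment condition $\mathbb{E}\Vert\cdot\Vert^{p}\le\tfrac{1}{2}p!\,\sigma_1^{2}a^{p-2}$ from \cref{assump:noise_condition,assump:linear_growth,assump:bounded_derivatives}, and conclude with a Pinelis-type Bernstein inequality for sums of random vectors, exactly as the paper does via \cite[Proposition 11]{Rudi:2015}. The only cosmetic deviation is that you state the moment bound for the operator norm of each summand while the Hilbert-space inequality needs it in the Hilbert--Schmidt norm (which the paper computes through its trace identities); your bound $\sqrt{d}\,\kappa\,C(\theta)(1+\Vert Z_n\Vert)$ holds verbatim for that norm, so the argument is unaffected.
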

\begin{proof}
denote by $v_n =  \sum_{i=1}^d\partial_i k(X_n,.)B_i(Z_n) $, we have that $\mathbb{E}[v_n]= L^{\top}B$. We will apply Bernstein's inequality for sum of random vectors. For this we first need to find $a>0$ and $\sigma_1>0$ such that $ \mathbb{E}[\Vert z_n-L^{\top}B \Vert^p_{\mathcal{H}}] \leq \frac{1}{2} p! \sigma_1^2 a^{p-2}$. To simplify notations, we write $x$ and $x'$ instead of  $h_{\theta}(z)$ and $h_{\theta}(z')$. We have that:
\begin{align}
	\mathbb{E}[\Vert z_n-L^{\top}B \Vert^p_{\mathcal{H}}] =& \int \left\Vert \sum_{i=1}^d \partial_i k(x,.)B_i(z) - \int \sum_{i=1}^d \partial_i k(x',.)B_i(z') \diff \nu(z') \right\Vert^p \diff \nu(z)\\
	\leq &
	2^{p-1} \underbrace{\int \left\Vert \sum_{i=1}^d \int \left(\partial_i k(x,.)-\partial_i k(x',.)\right)B_i(z)   \diff \nu(z') \right\Vert^p \diff \nu(z)}_{\mathfrak{E}_1}\\
	&+ 2^{p-1}\underbrace{\int \left\Vert \int \sum_{i=1}^d \partial_i k(x,.)\left( B_i(z)-B_i(z') \right) \diff \nu(z')\right\Vert^p\diff \nu(z) }_{\mathfrak{E}_2}
\end{align}
We used the convexity of the norm and the triangular inequality to get the last line. We introduce the notation $\gamma_i(x) := \partial_i k(x,.) - \int \partial_i k(h_{\theta}(z'),.)\diff \nu(z') $ and by $\Gamma(x)$ we denote the matrix whose components are given by $\Gamma(x)_{ij}:=  \langle \gamma_i(x),\gamma_j(x) \rangle_{\mathcal{H}}$. The first term $\mathfrak{E}_1$ can be upper-bounded as follows:
\begin{align}
	\mathfrak{E}_1 &= \int \left\vert  Tr(B(z)B(z)^{\top}\Gamma(x))  \right\vert^{\frac{p}{2}}\\
	  &\leq  \int \left\vert  \left \Vert B(z)\right\Vert^2  Tr(\Gamma(x)^2)^{\frac{1}{2}}  \right\vert^{\frac{p}{2}}.
\end{align}
Moreover, we have that $Tr(\Gamma(x)^2)^{\frac{1}{2}} = (\sum_{1\leq i,j\leq d}  \langle \gamma_i(x), \gamma_j(x)\rangle_{\mathcal{H}}^2)^{\frac{1}{2}} \leq  \sum_{i=1}^d \Vert \gamma_i(x) \Vert^2  $. We further have that $\Vert \gamma_i(x) \Vert \leq \partial_i\partial_{i+d}k(x,x)^{\frac{1}{2}}  + \int \partial_i\partial_{i+d}k(h_{\theta}(z),h_{\theta}(z))^{\frac{1}{2}}\diff \nu(z)$ and by \cref{assump:bounded_derivatives} it follows that $\Vert \gamma_i(x) \Vert \leq 2\sqrt{\kappa}$. Hence, one can directly write that:
$\mathfrak{E}_1 \leq (2 \sqrt{\kappa d})^{p} \int \Vert B(z)\Vert^{p} \diff \nu(z)$. Recalling \cref{assump:linear_growth,assump:noise_condition} we get:
\begin{align}
	\mathfrak{E}_1\leq 2^{p-1}(2\sqrt{\kappa d})^{p}C(\theta)^p (1 + \frac{1}{2}p!\zeta^{p-2}\sigma^2) 
\end{align}
Similarly, we will find an upper-bound on $\mathfrak{E}_2$. To this end, we introduce the matrix $Q(x',x")$ whose components are given by $Q(x',x")_{i,j} = \partial_i \partial_{i+d}k(x',x")$. One, therefore has:
\begin{align}
	\mathfrak{E}_2 &= \int \left\vert \int\int  Tr(\left(B(z)-B(z')\right)\left(B(z)-B(z")\right)^{\top} Q(x',x") \diff \nu(z')\diff \nu(z") \right\vert^{\frac{p}{2}}\diff \nu(z) \\
	&\leq 
	\int \left\vert \int\int \left\Vert B(z)-B(z') \right\Vert  \left\Vert B(z)-B(z") \right\Vert  Tr(Q(x',x")^2)^{\frac{1}{2}} \diff \nu(z')\diff \nu(z") \right\vert^{\frac{p}{2}}\diff \nu(z)
\end{align}
Once again, we have that $Tr(Q(x',x")^2)^{\frac{1}{2}}  \leq (\sum_{i=1}^d\partial_i \partial_{i+d} k(x',x')
  )^{\frac{1}{2}}(\sum_{i=1}^d \partial_i \partial_{i+d} k(x",x"))^{\frac{1}{2}} \leq d\kappa$ thanks to \cref{assump:bounded_derivatives}.  Therefore, it follows that:
\begin{align}
	\mathfrak{E}_2
	&\leq (\sqrt{d\kappa})^p\int \vert  \int \Vert B(z)-B(z') \Vert\diff \nu(z) \vert^{p}\diff \nu(z)\\
	&\leq 3^{p-1}(\sqrt{d\kappa})^p C(\theta)^p (2^p + \int \Vert z \Vert^p \diff \nu(z) + \left(\int \Vert z \Vert \diff \nu(z)\right)^{p})\\
	&\leq  3^{p-1}(\sqrt{d\kappa})^p C(\theta)^p (2^p + \frac{1}{2}p! \zeta^{p-2} \sigma^2  + \left(\int \Vert z \Vert \diff \nu(z)\right)^p).
\end{align}

The second line is a consequence of \cref{assump:linear_growth} while the last line is due to \cref{assump:noise_condition}. These calculations, show that it is possible to find constants $a $ and $\sigma_1$ such that $\mathbb{E}[\Vert z_n - L^{\top}B \Vert^{p}_{\mathcal{H}}] \leq \frac{1}{2} p! \sigma_1^2 a^{p-2}$. Hence one concludes using Bernstein's inequality for a sum of random vectors \cite[see for instance][Proposition 11]{Rudi:2015}.
\end{proof}

\section{Experimental details}

\subsection{Natural Wasserstein gradient for the Multivariate Normal model}\label{subsec:multivariate_gaussian}
\paragraph{Multivariate Gaussian.}
Consider a multivariate gaussian with mean $\mu\in \R^d$ and covariance matrix $\Sigma\in \R^d \times \R^d $ parametrized using its lower triangular components $s = T(\Sigma)$. We denote by $\Sigma = T^{-1}(s)$ the inverse operation that maps any vector $s\in \R^{\frac{d(d+1)}{2}}$ to its corresponding symmetric matrix in $\R^d\times \R^d$.  The concatenation of the mean $\mu$ and $s$ will be denoted as $\theta$ : $\theta = (\mu,s )$. Given two parameter vectors $u= (m,T(S))$  and $v = (m',T(S'))$ where $m$ and $m'$ are vectors in $\R^d$ and $S$ and $S'$ are symmetric matrices in $\R^d\times \R^d$ the metric evaluated at $u$ and $v$ is given by: 
\begin{align}
	u^{\top}G(\theta)v = m^{\top}m' + Tr(A\Sigma A')
\end{align}
where $A$ and $A'$ are symmetric matrices that are solutions to the Lyapunov equation:
\[
S = A\Sigma + \Sigma A, \qquad S'=A'\Sigma + \Sigma A'.
\]
$A$ and $A'$ can be computed in closed form using standard routines %
 making the evaluation of the metric easy to perform. Given a loss function $\mathcal{L}(\theta)$ and gradient direction $\nabla_{\theta} \mathcal{L}(\theta) = \nabla_{\mu}\mathcal{L}(\theta), \nabla_{s}\mathcal{L}(\theta) $, the corresponding natural gradient $\nabla_{\theta}^{W}\mathcal{L}(\theta)$ can also be computed in closed form:
 \begin{align}\label{eq:true_nat_grad}
 	\nabla_{\theta}^{W}\mathcal{L}(\theta) = ( \nabla_{\mu}\mathcal{L}(\theta), T(\Sigma (A+diag(A)) + (A+diag(A))\Sigma)   ),
 \end{align}
where $A = T^{-1}(\nabla_{s}\mathcal{L}(\theta))$. To use the estimator proposed in \cref{prop:param_lite_estimator} we take advantage of the parametrization of the Gaussian distribution as a push-forward of a standard normal vector:
\[
X\sim \mathcal{N}(\mu,\Sigma) \iff  X = \Sigma^{\frac{1}{2}}Z + \mu, Z\sim \mathcal{N}(0,I_{d})
\]

\begin{figure}
\center
\includegraphics[width=.9\linewidth]{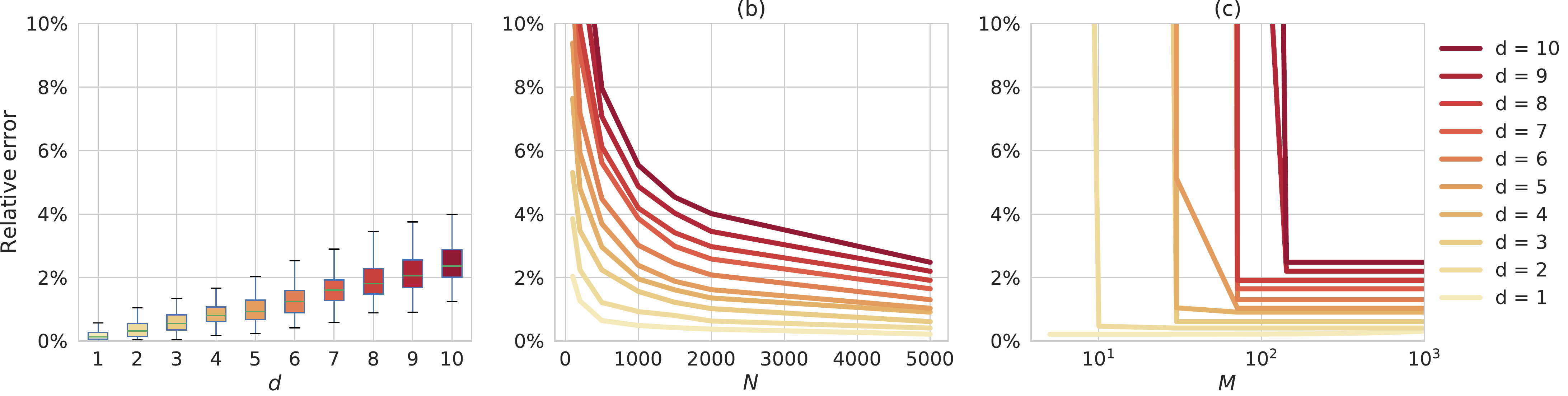}
\includegraphics[width=.9\linewidth]{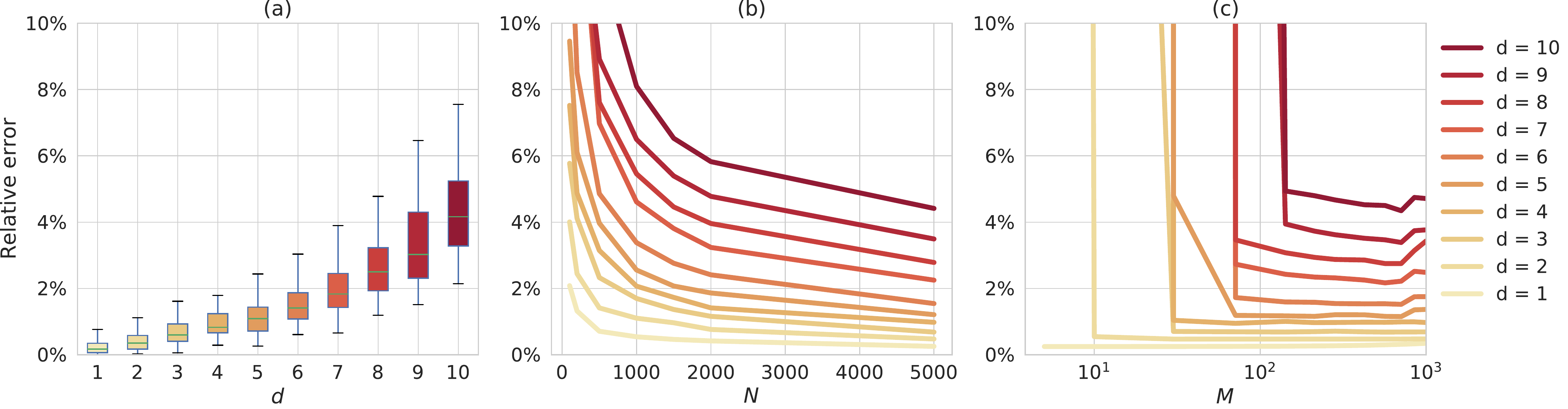}
\caption{Evolution of the relative error of KWNG  averaged over $100$ runs for varying dimension form $d=1$ (yellow) to $d=10$ (dark red). For each run, a random value for the parameter $\theta$ and for the Euclidean gradient $\nabla \mathcal{L}(\theta)$ is sampled from a centered Gaussian with variance $0.1$. In all cases, $\lambda = 0$ and $\epsilon =10^{-5}$.
Top row: multivariate normal model, bottom row: multivariate log-normal.
Left (a): box-plot of the relative error as $d$ increases with $N=5000$ and the number of basis points is set to $M = \floor*{d\sqrt{N}}$. (b) Relative error as the sample size $N$ increases and the number of basis points is set to $M = \floor*{d\sqrt{N}}$. Right (c): Relative error as $M$ increases and $N$ fixed to $5000$.  
}\label{fig:relative_error_log_normal}
\end{figure}

\begin{figure}
\center
\includegraphics[width=1.\linewidth]{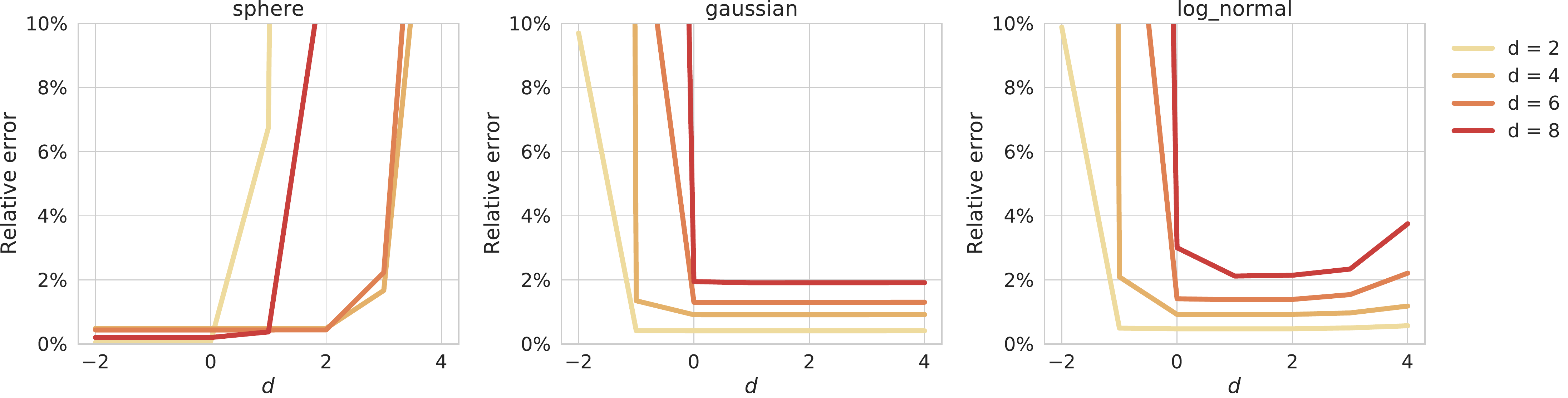}
\caption{Relative error of the KWNG for varying bandwidth of the kernel.  Results are averaged over $100$ runs for varying dimension form $d=1$ (yellow) to $d=10$ (dark red). For each run, a random value for the parameter $\theta$ and for the Euclidean gradient $\nabla \mathcal{L}(\theta)$ is sampled from a centered Gaussian with variance $0.1$. In all cases, $\lambda = \epsilon =10^{-10}$. The sample size is fixed to $N=5000$ and the number of basis points is set to $M = \floor*{d\sqrt{N}}$. Left: uniform distributions on a hyper-sphere, middle: multivariate normal, and right: multivariate log-normal.
}\label{fig:sensitivity}
\end{figure}

\subsection{Classification on  \texttt{Cifar10} and \texttt{Cifar100}}\label{subsec:experimental_details}
\paragraph{Architecture.} We use a residual network with one convolutional layer followed by $8$ residual blocks and a final fully connected layer. Each residual block consists of two $3\times3$ convolutional layers each and ReLU nonlinearity. We use batch normalization for all methods. Details of the intermediate output shapes and kernel size are provided in \cref{tab:net_architecture}.
 
\begin{table}[]
\begin{center}
\begin{tabular}{lll}
               & Kernel size           & Output shape      \\ \hline
z              &                       & $32\times 32\times 3$     \\
Conv           & $3\times 3$           & 64                \\
Residual block & $[3\times 3]\times 2$ & 64                \\
Residual block & $[3\times 3]\times 2$ & 128               \\
Residual block & $[3\times 3]\times 2$ & 256               \\
Residual block & $[3\times 3]\times 2$ & 512               \\
Linear         & -                     & Number of classes \\ \hline
\end{tabular}
\end{center}
\caption{Network architecture.}
\label{tab:net_architecture}
\end{table}
\paragraph{Hyper-parameters.}
For all methods, we used a batch-size of $128$. The optimal step-size $\gamma$ was selected in $\{10,1,10^{-1},10^{-2},10^{-3},10^{-4}\}$ for each method.
In the case of SGD with momentum, we used a momentum parameter of $0.9$ and a weight decay of either $0$ or $5\times10^{-4}$.
For KFAC and EKFAC, we used a damping coefficient of $10^{-3}$ and a frequency of reparametrization of $100$ updates.
For KWGN we set $M=5$ and $\lambda=0$ while the initial value for $\epsilon$ is set to $\epsilon=10^{-5}$ and is adjusted using an adaptive scheme based on the Levenberg-Marquardt dynamics as in \cite[Section 6.5]{Martens:2015}.
More precisely, we use the following update equation for $\epsilon$ after every $5$ iterations of the optimizer:
\begin{align}\label{eq:Levenberg_Marquardt}
	\epsilon &\leftarrow \omega \epsilon, \qquad &&\text{if } r > \frac{3}{4}\\
	\epsilon &\leftarrow \omega^{-1} \epsilon, \qquad &&\text{if }  r < \frac{1}{4}.
\end{align}
Here, $r$ is the reduction ratio:
\[ r = \max_{t_{k-1} \leq t \leq t_k  }\left(2\frac{\mathcal{L}(\theta_{t})) -\mathcal{L}(\theta_{t+1})}{\nabla^{W}\mathcal{L}(\theta)^{\top}\nabla\mathcal{L}(\theta)^{\top}}\right) \]
where $(t_k)_{k}$ are the times when the updates occur.
and $\omega$ is the decay constant chosen to  $\omega = 0.85$.

\subsection{Additional experiments}\label{subsec:ablation_studies}
\cref{fig:cifar10_timing} reports the time cost in seconds for each methods, while \cref{fig:cifar10_ablation} compares KWNG to diagonal conditioning.

\begin{figure}
\center
\includegraphics[width=1.\linewidth]{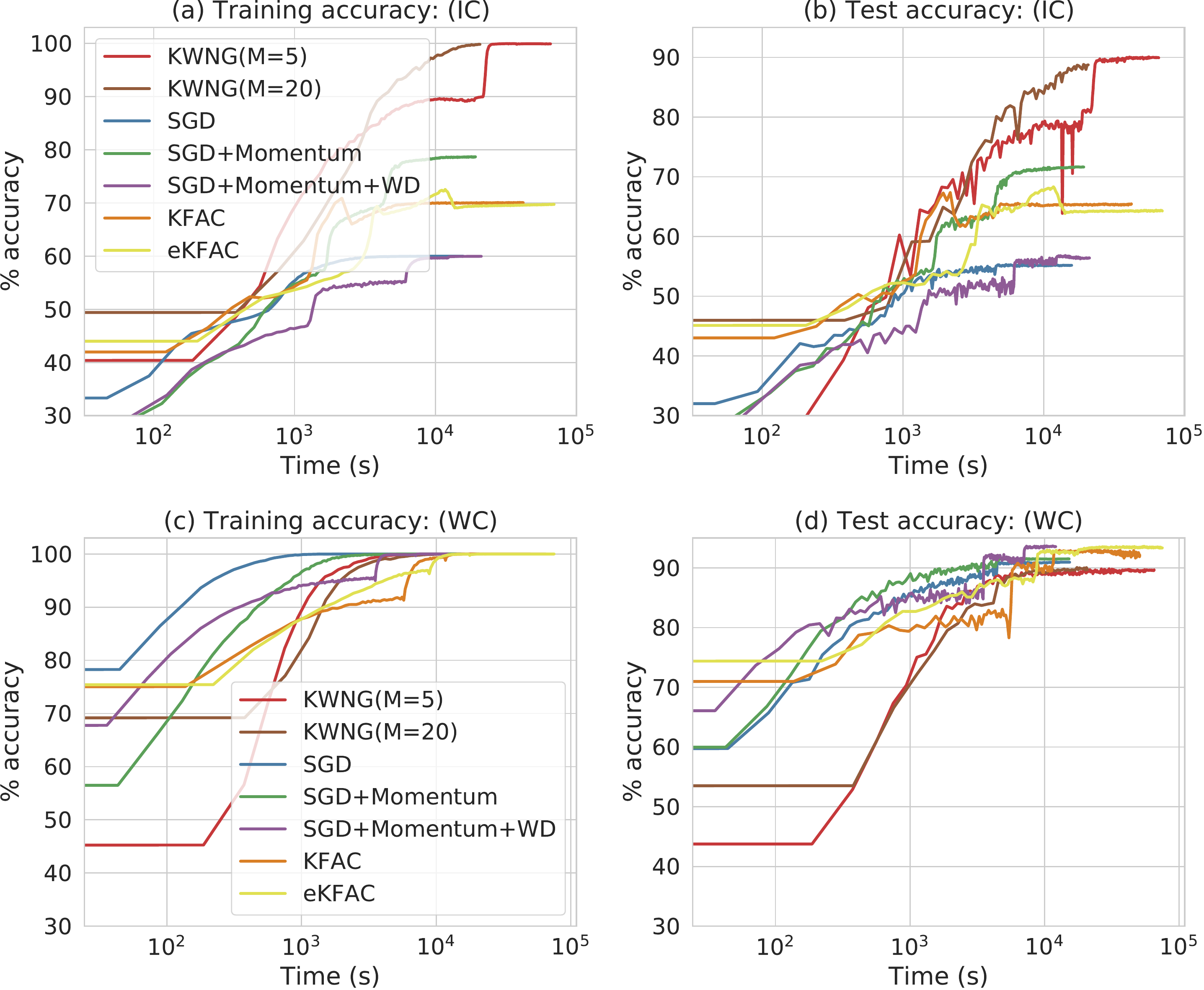}
\caption{Training accuracy (left) and test accuracy (right) as a function of time for classification on \texttt{Cifar10} in both the ill-conditioned case (top) and well-conditioned case (bottom) for different optimization methods. }
\label{fig:cifar10_timing}
\end{figure}

\begin{figure}
\center
\includegraphics[width=1.\linewidth]{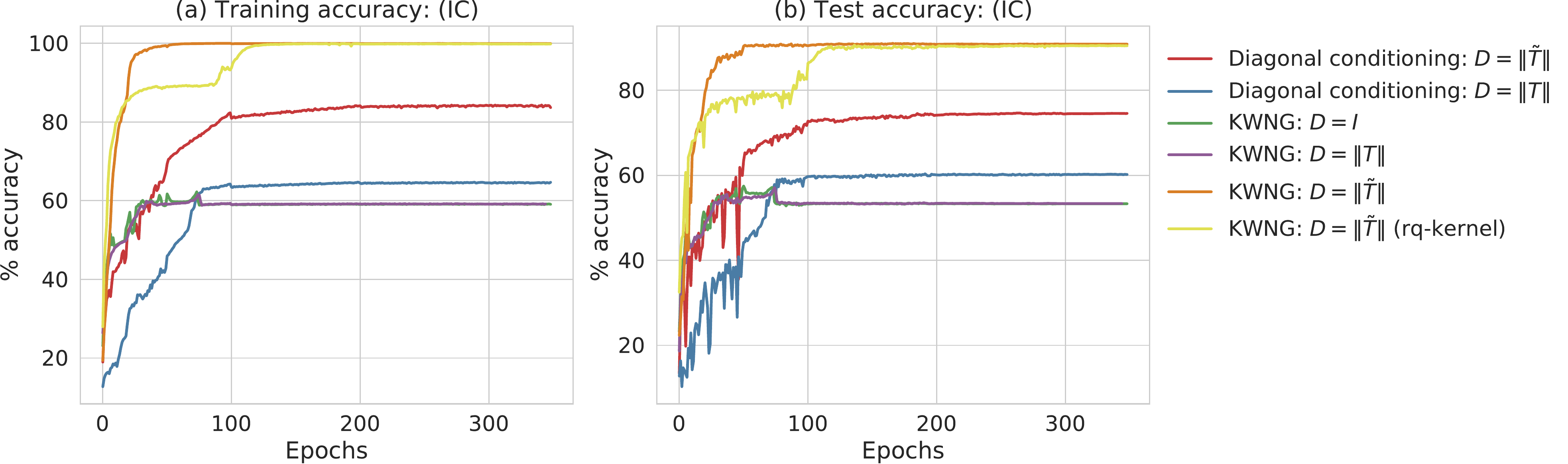}
\caption{KWNG vs Diagonal conditioning in the ill-conditioned case on Cifar10. In red and blue, the euclidean gradient is preconditioned using a diagonal matrix $D$ either given by $D_i = \Vert T_{.,i} \Vert$ or $D_i = \Vert \widetilde{T}_{.,i} \Vert$, where $T$ and $\widetilde{T}$ are defined in \cref{prop:param_lite_estimator,prop:param_stable_estimator}. The rest of the traces are obtained using the stable version of KWNG in \cref{prop:param_stable_estimator} with different choices for the damping term $D=I$ , $D= \Vert T_{.,i} \Vert $  and $\Vert \widetilde{T}_{.,i} \Vert$. All use a gaussian kernel except the yellow traces which uses a rational quadratic kernel.
}
\label{fig:cifar10_ablation}
\end{figure}

\end{document}